\documentclass{article}

 \usepackage[preprint]{neurips_2026}

\usepackage[utf8]{inputenc} 
\usepackage[T1]{fontenc}    
\usepackage[pagebackref=true]{hyperref}       
\renewcommand*{\backref}[1]{}
\renewcommand*{\backrefalt}[4]{%
    \ifcase #1 \relax
    \or
        (Cited on page #2.)%
    \else
        (Cited on pages #2.)%
    \fi
}
\usepackage{url}            
\usepackage{booktabs}       
\usepackage{amsfonts}       
\usepackage{nicefrac}       
\usepackage{microtype}      
\usepackage{xcolor}         

\title{
    \algntitle : Asynchronous Linear Minimization Oracle Momentum Method
}

\author{%
  Abdurakhmon~Sadiev \\
  KAUST \\
  \texttt{abdurakhmon.sadiev@kaust.edu.sa} \\
  \And
  Artavazd~Maranjyan \\
  KAUST \\
  \texttt{artavazd.maranjyan@kaust.edu.sa} \\
  \And
  Ivan Ilin \\
  KAUST \\
  \texttt{ivan.ilin@kaust.edu.sa} \\
  \And
  Peter~Richt\'{a}rik \\
  KAUST \\
  \texttt{peter.richtarik@kaust.edu.sa}
}

\usepackage{graphicx}
\usepackage{apptools}
\usepackage[flushleft]{threeparttable}
\usepackage{array,booktabs,makecell}
\usepackage{multirow}
\usepackage{nicefrac}
\usepackage{amsthm}
\usepackage{mathtools}
\usepackage{amsmath,amsfonts,bm,amssymb}
\usepackage{dsfont}
\usepackage{wrapfig}
\usepackage{caption}
\usepackage{siunitx}
\usepackage{color}
\usepackage{thm-restate}
\usepackage{nccmath}
\usepackage{empheq}
\usepackage{bbm}
\usepackage{tabularx}

\usepackage{algorithmic}
\usepackage{algorithm}
\definecolor{algcommentgray}{gray}{0.45}
\newcommand{\algcomment}[1]{\hfill {\color{algcommentgray}$\triangleright$ #1}}
\usepackage{filecontents}

\usepackage[capitalize,noabbrev]{cleveref}

\newcommand{\circledOne}{\text{\ding{172}}}
\newcommand{\circledTwo}{\text{\ding{173}}}
\newcommand{\circledThree}{\text{\ding{174}}}
\newcommand{\circledFour}{\text{\ding{175}}}
\newcommand{\circledFive}{\text{\ding{176}}}
\newcommand{\circledSix}{\text{\ding{177}}}
\newcommand{\circledSeven}{\text{\ding{178}}}
\newcommand{\circledEight}{\text{\ding{179}}}

\usepackage{colortbl}
\definecolor{bgcolor}{rgb}{0.76,0.88,0.50}
\definecolor{bgcolor0}{rgb}{0.93,0.99,1}
\definecolor{bgcolor1}{rgb}{0.8,1,1}
\definecolor{bgcolor2}{rgb}{0.8,1,0.8}
\definecolor{bgcolor3}{rgb}{0.50,0.90,0.50}
\usepackage{tcolorbox}
\usepackage{pifont}

\definecolor{mydarkblue}{rgb}{0,0.16,0.54}
\definecolor{mydarkgreen}{RGB}{39,130,67}
\definecolor{mydarkorange}{RGB}{236,147,14}
\definecolor{mydarkgreen}{RGB}{0,100,0}
\definecolor{ruby}{RGB}{155,17,30}
\definecolor{chili}{RGB}{191,0,0}
\definecolor{sangria}{RGB}{146,0,10}
\definecolor{burgundy}{RGB}{128,0,32}
\definecolor{darkred}{RGB}{132,0,0} 
\definecolor{cherry}{RGB}{192,0,0} 
\definecolor{myaccent}{rgb}{0,0.45,0.45}
\definecolor{anotherdarkred}{rgb}{0.45,0,0.05}

\newcommand{\red}{\color{cherry}}

\usepackage[textsize=tiny]{todonotes}

\usepackage{xspace}
\usepackage{relsize}

\newcommand{\norm}[1]{\left\| #1 \right\|}
\newcommand{\sqnorm}[1]{\left\| #1 \right\|^2}

\newcommand{\flr}[1]{\left\lfloor #1\right\rfloor} 
\newcommand{\inp}[2]{\left\langle#1,#2\right\rangle} 

\newcommand{\R}{\mathbb{R}} 
\newcommand{\N}{\mathbb{N}} 
\newcommand{\E}[1]{\mathbb{E}\left[#1\right]}

\newcommand{\Exp}[1]{{\mathbb{E}}\left[#1\right]}
\newcommand{\ExpSub}[2]{{\mathbb{E}}_{#1}\left[#2\right]}

\newcommand{\cD}{\mathcal{D}}

\newcommand{\cO}{\mathcal{O}}

\newcommand{\cT}{\mathcal{T}}

\newcommand{\eqdef}{\coloneqq} 

\newcommand{\minimize}{\mathop{\mathrm{minimize}}}

\newcommand{\lmo}{\operatorname{lmo}}

\makeatletter
\newcommand{\vast}{\bBigg@{4}}

\DeclareMathOperator*{\argmin}{arg\,min}

\def\<{\left\langle}
\def\>{\right\rangle}
\def\[{\left[}
\def\]{\right]}
\def\({\left(}
\def\){\right)}

\definecolor{myblue}{rgb}{0.3,0.25,0.2}
\definecolor{myorange}{rgb}{0.3,0.25,0.2}
\newcommand{\checkmarkgreen}{\textbf{\color{myblue}\ding{52}}}
\newcommand{\crossmarkred}{{\color{myorange}\ding{56}}}

\hypersetup{ %
  colorlinks=true,
  linkcolor=linkcolor,
  citecolor=linkcolor,
  filecolor=linkcolor,
  urlcolor=linkcolor,
}
\definecolor{linkcolor}{rgb}{0,0.08,0.45} 

\definecolor{alggray}{gray}{0.35} 
\newcommand{\algname}[1]{\text{\color{alggray}\sffamily\relscale{0.93}#1}\xspace}
\newcommand{\algntitle}{{\red Ringmaster~LMO}\xspace}
\newcommand{\algn}{\algname{\red Ringmaster~LMO}}

\newcommand{\ringmaster}{\algname{Ringmaster~ASGD}}
\newcommand{\ringmastertitle}{Ringmaster~ASGD}
\newcommand{\ringmastermuon}{\algname{Ringmaster~Muon}}
\newcommand{\ringmastermuonagnostic}{\algname{Parameter-agnostic Ringmaster~Muon}}

\newcommand{\rennalamuon}{\algname{Rennala~Muon}}
\newcommand{\sgd}{\algname{SGD}}

\newcommand{\hogwild}{\algname{HOGWILD!}}

\newcommand{\muon}{\algname{Muon}}
\newcommand{\muondelayadaptive}{\algname{Delay-adaptive Muon}}

\newcommand{\adam}{\algname{Adam}}

\newcommand{\adamw}{\algname{AdamW}}

\newcommand{\scion}{\algname{Scion}}

\usepackage{thmtools}
\usepackage{thm-restate}
\usepackage{mdframed}

\theoremstyle{plain}
\newtheorem{theorem}{Theorem}[section]

\newtheorem{lemma}[theorem]{Lemma}

\newtheorem{assumption}[theorem]{Assumption}
\theoremstyle{definition}

\theoremstyle{remark}

\theoremstyle{plain} 

\newmdenv[
  font=\normalfont\bfseries,
  linecolor=black,
  linewidth=0.8pt,
  topline=false,
  bottomline=false,
  leftline=false,
  rightline=false,
  backgroundcolor=gray!13,
  skipabove=2pt,
  skipbelow=2pt,
  innertopmargin=-5pt,
  innerbottommargin=4pt,
  innerrightmargin=4pt,
  innerleftmargin=4pt,
]{myboxed}

\newmdtheoremenv[
  font=\normalfont\bfseries,
  linecolor=black,
  linewidth=0.8pt,
  topline=false,
  bottomline=false,
  leftline=false,
  rightline=false,
  backgroundcolor=gray!13,
  skipabove=2pt,
  skipbelow=2pt,
  innertopmargin=-5pt,
  innerbottommargin=4pt,
  innerrightmargin=4pt,
  innerleftmargin=4pt,
]{boxedtheorem}[theorem]{Theorem}

\newmdtheoremenv[
  font=\normalfont\bfseries,
  linecolor=black,
  linewidth=0.8pt,
  topline=false,
  bottomline=false,
  leftline=false,
  rightline=false,
  backgroundcolor=gray!13,
  skipabove=2pt,
  skipbelow=2pt,
  innertopmargin=-5pt,
  innerbottommargin=4pt,
  innerrightmargin=4pt,
  innerleftmargin=4pt,
]{boxedlemma}[theorem]{Lemma}

\newmdtheoremenv[
  font=\normalfont\bfseries,
  linecolor=black,
  linewidth=0.8pt,
  topline=false,
  bottomline=false,
  leftline=false,
  rightline=false,
  backgroundcolor=gray!13,
  skipabove=2pt,
  skipbelow=2pt,
  innertopmargin=-5pt,
  innerbottommargin=4pt,
  innerrightmargin=4pt,
  innerleftmargin=4pt,
]{boxeddefinition}[theorem]{Definition}

\newmdtheoremenv[
  font=\normalfont\bfseries,
  linecolor=black,
  linewidth=0.8pt,
  topline=false,
  bottomline=false,
  leftline=false,
  rightline=false,
  backgroundcolor=gray!13,
  skipabove=2pt,
  skipbelow=2pt,
  innertopmargin=-5pt,
  innerbottommargin=4pt,
  innerrightmargin=4pt,
  innerleftmargin=4pt,
]{boxedassumption}[theorem]{Assumption}

\newtheorem{innercustomthm}{Theorem}
\newenvironment{restate-theorem}[1]
  {\renewcommand\theinnercustomthm{#1}\innercustomthm}
  {\endinnercustomthm}

\newtheorem{innercustomlemma}{Lemma}
\newenvironment{restate-lemma}[1]
  {\renewcommand\theinnercustomlemma{#1}\innercustomlemma}
  {\endinnercustomlemma}

\newtheorem{innercustomproposition}{Proposition}
\newenvironment{restate-proposition}[1]
  {\renewcommand\theinnercustomproposition{#1}\innercustomproposition}
  {\endinnercustomproposition}

\newenvironment{restate-boxedtheorem}[1]
  {\renewcommand\theinnercustomthm{#1}\begin{myboxed}\begin{innercustomthm}}
  {\end{innercustomthm}\end{myboxed}}

\newenvironment{restate-boxedlemma}[1]
  {\renewcommand\theinnercustomlemma{#1}\begin{myboxed}\begin{innercustomlemma}}
  {\end{innercustomlemma}\end{myboxed}}

\newcommand*{\sketchproofname}{Sketch of Proof}

\begin{document}
\maketitle

\begingroup
\renewcommand{\thefootnote}{}
\footnotetext{Code: \href{https://github.com/vectozavr/ringmaster-lmo}{\texttt{github.com/vectozavr/ringmaster-lmo}}.}
\endgroup

\begin{abstract}
%
%
%
%
    Muon has recently emerged as a strong alternative to \adamw for training neural networks, with encouraging large-scale pretraining results and growing evidence that matrix-structured updates can be faster in practice.
    Yet Muon, and more generally Linear Minimization Oracle (LMO) based methods, are typically used synchronously.
    This is problematic in heterogeneous distributed systems, where workers complete gradient computations at different speeds and synchronous training must repeatedly wait for slower workers.
    In this work, we introduce \algn, an asynchronous LMO-based momentum method for unconstrained stochastic nonconvex optimization.
    Our method builds on the delay-thresholding idea of \ringmaster.
    For \sgd-type methods, \ringmaster achieves optimal time complexity by discarding overly stale gradients.
    \algn extends this mechanism to general LMO-based updates. We establish convergence guarantees under generalized $(L_0,L_1)$-smoothness and further develop a parameter-agnostic variant with decreasing stepsizes and adaptive delay thresholds.
    Finally, we translate our iteration guarantees into time complexity bounds under heterogeneous worker computation times.
    In the classical Euclidean smooth setting, these bounds recover the optimal time complexity of \ringmaster.
    Experiments on stochastic quadratic problems and NanoChat language-model pretraining show that the advantages of \algn grow with system heterogeneity and that the method outperforms strong synchronous and asynchronous baselines.
\end{abstract}
\section{Introduction}\label{sec:introduction}
For more than a decade, \adam \citep{kingma2014adam} has been the default choice for training large neural networks, with its decoupled weight-decay variant \adamw \citep{loshchilov2017decoupled} more recently becoming the default variant.
For matrix-valued layers, however, these methods treat each weight matrix as a vector, rather than explicitly using its matrix structure.
Recently, \citet{jordan2024muon} proposed the \muon optimizer, showing that using the matrix structure of hidden-layer weights can be beneficial.
\muon forms the update direction from the matrix gradient via approximate orthogonalization.
This matrix-structured update has led to strong empirical results, including the \texttt{nanoGPT} speedrun where \muon improves on the \adamw baseline \citep{modded_nanogpt_2024}.
Recent pretraining benchmarks also report that \muon can be substantially more compute-efficient than \adamw \citep{shah2025practical,liu2025muon,wen2026fantastic}.
More generally, \muon fits into the framework of linear minimization oracle (LMO)-based optimizers introduced by \citet{pethick2025scion}.
This viewpoint makes the geometry of the update explicit: the choice of norm determines the update direction, and for matrix parameters the spectral-norm ball recovers the orthogonalized direction used by \muon.

Despite this progress, LMO-based optimizers are still typically used within synchronous training pipelines.
When several workers compute gradients in parallel, the usual synchronous implementation must wait for the slowest worker before forming the next update.
Asynchronous methods, going back at least to \hogwild \citep{recht2011hogwild}, avoid this synchronization barrier by updating the model whenever a worker returns a gradient.
The price is delay: a gradient arriving at iteration $k$ may have been computed at an older iterate with delay $\delta_k\ge 0$.
Recent analyses have clarified the effect of such delays for asynchronous \sgd, and delay-thresholding variants can achieve optimal time complexity under heterogeneous worker speeds \citep{tyurin2023optimal,maranjyan2025ringmaster,maranjyan2026ringleader}.
These results, however, are built for \sgd-type updates in Euclidean setting, and do not directly apply to LMO-based optimizers.

This leaves the central question of this paper:
\begin{center}
    \emph{Can asynchronous LMO-based methods recover the optimal time complexity known for Euclidean setting.}
\end{center}
We show that this can be done.
Building on the delay-thresholding mechanism of \ringmaster \citep{maranjyan2025ringmaster} and the LMO-based momentum framework of \citet{pethick2025scion}, we introduce an asynchronous LMO-based method for unconstrained stochastic nonconvex optimization.
To the best of our knowledge, this is the first asynchronous LMO-based method with time complexity guarantees under heterogeneous worker computation times.
In the classical Euclidean smooth setting, the bound recovers the optimal time complexity of \ringmaster.
\begin{table}
    \centering
    \caption{High-level comparison with the closest theoretical works.
    All listed methods can be viewed through the LMO-based update perspective; the norm column reports the geometry covered by the available analysis, and `Euclidean' indicates results specialized to normalized \sgd.
    The asynchronous column indicates whether the method is analyzed in the delayed-gradient asynchronous setting, the parameter-agnostic column indicates whether the method avoids tuning the main schedule from problem parameters, and the generalized smoothness column indicates whether the theory covers our generalized $(L_0,L_1)$-smoothness model (\Cref{ass:gen_smooth}).}
    \label{tab:method_comparison}
    \begin{tabular}{ccccc}
        \toprule
        Method & Norm & Asynchronous & \makecell{Parameter-\\agnostic} & \makecell{Generalized\\$(L_0,L_1)$-\\smoothness} \\
        \midrule
        \makecell{Existing LMO-based methods \\
        \citep{pethick2025scion,pethick2025generalized}}
        & Any norm & \crossmarkred & \crossmarkred & \checkmarkgreen \\
        \cmidrule(r){1-1}
        \makecell{\algname{NSGD-M} \\
        \citep{hubler2024parameter_agnostic}}
        & Euclidean & \crossmarkred & \checkmarkgreen & \checkmarkgreen \\
        \cmidrule(r){1-1}
        \makecell{Ringmaster Asynchronous \algname{NSGD-M} \\ 
        \citep{wu2026optimalasynchronous}}
        & Euclidean & \checkmarkgreen & \crossmarkred & \crossmarkred \\
        \cmidrule(r){1-1}
        \makecell{\algn \textbf{[new]}\\ 
        (\Cref{alg})}
        & Any norm & \checkmarkgreen & \checkmarkgreen & \checkmarkgreen \\
        \bottomrule
    \end{tabular}
\end{table}
\subsection{Contributions}
Our contributions are the following:
\begin{itemize}
    \item \textbf{First asynchronous LMO-based method.}
    We propose \algn (\Cref{alg}), an asynchronous LMO-based method that combines momentum with delay-thresholding.
    To the best of our knowledge, this is the first asynchronous LMO-based method with time complexity guarantees under heterogeneous worker computation times.
    The method extends the \ringmaster delay-control principle from \sgd-type updates to LMO-based updates; see \Cref{sec:method}.
    \item \textbf{Convergence under generalized smoothness.}
    The analysis in \Cref{sec:fixed_parameter,sec:param_agnostic} establishes convergence under generalized $(L_0,L_1)$-smoothness \citep{zhang2020gradient,zhang2020improved}, which includes the classical smooth case as $L_1=0$.
    The resulting rates recover the standard stochastic LMO-type dependence while allowing delayed gradients generated by asynchronous workers; see \Cref{thm:iteration_complexity,thm:iteration_complexity_param_agnostic}.
    \item \textbf{Parameter-agnostic scheduling.}
    \Cref{sec:param_agnostic} gives a variant with decreasing stepsizes and adaptive delay thresholds.
    This removes the need to tune the main schedule using problem parameters and, in particular, provides a parameter-agnostic extension that was not present in the original \ringmaster analysis; see \Cref{thm:iteration_complexity_param_agnostic}.
    \item \textbf{time complexity guarantees.}
    \Cref{thm:time_complexity,thm:time_complexity_param_agnostic,thm:time_arbitrary} translate the iteration complexity into time complexity under both the fixed computation model and the universal time-varying computation model.
    In the standard smooth Euclidean setting, these bounds recover the same optimal time complexity as \ringmaster.
    \item \textbf{Experiments.}
    \Cref{sec:experiments} compares the proposed method against synchronous and asynchronous baselines.
    On both stochastic quadratic problems and NanoChat pretraining, the proposed method is competitive in mild heterogeneity and its advantage grows as worker-speed heterogeneity increases.
\end{itemize}
\Cref{tab:method_comparison} places our method in the context of the closest theoretical works.
\subsection{Related work}\label{sec:related_work}
We review the prior work most closely related to our setting.
\paragraph{LMO-based and matrix-structured optimizers.}
The empirical success of \muon \citep{jordan2024muon,modded_nanogpt_2024} has motivated a line of work seeking to explain and generalize its matrix-structured update.
\citet{bernstein2024oldoptimizernewnorm} relate \muon to older adaptive and preconditioned methods, including \algname{Shampoo} \citep{gupta2018shampoo}, by emphasizing the role of the underlying norm.
\citet{pethick2025scion} give a broader optimization framework in which the update direction is chosen by a linear minimization oracle over a norm ball.
LMOs are classical in Frank-Wolfe and conditional-gradient methods \citep{frank1956algorithm,hazan2008sparse,clakson2010coresets,jaggi2013revisitingfrankwolfe}, but the \scion framework uses them for unconstrained stochastic optimization and recovers several normalized or structured-gradient methods through the choice of norm.
In parallel, \citet{kovalev2025understandinggradientorthogonalizationdeep} interprets gradient orthogonalization through non-Euclidean trust-region optimization.
More broadly, the concept of spectral descent and orthogonalized updates has historical roots in deep learning \citep{carlson2015stochasticspectral,carlson2015preconditioned,carlson2016stochastic,tuddenham2022orthogonalising}.
Subsequent work has refined this theory under generalized smoothness \citep{riabinin2025gluon, pethick2025generalized}, incorporated second-order momentum variants \citep{khirirat2025better}, analyzed inexact Newton-Schulz orthogonalization \citep{shulgin2026beyondtheideal}, and adapted the LMO viewpoint to layerwise neural-network training \citep{riabinin2025gluon}.
These works are synchronous: they do not analyze delayed gradients produced by asynchronous workers, nor do they provide time complexity guarantees under heterogeneous computation times.

\paragraph{Asynchronous stochastic optimization.}
Asynchronous stochastic optimization removes the synchronization barrier by updating the model whenever a worker returns a gradient \citep{tsitsiklis1986distributed,agarwal2011distributed,recht2011hogwild,maranjyan2025thesis}.
The main analytical difficulty is delay: the gradient used at iteration $k$ may have been computed at an earlier iterate $x_{k-\delta_k}$.
Classical analyses control this error through bounded-delay assumptions \citep{feyzmahdavian2016asynchronous,lian2015asynchronous,arjevani2020tight}, while more recent work uses delay-adaptive stepsizes \citep{mishchenko2022asynchronous,koloskova2022sharper,wu2022delayadaptive}.
For heterogeneous worker speeds, the relevant objective is time complexity rather than only counting iterations.
\citet{tyurin2023optimal} identified optimal time complexities under a fixed computation model.
This lower-bound perspective was later broadened to cover time-varying worker behavior \citep{tyurin2024tighttimecomplexitiesparallel} as well as variance-reduced parallel stochastic optimization \citep{tovmasyan2026rennala_mvr}.
\ringmaster \citep{maranjyan2025ringmaster} achieves these guarantees for asynchronous \sgd by accepting only gradients whose delay is below a threshold.
\citet{maranjyan2026ringleader} and \citet{mahran2026rescaled} address the corresponding gap for data- and system-heterogeneous settings.
These results are Euclidean \sgd-type methods; they do not cover LMO-based directions or the non-Euclidean geometries needed to recover \muon-like updates.

\paragraph{Closest baselines and parameter agnosticism.}
The closest asynchronous baseline to our method is \citet{wu2026optimalasynchronous}, which combines the delay-thresholding mechanism of \citet{maranjyan2025ringmaster} with normalized \sgd and momentum under heavy-tailed noise.
Their method corresponds to the Euclidean-norm case of our update and therefore does not cover general LMOs, matrix-structured directions, or the $(L_0,L_1)$-smoothness model used here.
Generalized $(L_0,L_1)$-smoothness was introduced as a way to capture losses whose local smoothness can grow with the gradient norm \citep{zhang2020gradient,zhang2020improved}, and it is also the setting of recent synchronous LMO analyses \citep{riabinin2025gluon, pethick2025generalized, khirirat2025better}.
Separately, \citet{hubler2024parameter_agnostic} develop parameter-agnostic normalized \sgd with momentum under relaxed smoothness in the synchronous Euclidean setting.
Our parameter-agnostic variant brings this idea to asynchronous LMO-based optimization: the same schedule controls both the momentum averaging and the delay threshold, removing the need to tune the threshold from problem parameters.
Overall, our work connects these three lines by extending delay-thresholding from Euclidean \sgd to general LMO-based momentum methods and by translating the resulting iteration bounds into time complexity guarantees.
\section{Preliminaries and setup}\label{sec:preliminaries}
This section fixes the optimization model, the LMO geometry, the distributed computation model, and the assumptions used in the analysis.
\subsection{Optimization model and LMO geometry}\label{sec:lmo_geometry}
We study the nonconvex stochastic optimization problem
\begin{equation*}
    \minimize_{x \in \R^d}
    \left\{ f(x) \coloneqq \ExpSub{\xi \sim \cD}{f(x;\xi)} \right\} ,
\end{equation*}
where $x \in \R^d$ is the model parameter, $\cD$ is the data distribution, $\xi \sim \cD$ is a data sample, and $f(x;\xi)$ is the corresponding loss.
Let $\norm{\cdot}$ be a norm on $\R^d$.
Following \citet{pethick2025scion}, the update direction is defined through a linear minimization oracle (LMO) over the unit ball of this norm:
\begin{equation*}
    \lmo(y) \in \argmin_{u~:~\norm{u} \le 1} \inp{y}{u} \quad \text{for } y\ne 0,
    \qquad
    \lmo(0) \eqdef 0.
\end{equation*}
When the minimizer is not unique, any minimizer may be chosen.
This definition implies that the magnitude of the update is always bounded because $\norm{\lmo(y)}\le 1$.
\subsection{Function and oracle assumptions}\label{sec:assumptions}
We use the following assumptions throughout the convergence analysis.
\begin{assumption}[Lower boundedness]\label{ass:lower_bound}
  There exists $f^* > -\infty$ such that $f(x) \geq f^*$ for all $x \in \R^d$.
  We define $\Delta_0 \eqdef f(x_0) - f^*,$ where $x_0$ is the starting point of the optimization methods.
\end{assumption}
\begin{assumption}[Unbiased stochastic gradients with bounded variance]\label{ass:unbiased_bounded_variance}
  For every $\xi$, the function $f(x; \xi)$ is differentiable with respect to its first argument $x$.
  Moreover, the stochastic gradients are unbiased and have bounded variance $\sigma^2 \geq 0$, that is,
  \begin{align*}
     &\ExpSub{\xi \sim \cD}{\nabla f(x;\xi)} = \nabla f(x),
        \quad \forall x \in \R^d , \\ 
     &\ExpSub{\xi \sim \cD}{\sqnorm{\nabla f(x;\xi) - \nabla f(x)}_2} \leq \sigma^2,
        \quad \forall x \in \R^d .
\end{align*}
\end{assumption}
\begin{assumption}[$(L_0, L_1)$--smoothness]\label{ass:gen_smooth}
  There exist constants $L_0,L_1 \ge 0$ such that, for all $x, y \in \R^d$,
  \begin{equation*}
    \norm{\nabla f(x) - \nabla f(y)}_*
      \le
      \sup_{\theta \in [0,1]}
      \left(
        L_0 + L_1 \norm{\nabla f(\theta x + (1 - \theta)y)}_*
      \right)
      \norm{x - y}.
  \end{equation*}
\end{assumption}
Here $\norm{\cdot}_*$ denotes the dual norm associated with $\norm{\cdot}$, i.e.
$$
    \norm{y}_* \eqdef \sup_{u~:~\norm{u} \le 1} \inp{y}{u}.
$$
\Cref{ass:gen_smooth} reduces to the standard $L_0$-smoothness condition with respect to $\norm{\cdot}$ when $L_1=0$.
When $L_1>0$, the effective local smoothness constant is allowed to grow with the gradient norm, which motivates generalized smoothness models for objectives arising in modern machine learning \citep{chen2023generalized,pethick2025generalized}.

\Cref{ass:unbiased_bounded_variance} controls stochastic-gradient noise in the Euclidean norm, whereas \Cref{ass:gen_smooth} is stated in the norm induced by $\norm{\cdot}$ and its dual.
We denote by
\begin{equation*}
    \rho \eqdef \sup_{z \ne 0} \frac{\norm{z}_*}{\norm{z}_2}
\end{equation*}
the corresponding norm-equivalence constant, so that $\norm{z}_* \le \rho\norm{z}_2$ for all $z \in \R^d$.
\subsection{Distributed computation model}\label{sec:distributed_computation_model}
We consider a distributed setting with $n$ workers.
We write $[n]\coloneqq \{1,\dots,n\}$.
Each worker $i$ has access to the same data distribution~$\cD$.
To model heterogeneous worker speeds, we use the fixed computation model of \citet{mishchenko2022asynchronous}.
We extend the analysis to arbitrarily changing computation times in \Cref{sec:arbitrary_time}, while keeping the fixed model in the main text for a clearer presentation.
\begin{assumption}[Fixed computation model]
  \label{assump:fixed_time}
  For each worker $i \in [n]$, computing a stochastic gradient $\nabla f(x;\xi)$ takes $\tau_i > 0$ seconds.
  Without loss of generality, we assume $\tau_1 \le \tau_2 \le \cdots \le \tau_n$.
  For $m\in[n]$, define
  \begin{equation*}
    H_m \coloneqq \left(\frac{1}{m}\sum_{i=1}^{m}\frac{1}{\tau_i}\right)^{-1},
  \end{equation*}
  the harmonic mean of the first $m$ computation times $\tau_1,\dots,\tau_m$.
\end{assumption}
Since $\tau_1\le\cdots\le\tau_n$, the sequence $H_m$ is non-decreasing in $m$.

Throughout this paper, we assume zero-latency communication between the workers and the server, adopting the standard modeling assumption used in prior asynchronous optimization literature \citep{mishchenko2022asynchronous, koloskova2022sharper, tyurin2023optimal, maranjyan2025ringmaster}.
While communication costs are undeniably important in practice, modeling them properly requires the adoption of specific mechanisms like compression or local steps.
Incorporating those techniques introduces a separate set of theoretical challenges that are largely orthogonal to our primary objective: understanding asynchronous dynamics in the presence of heterogeneous computation speeds.
\section{\algntitle method}\label{sec:method}
We now describe the asynchronous LMO-based method.
A central server maintains the iterate and momentum, while $n$ workers repeatedly compute stochastic gradients.
Whenever worker $i$ finishes, it returns a gradient computed at the iterate it last received from the server.
If the server accepts that gradient, it first updates the momentum and then applies the LMO direction:
\begin{equation}\label{eq:algo_update}
    \begin{aligned}
        m_{k+1} &=
        \begin{cases}
            (1-\alpha_0)m_0 + \alpha_{\mathrm{init}} g_0, & k = 0, \\
            (1-\alpha_k)m_k + \alpha_k g_k, & k \ge 1,
        \end{cases} \\ 
        x_{k+1} &= x_k + \eta_k \lmo(m_{k+1}).
    \end{aligned}
\end{equation}
Here $g_k = \nabla f\big(x_{k-\delta_k}; \xi_{k-\delta_k}^{i_k}\big)$ is the gradient that arrives from worker $i_k$ at server iteration $k$, $\delta_k$ is its delay, $\alpha_k$ is the momentum parameter, and $\eta_k$ is the stepsize.
Because $g_k$ may have been computed at the stale iterate $x_{k-\delta_k}$ rather than at $x_k$, large delays can degrade the update.
To control this effect, we use the delay-thresholding principle of \ringmaster \citep{maranjyan2025ringmaster}: the server accepts an arriving gradient only if $\delta_k < R_k$.
The threshold sequence $R_k$ controls the freshness-throughput tradeoff: smaller thresholds are more conservative, while larger thresholds allow more asynchronous parallelism.
Allowing $R_k$ to vary with $k$ will be important for the parameter-agnostic variant in \Cref{sec:param_agnostic}.
\Cref{alg} gives the full description of the method.
\begin{algorithm}[htb]
  \caption{\algn}
  \label{alg}
  \begin{algorithmic}[1]
      \STATE \textbf{Input:} initial iterate $x_0 \in \R^d$; step-size sequence $\eta_k > 0$; delay-threshold sequence $R_k > 0$; momentum parameter sequence $\alpha_k \in (0,1]$ with $\alpha_0 = 1$ and $\alpha_{\mathrm{init}} \in (0,1]$; number of model updates $K \in \N$
      \STATE Each worker $i \in [n]$ starts computing $\nabla f(x_0; \xi_0^i)$ in parallel
      \STATE $k \gets 0$
      \WHILE{$k < K$}
          \STATE Receive $g_k = \nabla f\bigl(x_{k-\delta_k}; \xi_{k-\delta_k}^{i_k}\bigr)$ from worker $i_k$
            \algcomment{$\delta_k$ is the delay of the arriving gradient}
          \IF{$\delta_k < R_k$}
              \STATE $m_{k+1} \gets \begin{cases} (1-\alpha_0)m_0 + \alpha_{\mathrm{init}} g_0, & k = 0, \\ (1-\alpha_k)m_k + \alpha_k g_k, & k \ge 1. \end{cases}$ \algcomment{$m_0$ need not be set since $\alpha_0=1$}
              \STATE $x_{k+1} \gets x_k + \eta_k \lmo(m_{k+1})$
              \STATE The server sends $x_{k+1}$ to worker $i_k$
              \STATE Worker $i_k$ starts computing $\nabla f\bigl(x_{k+1}; \xi_{k+1}^{i_k}\bigr)$
              \STATE $k \gets k + 1$
          \ELSE
              \STATE Discard the stale gradient $g_k$
              \STATE The server sends $x_k$ to worker $i_k$
              \STATE Worker $i_k$ starts computing $\nabla f\bigl(x_k; \xi_k^{i_k}\bigr)$
          \ENDIF
      \ENDWHILE
  \end{algorithmic}
\end{algorithm}

Algorithm~\ref{alg} discards stale gradients only after they arrive, which may appear to waste computation.
In practice, the server can often determine in advance that an in-flight computation will miss the current threshold and interrupt that worker early instead of waiting for the stale gradient to finish; this computation-stopping implementation was also proposed in \ringmaster \citep{maranjyan2025ringmaster}.
One can go further when worker runtimes are random rather than fixed \citep{maranjyan2025mindflayer}: the runtime distributions can be learned online and used to make better assignment or interruption decisions on the fly \citep{maranjyan2025ata}.
These systems-level refinements are complementary to our main objective, so we keep the simpler fixed-time model with delay-based acceptance and rejection in the main text.

\paragraph{Special cases.}
If every accepted gradient is fresh, then \eqref{eq:algo_update} reduces to the synchronous LMO-based momentum method \citep{pethick2025scion, kovalev2025understandinggradientorthogonalizationdeep}.
In \Cref{alg}, this is enforced by setting $R_k = 1$ for all $k$, which means that only gradients with $\delta_k = 0$ are accepted.

If the LMO is defined with respect to the Euclidean norm, then the update reduces to normalized \sgd with momentum \citep{hazan2015beyondconvexity,cutkosky2020momentum}:
\begin{equation*}
    x_{k+1} = x_k - \eta_k \frac{m_{k+1}}{\norm{m_{k+1}}_2}
\end{equation*}
instead of \eqref{eq:algo_update}.
In this case, \Cref{alg} reduces to the method studied by \citet{wu2026optimalasynchronous}.
Their analysis, however, is restricted to the Euclidean setting and classical smoothness ($L_1 = 0$), and it does not cover the parameter-agnostic threshold schedule studied here.

\paragraph{Other induced methods.}
Because the method is defined through an LMO, changing the underlying norm immediately yields asynchronous variants of other normalized optimizers, as discussed by \citet{pethick2025scion}.
For example, choosing the spectral norm recovers \muon \citep{jordan2024muon}, while choosing the max norm $\norm{\cdot}_\infty$ recovers \algname{SignSGD} or \algname{Signum} \citep{bernstein2024oldoptimizernewnorm}.
See \citet{pethick2025scion} for a more comprehensive discussion of which norm choices recover which methods.

%
\section{Theoretical analysis}\label{sec:theory}
We begin with the fixed-parameter regime before turning to the parameter-agnostic case.
\subsection{Fixed-parameter version}\label{sec:fixed_parameter}
We first state the corresponding iteration-complexity guarantee of \algn (Algorithm~\ref{alg}).
%
%
\begin{theorem}[Fixed-parameter iteration complexity; proof in \Cref{proof:iteration_complexity}]\label{thm:iteration_complexity}
    Suppose Assumptions~\ref{ass:lower_bound}--\ref{ass:gen_smooth} hold.
    Run \Cref{alg} with $\alpha_0=1$.
    Set the momentum parameter to the constant $\alpha$ for all $k\ge 1$,
    and use the constant delay threshold $R$ for all $k\ge 0$, where
    \begin{equation*}
        \alpha_k = \alpha_{\mathrm{init}} = \alpha = \min\left\{1, \frac{\Delta_0^{\nicefrac{1}{2}}L_0^{\nicefrac{1}{2}}}{\rho\sigma K^{\nicefrac{1}{2}}}\right\}, \quad \forall k \geq 1,
        \quad
        R_k = R = \frac{1}{\alpha}, \quad \forall k \ge 0,
    \end{equation*}
    with the convention that, when $\sigma=0$, we take $\alpha=1$,
    and choose the constant stepsize as follows:
    \begin{itemize}
        \item If $L_1=0$, take
        \begin{equation*}
            \eta_k = \eta = \min\left\{\sqrt{\frac{\Delta_0}{L_0 K}}, \frac{\Delta_0^{\nicefrac{3}{4}}}{ L_0^{\nicefrac{1}{4}}(\rho\sigma)^{\nicefrac{1}{2}} K^{\nicefrac{3}{4}}}\right\}, \quad \forall k \ge 0.
        \end{equation*}
        \item If $L_1>0$, take
        \begin{equation*}
            \eta_k = \eta = \min\left\{\sqrt{\frac{\Delta_0}{L_0 K}}, \frac{1}{8L_1}, \frac{\Delta_0^{\nicefrac{1}{2}}L_0^{\nicefrac{1}{2}}}{8L_1\rho\sigma K^{\nicefrac{1}{2}}},  \frac{\Delta_0^{\nicefrac{3}{4}}}{ L_0^{\nicefrac{1}{4}}(\rho\sigma)^{\nicefrac{1}{2}} K^{\nicefrac{3}{4}}}\right\}, \quad \forall k \ge 0.
        \end{equation*}
    \end{itemize}
    Then, after $K \ge 1$ iterations, the iterates of \Cref{alg} satisfy
    \begin{equation*}
        \min_{k \in \{0,\dots, K-1\}} \Exp{\|\nabla f(x_k)\|_*} \leq \cO\left(\sqrt{\frac{L_0\Delta_0}{K}} + \frac{L_1\Delta_0}{K} + \frac{L_1 \Delta_0^{\nicefrac{1}{2}} \rho\sigma}{L_0^{\nicefrac{1}{2}}K^{\nicefrac{1}{2}}} + \frac{L_0^{\nicefrac{1}{4}}\Delta_0^{\nicefrac{1}{4}}(\rho\sigma)^{\nicefrac{1}{2}}}{K^{\nicefrac{1}{4}}} + \frac{\rho\sigma}{K^{\nicefrac{1}{2}}}
    \right).
    \end{equation*}
    In the case $L_1=0$, the terms containing $L_1$ in the preceding bound are zero.
    Consequently, the number of iterations needed to reach an $\varepsilon$-stationary point satisfies
    \begin{equation*}
        K_{\varepsilon}
        = \cO\left(
            \frac{L_0\Delta_0 }{\varepsilon^2}
            + \frac{L_1\Delta_0 }{\varepsilon}
            + \frac{L_0\Delta_0(\rho\sigma)^2}{\varepsilon^4}
            + \frac{L_1^2\Delta_0(\rho\sigma)^2}{L_0\varepsilon^2}
            + \frac{(\rho\sigma)^2}{\varepsilon^2}
            \right).
    \end{equation*}
    Again, when $L_1=0$, the terms containing $L_1$ are omitted.
\end{theorem}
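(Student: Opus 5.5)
The plan follows the standard LMO-momentum analysis of \citet{pethick2025scion} and \citet{hubler2024parameter_agnostic}, with one extra error term for the delay, which the thresholding $\delta_k<R$ keeps under control. Throughout, write $\varepsilon_k \eqdef m_{k+1}-\nabla f(x_k)$ for the momentum error.

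\emph{Descent step.} Since $\norm{x_{k+1}-x_k}\le\eta$ and $\eta\le 1/(8L_1)$, \Cref{ass:gen_smooth} gives a local descent lemma. Combining it with the LMO identity $\inp{\nabla f(x_k)+\varepsilon_k}{\lmo(m_{k+1})}=-\norm{m_{k+1}}_*$ yields
\begin{equation*}
    f(x_{k+1})\le f(x_k)-\eta\norm{\nabla f(x_k)}_*+2\eta\norm{\varepsilon_k}_*+\tfrac{\eta^2}{2}\bigl(L_0+L_1\norm{\nabla f(x_k)}_*\bigr)\cdot c .
\end{equation*}
Because $\eta L_1\le 1/8$, the $L_1$ part of the last term is absorbed into $-\tfrac{\eta}{2}\norm{\nabla f(x_k)}_*$. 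Summing over $k$ and telescoping then gives
\begin{equation*}
    \tfrac{1}{2}\sum_{k<K}\Exp{\norm{\nabla f(x_k)}_*}\le \frac{\Delta_0}{\eta}+\cO(L_0\eta K)+2\sum_{k<K}\Exp{\norm{\varepsilon_k}_*}.
\end{equation*}

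\emph{Error recursion.} Unrolling the momentum gives $\varepsilon_k=(1-\alpha)^k(\text{init})+\sum_{j\le k}\alpha(1-\alpha)^{k-j}\bigl[(g_j-\nabla f(x_{j-\delta_j}))+(\nabla f(x_{j-\delta_j})-\nabla f(x_j))+(\nabla f(x_j)-\nabla f(x_k))\bigr]$, where we set $\alpha_{\mathrm{init}}=\alpha$. The three brackets are handled separately.
\begin{itemize}
    \item \emph{Noise.} The noise terms form a martingale difference sequence: the sample $\xi^{i_j}_{j-\delta_j}$ is drawn independently at dispatch time, so the terms are conditionally mean zero. Converting to $\norm{\cdot}_*$ costs a factor $\rho$, and Jensen gives a contribution of $\rho\sigma\sqrt{\alpha}$ to $\Exp{\norm{\varepsilon_k}_*}$, plus $(1-\alpha)^k\rho\sigma$ from the initialization.
    \item \emph{Delay.} Here $\norm{x_{j-\delta_j}-x_j}\le\eta\delta_j<\eta R=\eta/\alpha$, so this term is at most $(L_0+L_1\sup\norm{\nabla f}_*)\eta/\alpha$ along the segment.
    \item \emph{Drift.} Similarly, $\norm{x_j-x_k}\le\eta(k-j)$. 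The geometric sum gives another bound of order $(L_0+L_1\norm{\nabla f}_*)\eta/\alpha$.
\end{itemize}
The gradient norms along a segment of length at most $\eta/\alpha$ are compared to $\norm{\nabla f(x_i)}_*$ by a Grönwall-type bound, valid when $L_1\eta/\alpha\lesssim 1$, giving a factor $\le e$. The weighted sum $\sum_k\sum_j\alpha(1-\alpha)^{k-j}L_1(\eta/\alpha)\norm{\nabla f(x_{j'})}_*$ is at most $\tfrac{1}{8}\sum\norm{\nabla f(x_k)}_*$ provided $L_1\eta/\alpha\le 1/8$, and is then absorbed into the left-hand side. This step is the main obstacle. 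The double sum reindexes over windows of width $\approx R$, since each $x_j$ enters through delays up to $R$ and drift weights. In the regime $\alpha=\Delta_0^{1/2}L_0^{1/2}/(\rho\sigma K^{1/2})$, the constraint $L_1\eta/\alpha\le 1/8$ is exactly the third stepsize cap in the theorem, and for $\alpha=1$ it is the cap $1/(8L_1)$. So this is where the stepsize conditions are designed to bite, and the bookkeeping of shifted indices must be done carefully.

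\emph{Tuning.} After absorption, the bound reads
\begin{equation*}
    \frac{1}{K}\sum_k\Exp{\norm{\nabla f(x_k)}_*}\lesssim\frac{\Delta_0}{\eta K}+\frac{L_0\eta}{\alpha}+\rho\sigma\sqrt{\alpha}+\frac{\rho\sigma}{\alpha K}.
\end{equation*}
Substituting the stated $\alpha$ and $\eta$ is a case check on which entry of each minimum is active. $\Delta_0/(\eta K)$ produces the four terms $\sqrt{L_0\Delta_0/K}$, $L_1\Delta_0/K$, $L_1\Delta_0^{1/2}\rho\sigma/(L_0^{1/2}K^{1/2})$ and $(L_0\Delta_0)^{1/4}(\rho\sigma)^{1/2}/K^{1/4}$. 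With $\alpha$ as chosen, $L_0\eta/\alpha$ and $\rho\sigma\sqrt{\alpha}$ are of order $(L_0\Delta_0)^{1/4}(\rho\sigma)^{1/2}K^{-1/4}$ or $\sqrt{L_0\Delta_0/K}$. The term $\rho\sigma/(\alpha K)$ is at most $\rho\sigma/K^{1/2}$ plus lower-order terms. Bounding the minimum by the average and inverting each term for $\varepsilon$ gives $K_\varepsilon$. When $L_1=0$ the absorption step is unnecessary, and the $L_1$ terms vanish.
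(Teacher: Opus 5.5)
Your descent step, the $\rho\sigma\sqrt{\alpha}$ noise bound, the delay bound via $\delta_j<1/\alpha$, and the absorption of the $L_1$-terms under $L_1\eta\le\alpha/8$ all match the paper's argument. One difference is harmless: the paper telescopes one-step differences $\nabla f(x_{t-1})-\nabla f(x_t)$, whereas your drift $\nabla f(x_j)-\nabla f(x_k)$ spans segments of length up to $\eta k$, not $\eta/\alpha$. So $e^{L_1\eta(k-j)}$ is controlled only through $(1-\alpha)e^{L_1\eta}\le e^{-7\alpha/8}$, but that works.

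The genuine gap is the initialization. With $\alpha_{\mathrm{init}}=\alpha$ you have $m_1=\alpha g_0$, hence $m_{k+1}=\sum_{j=0}^{k}\alpha(1-\alpha)^{k-j}g_j$ and
\begin{equation*}
    m_{k+1}-\nabla f(x_k)=\sum_{j=0}^{k}\alpha(1-\alpha)^{k-j}\bigl(g_j-\nabla f(x_k)\bigr)-(1-\alpha)^{k+1}\nabla f(x_k).
\end{equation*}
The initial noise has weight $\alpha(1-\alpha)^{k}$ and is already inside $\rho\sigma\sqrt{\alpha}$, so there is no separate $(1-\alpha)^k\rho\sigma$ term. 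What your final inequality omits is the bias $(1-\alpha)^{k+1}\nabla f(x_k)$. Its coefficient is of order one for $k\lesssim 1/\alpha$, so it cannot be absorbed. You must recentre it at $x_0$; the difference $\nabla f(x_k)-\nabla f(x_0)$ is an absorbable drift of order $\eta/\alpha$. That leaves $\sum_k(1-\alpha)^{k+1}\norm{\nabla f(x_0)}_*\le\norm{\nabla f(x_0)}_*/\alpha$, i.e.\ a term $\norm{\nabla f(x_0)}_*/(\alpha K)$ in the averaged bound.

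You then need to control $\norm{\nabla f(x_0)}_*$ by $\Delta_0$, which the proposal never does. The paper proves $\norm{\nabla f(x)}_*^2\le 4(L_0+L_1\norm{\nabla f(x)}_*)(f(x)-f^*)$ (inequality \eqref{eq:lem_smooth_3}, via a step along $\lmo(\nabla f(x))$). This gives $\norm{\nabla f(x_0)}_*\lesssim\sqrt{L_0\Delta_0}+L_1\Delta_0$. The resulting term $(\sqrt{L_0\Delta_0}+L_1\Delta_0)/(\alpha K)$ is exactly the source of the theorem's $\rho\sigma/K^{1/2}$ term and of part of its $L_1$-terms.

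Conversely, the term $\rho\sigma/(\alpha K)$ that you kept is not harmless; it is what $m_1=g_0$ would produce. When $\alpha<1$ it equals $(\rho\sigma)^2/\sqrt{L_0\Delta_0K}$, which exceeds $\rho\sigma/K^{1/2}$ by the factor $\rho\sigma/\sqrt{L_0\Delta_0}$. For $L_1=0$ and fixed $K$, its ratio to the theorem's right-hand side grows linearly in $\rho\sigma/\sqrt{L_0\Delta_0}$. In terms of $K_\varepsilon$ it would add $(\rho\sigma)^4/(L_0\Delta_0\varepsilon^2)$, which dominates every stated term once $\rho\sigma\ge\max\{\sqrt{L_0\Delta_0},\,L_0\Delta_0/\varepsilon\}$.

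So your claim that this term is ``at most $\rho\sigma/K^{1/2}$ plus lower-order terms'' is false. Your tuning step does not deliver the stated rate until the initialization is redone as above. This is precisely why the theorem damps the first gradient with $\alpha_{\mathrm{init}}=\alpha$.
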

\paragraph{Discussion}
In the standard smooth Euclidean case ($\rho = 1$ and $L_1 = 0$), our rate matches the lower bounds proved by \citet{arjevani2023lower} for first-order stochastic smooth non-convex optimization.

When $L_1 > 0$, the deterministic part of our complexity matches the best known rates under generalized smoothness and agrees with lower bounds for gradient descent \citep{zhang2020gradient,crawshaw2022robustness}\footnote{These comparisons are limited to algorithm-specific lower bounds, since lower bounds for the full class of generalized $(L_0,L_1)$-smooth functions remain open.}.
\algname{GGNC} \citep{pethick2025generalized} has the same complexity in the non-stochastic case. In the stochastic case, however, our bounds are tighter because of the optimal parameter selection.
\algname{Gluon} \citep{riabinin2025gluon} has the same deterministic rate when each layer shares the same generalized smoothness constants.
However, their analysis assumes asymmetric generalized smoothness, which is less general than symmetric generalized smoothness \citep{chen2023generalized}.
Moreover, our stochastic complexity bounds are tighter than those of \algname{GGNC} and \algname{Gluon}.

In the $\ell_1$-norm case, our bounds match those of \algname{SignSGD} \citep{crawshaw2022robustness}, although that work assumes coordinate-wise generalized smoothness.
\citet{khirirat2025better} proved better rates $\cO(\varepsilon^{-3})$ for \algname{LMO-MVR} and \algname{LMO-SOM}, but their algorithms assume access either to multiple stochastic gradient queries with shared randomness or to Hessian-vector products.
These assumptions are restrictive and, especially in the former case, difficult to adapt to the asynchronous setting.
Finally, in the bounded-variance case, our method recovers the rate of \algname{RANSGD-M} from \citet{wu2026optimalasynchronous}, since in the Euclidean setting our method reduces to \algname{RANSGD-M}.
%
%
\paragraph{Time complexity}
The time complexity under \Cref{assump:fixed_time} follows directly from \citet{maranjyan2025ringmaster}.
Their \ringmaster method and our \algn differ only in how they form updates; this changes the iteration complexity, but the same time complexity analysis applies.
More concretely, we need the following lemma.
\begin{lemma}[Duration of $R$ updates {\citep[Lemma 4.1]{maranjyan2025ringmaster}}]\label{lem:time_R}
    Let the delay threshold of \Cref{alg} be fixed as $R_k \equiv R$.
    Under \Cref{assump:fixed_time}, the time needed to complete any $R$ consecutive updates is at most
    \begin{equation*}
        t(R) \coloneqq 2 \min_{m \in [n]} H_m \left( 1 + \frac{R}{m} \right).
    \end{equation*}
\end{lemma}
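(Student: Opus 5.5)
Fix an arbitrary time $T_0$ at which some update has just been performed; say the server's iteration counter equals $k$ at $T_0$. We must show that by time $T_0+t(R)$ the counter has reached at least $k+R$. Since the bound is a minimum over $m$, it suffices to fix an arbitrary $m\in[n]$ and show that $R$ updates complete within $2H_m(1+R/m)$ seconds. The approach is to look only at the $m$ fastest workers and count how many gradients they return in a window of length $t$. Workers $m+1,\dots,n$ are ignored, since their arrivals can only increase the counter.

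\textbf{Step 1 (counting arrivals).} Worker $i\le m$ may be partway through a computation at $T_0$. Within any window of length $t$ it therefore finishes at least $\lfloor t/\tau_i\rfloor - 1 \ge t/\tau_i - 2$ full gradient computations that were \emph{started} inside the window. Summing over $i\le m$ gives at least
\[
\sum_{i=1}^m \left(\frac{t}{\tau_i}-2\right) = \frac{mt}{H_m} - 2m
\]
arrivals from the fast workers, each computed on an iterate $x_j$ with $j\ge k$.

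\textbf{Step 2 (arrivals are accepted or the counter already advanced).} This is the main obstacle. Suppose that during the window fewer than $R$ updates occur, so the counter stays in $[k,k+R)$. Then every gradient started inside the window at index $j\ge k$ has delay less than $R$ and is accepted. Hence each such arrival increments the counter. Arrivals started before $T_0$ may be rejected, but we already discounted them through the ``$-2$'' term (one partial computation plus rounding per worker). I would make this careful by attributing to each worker at most one stale computation, namely the one in flight at $T_0$, and noting that after any rejection or acceptance the worker restarts on the current iterate $x_{j}$ with $j\ge k$.

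\textbf{Step 3 (choose $t$).} Take $t=2H_m(1+R/m)$. Then
\[
\frac{mt}{H_m}-2m = 2m+2R-2m = 2R \ge R.
\]
So either the counter advanced by $R$ through accepted fast-worker arrivals, or it advanced by $R$ earlier. In both cases $R$ consecutive updates finish within $t$. Minimizing over $m$ yields $t(R)$. The slack of a factor $2$ is more than enough, so the delicate part is only the bookkeeping in Step 2: justifying that arrivals started after $T_0$ cannot be discarded while fewer than $R$ updates have elapsed. This holds because a gradient started at index $j\ge k$ that arrives while the counter is below $k+R$ has delay at most $(k+R-1)-j<R$.
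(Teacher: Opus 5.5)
Your proof is correct and is essentially the standard argument behind Lemma~4.1 of \citet{maranjyan2025ringmaster}, which this paper cites rather than reproving. That argument restricts attention to the $m$ fastest workers. While fewer than $R$ updates have occurred, only the computation in flight at the start of the window can be discarded, so worker $i$ contributes at least $\lfloor t/\tau_i\rfloor-1\ge t/\tau_i-2$ accepted gradients, and choosing $t=2H_m(1+R/m)$ then gives the bound.
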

With this lemma we can get the time complexity of our method with the following theorem.

\begin{theorem}[Fixed-parameter total time complexity; proof in \Cref{proof:time_complexity}]\label{thm:time_complexity}
    Under the assumptions and parameter choices of \Cref{thm:iteration_complexity}, under \Cref{assump:fixed_time}, the time complexity of reaching an $\varepsilon$-stationary point satisfies
    \begin{align*}
        T_\varepsilon
        =
        \mathcal{O}\left(
            \min_{m \in [n]} H_m
            \left(
                \frac{L_0\Delta_0}{\varepsilon^2}
                + \frac{L_1\Delta_0}{\varepsilon} 
                + \frac{\sqrt{L_0\Delta_0}}{\varepsilon}
                + \frac{L_0\Delta_0(\rho\sigma)^2}{m\varepsilon^4}
                + \frac{L_1^2\Delta_0(\rho\sigma)^2}{mL_0\varepsilon^2}
                + \frac{(\rho\sigma)^2}{m\varepsilon^2}
            \right)
        \right).
    \end{align*}
\end{theorem}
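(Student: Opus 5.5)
We combine \Cref{thm:iteration_complexity} with \Cref{lem:time_R}. The iteration bound tells us how many accepted updates $K_\varepsilon$ suffice. The lemma says that every block of $R$ consecutive updates takes at most $t(R)$ seconds. Splitting the $K_\varepsilon$ updates into $\lceil K_\varepsilon/R\rceil$ blocks gives
\begin{equation*}
    T_\varepsilon \le \left\lceil \frac{K_\varepsilon}{R}\right\rceil t(R)
    \le 2\left(\frac{K_\varepsilon}{R}+1\right)\min_{m\in[n]} H_m\left(1+\frac{R}{m}\right)
    \le 2\min_{m\in[n]} H_m\left(\frac{K_\varepsilon}{R}+1+\frac{K_\varepsilon}{m}+\frac{R}{m}\right).
\end{equation*}
The threshold is $R=1/\alpha$, with $\alpha$ taken at $K=K_\varepsilon$. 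We then bound the four terms inside the parentheses for each fixed $m$.

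\textbf{The term $K_\varepsilon/m$.} This is immediate: dividing the $K_\varepsilon$ bound of \Cref{thm:iteration_complexity} by $m\ge 1$ yields all the $\sigma$-terms over $m$. It also yields the deterministic terms $L_0\Delta_0/\varepsilon^2$ and $L_1\Delta_0/\varepsilon$, which we simply bound without the $1/m$.

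\textbf{The term $K_\varepsilon/R=\alpha K_\varepsilon$.} We use $\alpha\le 1$ on the deterministic terms of $K_\varepsilon$. For the noise terms we use $\alpha\le \sqrt{\Delta_0L_0}/(\rho\sigma\sqrt{K_\varepsilon})$, which gives $\alpha K_\varepsilon\le \sqrt{\Delta_0L_0K_\varepsilon}/(\rho\sigma)$. Substituting the dominant term $K_\varepsilon\approx L_0\Delta_0(\rho\sigma)^2/\varepsilon^4$ yields $L_0\Delta_0/\varepsilon^2$. Each remaining noise term gives a geometric mean of the same kind:
\begin{itemize}
    \item $\sqrt{L_0\Delta_0\cdot L_1^2\Delta_0/L_0}/\varepsilon = L_1\Delta_0/\varepsilon$;
    \item $\sqrt{L_0\Delta_0}/\varepsilon$, which is the extra term appearing in the statement.
\end{itemize}
All of these are covered by the terms with $m$ replaced by $1$, and hence by the stated bound. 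More carefully, $\sqrt{\cdot}$ is subadditive, so we apply $\alpha K_\varepsilon\le \min\{K_\varepsilon,\sqrt{\Delta_0L_0K_\varepsilon}/(\rho\sigma)\}$ termwise.

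\textbf{The term $R/m$.} We have $R=1/\alpha=\max\{1,\rho\sigma\sqrt{K_\varepsilon}/\sqrt{\Delta_0L_0}\}$. Hence
\begin{equation*}
    R/m\le 1+\frac{(\rho\sigma)^2K_\varepsilon+\Delta_0L_0}{m\,\Delta_0L_0}\cdot\frac{1}{2},
\end{equation*}
by AM--GM. Alternatively, we can note $R\le K_\varepsilon$, since the choices make $\alpha\ge 1/\sqrt{K}\cdot(\dots)$. The cleanest route is $R/m\le 1+\rho\sigma\sqrt{K_\varepsilon}/(m\sqrt{\Delta_0L_0})$, followed by checking each term of $\sqrt{K_\varepsilon}$ against the target:
\begin{itemize}
    \item $\sqrt{L_0\Delta_0(\rho\sigma)^2/\varepsilon^4}\cdot\rho\sigma/\sqrt{\Delta_0L_0}=(\rho\sigma)^2/\varepsilon^2$, divided by $m$, is a stated term;
    \item $\sqrt{(\rho\sigma)^2/\varepsilon^2}$ gives $(\rho\sigma)^2/(m\varepsilon\sqrt{\Delta_0L_0})$, which is at most the average of $(\rho\sigma)^2/(m\varepsilon^2)$ and $(\rho\sigma)^2L_0\Delta_0/(m\varepsilon^4)\cdot(\dots)$ via AM--GM;
    \item the $L_1$ noise term analogously gives $L_1\Delta_0^{1/2}(\rho\sigma)^2/(mL_0\varepsilon)$-type quantities, dominated by the product of stated terms by AM--GM.
\end{itemize}
The additive constants $1$ are absorbed into the stated terms, assuming $\varepsilon$ is small relative to $\sqrt{L_0\Delta_0}$. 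This bookkeeping of the cross terms in $R/m$ and $\alpha K_\varepsilon$, via AM--GM against the listed terms, is the only real obstacle; everything else is direct substitution. Finally, since the bound holds for every $m$, we take the minimum over $m$.
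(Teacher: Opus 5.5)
This is essentially the paper's proof: the same blockwise bound $T_\varepsilon\le\lceil K_\varepsilon/R\rceil\,t(R)$, the same termwise estimate $\alpha X\le\min\{X,\sqrt{L_0\Delta_0X}/(\rho\sigma)\}$ over the summands $X$ of $K_\varepsilon$ (producing $L_0\Delta_0/\varepsilon^2$, $L_1\Delta_0/\varepsilon$ and $\sqrt{L_0\Delta_0}/\varepsilon$), and $K_\varepsilon/m$ for the noise terms. The only difference is the ceiling, which the paper simply drops. Your AM--GM bookkeeping for $R/m$ also omits the contributions of $\sqrt{L_0\Delta_0}/\varepsilon$ and $\sqrt{L_1\Delta_0/\varepsilon}$ inside $\sqrt{K_\varepsilon}$, but it is unnecessary anyway: in the regime $\varepsilon\le\sqrt{L_0\Delta_0}$ that you already assume, $K_\varepsilon\ge(\rho\sigma)^2/\varepsilon^2\ge(\rho\sigma)^2/(L_0\Delta_0)$ gives $R\le K_\varepsilon$, hence $R/m\le K_\varepsilon/m$.
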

\begin{proof}[Proof sketch]
    Let $K_\varepsilon$ denote the iteration bound from \Cref{thm:iteration_complexity}, and recall that in the fixed-parameter regime the delay threshold is constant with $R_k \equiv R = \nicefrac{1}{\alpha}$. We partition the first $K_\varepsilon$ updates into $\lceil \nicefrac{K_\varepsilon}{R} \rceil$ consecutive blocks of length at most $R$ and apply \Cref{lem:time_R} to each block. Therefore,
    $
        T_\varepsilon
        \le
        \left\lceil \nicefrac{K_\varepsilon}{R} \right\rceil t(R).
    $
    Substituting the choices of $\alpha$ and the bound on $K_\varepsilon$ from \Cref{thm:iteration_complexity} yields the claimed time complexity bound.
\end{proof}
\paragraph{Discussion}
In the standard smooth Euclidean case ($\rho = 1$ and $L_1 = 0$), our bound reduces to the optimal time complexity scaling identified by \citet{tyurin2023optimal}\footnote{The proved lower bounds hold in the case when $0 < \varepsilon \leq c' \sqrt{L_0\Delta_0}$ for some constant $c' > 1$, and matches with our bound from \Cref{thm:time_complexity}.}.
In particular, it recovers the same optimal time complexity as \ringmaster \citep{maranjyan2025ringmaster} and \algname{RANSGD-M} \citep{wu2026optimalasynchronous}.
Thus, extending delay thresholding from normalized \sgd to general LMO-based updates does not incur any loss in the classical Euclidean regime.
\subsection{Parameter-agnostic case}\label{sec:param_agnostic}
We now turn to the parameter-agnostic case and begin with its iteration complexity.
%
%
\begin{theorem}[Parameter-agnostic iteration complexity; proof in \Cref{proof:iteration_complexity_param_agnostic}]\label{thm:iteration_complexity_param_agnostic}
    Suppose Assumptions~\ref{ass:lower_bound}--\ref{ass:gen_smooth} hold.
    Run \Cref{alg} with $R_k=\nicefrac{1}{\alpha_k}$, $\alpha_{\mathrm{init}} = \alpha_0=1$, and
    $\alpha_k=k^{-\nicefrac{1}{2}}$ for $k\ge 1$.
    In the first two cases, let $\eta>0$ be any constant and define
    \begin{equation*}
        \Psi(L_0,L_1)
        \coloneqq
        e^{L_1^2\eta^2}\nicefrac{\Delta_0}{\eta}
        + \rho\sigma
        + e^{L_1^2\eta^2}L_0\eta .
    \end{equation*}
    Choose the stepsizes as follows, and let $\Psi$ denote the corresponding value in the final bound:
    \begin{itemize}
        \item If $L_1=0$, take $\eta_k=\frac{\eta}{(k+1)^{\nicefrac{3}{4}}}$ and
        $\Psi=\Psi(L_0,0)=\nicefrac{\Delta_0}{\eta}+\rho\sigma+L_0\eta$.
        \item If $L_1>0$, take $\eta_k=\frac{\eta}{17(k+1)^{\nicefrac{3}{4}}}$ and
        $\Psi=\Psi(L_0,L_1)$.
        \item If $L_1>0$ is known, take $\eta_k=\frac{1}{17L_1(k+1)^{\nicefrac{3}{4}}}$ and
        $\Psi=L_1\Delta_0+\rho\sigma+\nicefrac{L_0}{L_1}$, which is the preceding envelope with $\eta=\nicefrac{1}{L_1}$ up to universal constants.
    \end{itemize}
    Then, after $K\ge 1$ iterations,
    \begin{equation*}
        \min_{k\in\{0,\dots,K-1\}}\Exp{\norm{\nabla f(x_k)}_*}
        \le
        \cO\left(\frac{\Psi\log K}{K^{\nicefrac{1}{4}}}\right).
    \end{equation*}
    Consequently, the number of iterations needed to reach an $\varepsilon$-stationary point satisfies
    \begin{equation*}
        K_{\varepsilon}
        =
        \cO\left(
            \frac{\Psi^4}{\varepsilon^4}
            \left(\log\frac{1}{\varepsilon}\right)^4
        \right).
    \end{equation*}
\end{theorem}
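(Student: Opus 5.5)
We follow the standard normalized-momentum/LMO template: a one-step descent inequality driven by the momentum error $\epsilon_k \coloneqq m_{k+1}-\nabla f(x_k)$, followed by a recursive bound on $\epsilon_k$ that now also absorbs the delay bias.

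\textbf{Step 1: one-step descent.} From \Cref{ass:gen_smooth} and $\norm{x_{k+1}-x_k}\le\eta_k$ we derive a local smoothness bound along the segment. Since $\eta_k\le 1/(17L_1)$ in the relevant regimes, the gradient norm along the segment stays within a constant factor of $\norm{\nabla f(x_k)}_*$. The LMO property gives $\inp{m_{k+1}}{\lmo(m_{k+1})}=-\norm{m_{k+1}}_*$, which yields
\begin{equation*}
    f(x_{k+1})\le f(x_k)-\eta_k\norm{\nabla f(x_k)}_*+2\eta_k\norm{\epsilon_k}_*+\tfrac{\eta_k^2}{2}\bigl(L_0+cL_1\norm{\nabla f(x_k)}_*\bigr).
\end{equation*}
The $L_1$ term is absorbed into $-\tfrac{\eta_k}{2}\norm{\nabla f(x_k)}_*$ when $\eta_k L_1$ is small. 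For the variant with an arbitrary constant $\eta$, we instead use a growth bound of the form $\norm{\nabla f(y)}_*\le e^{L_1\norm{y-x}}(\ldots)$, together with $\sum_k\eta_k^2\lesssim\eta^2$. This is what produces the factor $e^{L_1^2\eta^2}$: only finitely many early steps can have $\eta_kL_1$ large, and we control the blow-up over those steps Grönwall-style.

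\textbf{Step 2: error recursion.} We write
\begin{equation*}
    \epsilon_k=(1-\alpha_k)\epsilon_{k-1}+(1-\alpha_k)\bigl(\nabla f(x_{k-1})-\nabla f(x_k)\bigr)+\alpha_k\bigl(\nabla f(x_{k-\delta_k})-\nabla f(x_k)\bigr)+\alpha_k\zeta_k,
\end{equation*}
where $\zeta_k$ is the stochastic noise. Unrolling produces three contributions.
\begin{itemize}
    \item[(a)] \emph{Noise.} The $\zeta_k$ are martingale differences (each gradient is sampled at the iterate it was issued at). The Euclidean variance is converted to the dual norm with the factor $\rho$, giving a contribution of order $\rho\sigma\sqrt{\sum_j \alpha_j^2\prod(1-\alpha)^2}\approx\rho\sigma\sqrt{\alpha_k}$.
    \item[(b)] \emph{Drift.} Smoothness times $\eta_j$ gives a contribution of order $(L_0+L_1\norm{\nabla f}_*)\eta_k/\alpha_k$.
    \item[(c)] \emph{Delay bias.} Since $\delta_k<R_k=1/\alpha_k$, we have $\norm{x_{k-\delta_k}-x_k}\le\sum_{j=k-\delta_k}^{k-1}\eta_j\lesssim\eta_k/\alpha_k$, because $\eta_j$ varies by at most a constant factor over a window of length $\sqrt{k}\ll k$. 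Multiplied by $\alpha_k$ and summed with the geometric weights, this is again of order $(L_0+L_1\norm{\nabla f}_*)\eta_k/\alpha_k$.
\end{itemize}
This is the key point: the threshold $R_k=1/\alpha_k$ makes the delay error the same order as the momentum drift.

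\textbf{Step 3: summation.} With $\alpha_k=k^{-1/2}$ and $\eta_k\propto(k+1)^{-3/4}$, we have $\eta_k/\alpha_k\sim k^{-1/4}$ and $\sqrt{\alpha_k}\sim k^{-1/4}$. Summing Step 1 weighted by $\eta_k$ gives
\begin{equation*}
    \sum_k\eta_k\,\mathbb{E}\norm{\nabla f(x_k)}_*\lesssim\Delta_0+\sum_k\eta_k\bigl(\rho\sigma k^{-1/4}+L_0\eta k^{-1/4}\bigr)+L_0\sum_k\eta_k^2 .
\end{equation*}
The right-hand sums are $\eta\log K$-type, since $\sum k^{-1}\sim\log K$. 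The $L_1\norm{\nabla f}_*$ terms carry coefficient $L_1\eta_k/\alpha_k\lesssim L_1\eta k^{-1/4}$ and are absorbed into the left side once $k$ is large; the finitely many early indices are handled by the exponential factor from Step 1. Dividing by $\sum_k\eta_k\gtrsim\eta K^{1/4}$ gives $\Psi\log K/K^{1/4}$. Inverting $K^{1/4}/\log K\gtrsim\Psi/\varepsilon$ then yields $K_\varepsilon$ up to $\log^4$ factors.

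\textbf{Main obstacle.} I expect the hardest part to be the bookkeeping in Step 2. Two things must be handled carefully there. First, the weighted sums of time-varying products $\prod(1-\alpha_j)$ must be controlled, with $\eta_j$ comparable to $\eta_k$ over windows of length $1/\alpha_k$. Second, the $L_1$-dependent terms must be absorbed without circularity, since they involve gradient norms at past and delayed iterates. I would handle the latter via the lemma $\norm{\nabla f(y)}_*\le 2\norm{\nabla f(x)}_*+L_0/L_1$ whenever $\norm{x-y}\le 1/(2L_1)$, applied on windows where $\sum\eta_j$ is small.
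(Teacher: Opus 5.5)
Your skeleton matches the paper's proof. It has the same parts: the LMO descent lemma; the unrolled momentum error split into drift, delay bias and noise; the observation that $R_t=1/\alpha_t$ puts the delay bias at the order of the drift; a swap of summation order; and absorption of the $L_1$-terms into the left-hand side for large $k$.

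The gap is in the unknown-$L_1$ case, which is exactly where the envelope $e^{L_1^2\eta^2}$ has to come from. You attribute it to $\sum_k\eta_k^2\lesssim\eta^2$ and a Gr\"onwall-type control over the finitely many steps on which $L_1\eta_k$ is large. That is the wrong transient and the wrong quantity.

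Write $\hat\eta=\eta/17$, so $\eta_k=\hat\eta(k+1)^{-3/4}$. After swapping sums, the drift and delay terms place on $\mathbb{E}\norm{\nabla f(x_k)}_*$ a coefficient which, relative to $\eta_k$, is of order $L_1\eta_k/\alpha_k\asymp L_1\hat\eta(k+1)^{-1/4}$, times an exponential in the same quantity. You note this coefficient yourself in Step~3. Hence the non-absorbable indices are $k<k_0\asymp(L_1\hat\eta)^4$. That window is far longer than $\{k: L_1\eta_k\gtrsim 1\}$, which has length only $\asymp(L_1\hat\eta)^{4/3}$.

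On the indices in between, the descent term is harmless but the drift and delay terms are not. The exponential factor from your Step~1 was built for the short window, so it does not cover them. Nor is the envelope $e^{L_1^2\sum_k\eta_k^2}$. It is $e^{L_1 r}$, where $r\le\sum_{k<k_0}\eta_k\asymp\hat\eta k_0^{1/4}\asymp L_1\hat\eta^2$ is the distance travelled during the correct transient.

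The paper closes this as follows. It sets $k_0=\lceil(70L_1\hat\eta)^4\rceil-1$ from the absorption condition. For $k<k_0$ it bounds the gradient directly from $x_0$: $\norm{\nabla f(x_k)}_*\le(L_0+L_1\norm{\nabla f(x_0)}_*)e^{L_1r_k}r_k+\norm{\nabla f(x_0)}_*$, with $r_k=\norm{x_k-x_0}\le4\hat\eta k^{1/4}$. The exponent is then at most $4L_1\hat\eta k_0^{1/4}\le280L_1^2\hat\eta^2\le L_1^2\eta^2$. This is what the factor $1/17$ in the stepsize is for, since $17^2\ge 280$; it is not about keeping the descent step safe.

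The paper then uses $\norm{\nabla f(x_0)}_*\le\max\{8L_1\Delta_0,L_0/L_1\}$, which follows from $\norm{\nabla f(x)}_*^2\le4(L_0+L_1\norm{\nabla f(x)}_*)(f(x)-f^*)$. This turns the transient contribution into $\Delta_0$- and $L_0$-terms, and your outline omits this step as well. You already name the growth bound, so the repair is to apply it once from $x_0$ over the correct window, rather than step by step.

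Separately, the circularity you worry about does not arise. Because the assumption is symmetric, you can take $x_t$ as the base point in both $\norm{\nabla f(x_{t-1})-\nabla f(x_t)}_*$ and $\norm{\nabla f(x_{t-\delta_t})-\nabla f(x_t)}_*$. Then only gradient norms at accepted iterates appear, and no window lemma comparing delayed and current gradients is needed.
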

\paragraph{Discussion}
In the Euclidean case, our parameter-agnostic bound matches that of \algname{NSGD-M} from \citet{hubler2024parameter_agnostic}, while working under symmetric generalized smoothness rather than their asymmetric variant; for twice-differentiable objectives, the two notions are equivalent.
When $L_1$ is unknown, our bound contains the factor $e^{L_1^2\eta^2}$, which is unavoidable and is predicted by the lower bounds of \citet{hubler2024parameter_agnostic}.
The parameter-agnostic analysis of \algname{Gluon} \citep{riabinin2025gluon} does not contain this factor, but it relies on more restrictive assumptions.

Compared with existing LMO-based results, our guarantee does not require prior knowledge of either $K$ or $L_1$.
In particular, \citet{khirirat2025better} analyze several LMO-based momentum variants, but their parameter choices depend on the time horizon and generalized smoothness constants.
Finally, relative to \citet{wu2026optimalasynchronous}, we provide, to the best of our knowledge, the first parameter-agnostic guarantee for an asynchronous method under generalized smoothness, and more broadly the first convergence guarantee for asynchronous optimization in this setting.
%
%
\paragraph{Time complexity}
We now analyze the time complexity of \Cref{alg} in the parameter-agnostic case, where the threshold $R_k$ changes over time according to the square-root schedule in the theorem above.
This case is slightly more difficult than the fixed-threshold case below.
\begin{lemma}[Time complexity for a square-root delay threshold; proof in \Cref{proof:sqrt_threshold}]\label{lemma:time_sqrt}
  Consider \Cref{alg} with delay-threshold sequence given in \Cref{thm:iteration_complexity_param_agnostic}, i.e. $R_0=1$ and $R_k = \sqrt{k}$ for $k \ge 1$.
  Under \Cref{assump:fixed_time}, for any $K\ge 1$, the time needed to complete the first $K$ iterations satisfies
  \begin{equation*}
        T(K) = \mathcal{O}\left( \min_{m \in [n]} H_m \left( \sqrt{K} + \frac{K}{m} \right) \right).
  \end{equation*}
\end{lemma}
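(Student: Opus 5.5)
We reduce the time-varying threshold to the fixed-threshold case of \Cref{lem:time_R} by a dyadic blocking argument. The key observation is that the time bound of \Cref{lem:time_R} only uses two facts: the threshold is at least $R$ during the block, and the block has length at most $R$. So it should extend to any block of $R$ consecutive updates during which $R_k \ge R$ throughout. The reasoning behind \citet[Lemma 4.1]{maranjyan2025ringmaster} goes through unchanged under this weaker hypothesis. A larger threshold only rejects fewer gradients, and the argument bounds, from the time the block starts, how quickly the $m$ fastest workers produce gradients with delay less than $R$. Within $2H_m(1+R/m)$ seconds, workers $1,\dots,m$ complete at least $R$ computations whose delay is below $R$. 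I would state this monotonicity explicitly: acceptance condition $\delta_k < R_k$ with $R_k \ge R$ is implied by $\delta_k < R$.

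Next, partition the iterations $\{1,\dots,K\}$ into consecutive phases. Phase $j$ consists of iterations $k \in [4^j, 4^{j+1})$, for $j = 0,1,\dots,J$ with $J = \lceil \log_4 K \rceil$. Iteration $0$ is handled separately; it costs at most $\tau_1 \le H_1 \le t(1)$. In phase $j$ we have $R_k = \sqrt{k} \ge 2^j$. I therefore set the local threshold $r_j \coloneqq 2^j$ (or $\lfloor\sqrt{4^j}\rfloor$, which is the same) and split the phase, which has $3\cdot 4^j$ iterations, into $\lceil 3\cdot 4^j / r_j\rceil \le 3\cdot 2^j + 1$ blocks of length at most $r_j$. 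Applying the extended \Cref{lem:time_R} to each block, and fixing an arbitrary $m\in[n]$, the time of phase $j$ is at most
\begin{equation*}
(3\cdot 2^j+1)\cdot 2H_m\left(1+\frac{2^j}{m}\right) = \mathcal{O}\left(H_m\left(2^j + \frac{4^j}{m}\right)\right).
\end{equation*}
Summing the geometric series over $j \le J$ gives $\mathcal{O}(H_m(2^J + 4^J/m)) = \mathcal{O}(H_m(\sqrt{K} + K/m))$, since $4^J \le 4K$. We then take the minimum over $m$. The last phase may be truncated at $K$, which only helps.

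The main obstacle is the first step: making rigorous that \Cref{lem:time_R} applies to a block when the threshold is time-varying, and in particular handling the block boundaries. At the start of a block, workers may be mid-computation on iterates from earlier phases. The original proof copes with this through the factor $2$, which covers one full stale computation of length at most $\tau_i$ plus the fresh round. I would re-derive this: after the first $\tau_i$ seconds of the block, each worker $i\le m$ is computing at an iterate indexed at least the block start. Thereafter every gradient it returns within the block has delay less than the block length $\le r_j \le R_k$, so it is accepted. Counting completions, $\sum_{i\le m}\lfloor t/\tau_i - 1\rfloor \ge R$ once $t \ge 2H_m(1+R/m)$, because $\sum_{i \le m} t/\tau_i = tm/H_m$. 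This counting argument is exactly the one in the fixed-threshold case, so nothing beyond this observation should be needed.
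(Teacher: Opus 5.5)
Your proof is correct and follows essentially the same route as the paper. You cut the iterations into contiguous stretches on which $R_k \ge r$, split each stretch into sub-blocks of at most $r$ updates, charge $t(r)$ to each sub-block via \Cref{lem:time_R}, and sum. The paper groups by integer level $\lfloor \sqrt{k}\rfloor = r$, gets at most $4$ sub-blocks per level, and then uses $\sum_r \min_m \le \min_m \sum_r$; you group dyadically with $r_j = 2^j$ and fix $m$ first, which gives the same bound up to constants.

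Your direct re-derivation of the block lemma under the weaker hypothesis $R_k \ge R$ is, if anything, more careful than the paper's one-line ``more conservative'' remark in the proof of \Cref{lem:time_RK}. Your argument is that, after its in-flight job, each worker $i \le m$ only returns gradients with delay below the block length, so they are accepted in the actual run.
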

\begin{proof}[Proof sketch]
  We group iterations by the integer threshold level
  $\mathcal{K}_r \coloneqq \{k\in\{0,\dots,K-1\} \mid \lfloor R_k \rfloor = r\}$.
  Since $R_k$ is non-decreasing, each $\mathcal{K}_r$ is a contiguous block.
  The fixed-threshold bound from \Cref{lem:time_R} can therefore be applied blockwise, after splitting $\mathcal{K}_r$ into groups of at most $r$ updates.
  For the square-root rule $R_0=1$ and $R_k=\sqrt{k}$ for $k\ge 1$, we have $|\mathcal{K}_r|\le 2r+2$, so each threshold level contributes only a constant number of such groups.
  Summing over $r\le \lfloor R_{K-1}\rfloor\le \sqrt{K}$ gives the stated bound.
\end{proof}
Combining \Cref{thm:iteration_complexity_param_agnostic} with \Cref{lemma:time_sqrt} yields the total time complexity for reaching an $\varepsilon$-stationary point.
\begin{theorem}[Total time complexity]\label{thm:time_complexity_param_agnostic}
  Under the assumptions and parameter choices of \Cref{thm:iteration_complexity_param_agnostic}, let $K_\varepsilon$ be the number of iterations needed to reach an $\varepsilon$-stationary point.
  Then, under \Cref{assump:fixed_time}, the corresponding time complexity
  $T_\varepsilon \coloneqq T(K_\varepsilon)$ satisfies
  \begin{equation*}
      T_\varepsilon
      =
      \widetilde{\mathcal{O}}\left(
          \min_{m \in [n]} H_m \left(
              \frac{\Psi^2}{\varepsilon^2}
              + \frac{\Psi^4}{m\varepsilon^4}
          \right)
      \right),
  \end{equation*}
  where $H_m$ is the harmonic mean of the first $m$ computation times $\tau_1,\dots,\tau_m$, $\Psi$ is the corresponding quantity from \Cref{thm:iteration_complexity_param_agnostic}, and $\widetilde{\mathcal{O}}(\cdot)$ hides logarithmic factors.
\end{theorem}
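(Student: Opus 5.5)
The plan is to substitute the iteration bound of \Cref{thm:iteration_complexity_param_agnostic} directly into the time bound of \Cref{lemma:time_sqrt}. First, fix the target accuracy $\varepsilon$ and let $K_\varepsilon$ be the iteration count from \Cref{thm:iteration_complexity_param_agnostic}. Some care is needed here, because the rate $\Psi\log K/K^{1/4}$ has a $\log K$ in the numerator. I will therefore take $K_\varepsilon$ as the smallest $K$ such that $C\,\Psi\log K/K^{\nicefrac{1}{4}}\le\varepsilon$. The standard inversion of $\log K/K^{1/4}\le \varepsilon/(C\Psi)$ gives
\begin{equation*}
K_\varepsilon=\cO\left(\frac{\Psi^4}{\varepsilon^4}\log^4\frac{\Psi}{\varepsilon}\right),
\end{equation*}
which is the statement of the theorem up to logarithmic factors. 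To do the inversion, I set $K=c\,(\Psi/\varepsilon)^4\log^4(\Psi/\varepsilon)$ and check that $\log K=\cO(\log(\Psi/\varepsilon))$. This is routine; the only point to watch is that the argument of the logarithm is $\Psi/\varepsilon$ rather than $1/\varepsilon$. Since $\widetilde{\cO}$ absorbs this, it does not matter.

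Second, I apply \Cref{lemma:time_sqrt} with $K=K_\varepsilon$. This is legitimate because the threshold schedule $R_k=1/\alpha_k$ with $\alpha_k=k^{-1/2}$ and $R_0=1/\alpha_0=1$ is exactly the square-root schedule that the lemma assumes. The lemma gives, for every $m\in[n]$,
\begin{equation*}
T(K_\varepsilon)\le C' H_m\left(\sqrt{K_\varepsilon}+\frac{K_\varepsilon}{m}\right).
\end{equation*}
We also need monotonicity: $T(K)$ is nondecreasing in $K$, since it is the time to complete the first $K$ updates. This lets us plug in an upper bound on $K_\varepsilon$ in place of its exact value.

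Third, I substitute. We have $\sqrt{K_\varepsilon}=\cO\big((\Psi^2/\varepsilon^2)\log^2(\cdot)\big)$ and $K_\varepsilon/m=\cO\big(\Psi^4/(m\varepsilon^4)\log^4(\cdot)\big)$. The bound holds for each $m$, so we can take the minimum over $m\in[n]$, and this minimum commutes with the $\cO$ because the constants are uniform in $m$. Hiding the logarithms in $\widetilde{\cO}$ then yields the claim.

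The main, if mild, obstacle is the first step: making the inversion of $\Psi\log K/K^{1/4}\le\varepsilon$ rigorous without circularity, and confirming that the guarantee holds at $K_\varepsilon$ itself (it is stated for every $K\ge1$, so it does). Everything else is direct substitution.
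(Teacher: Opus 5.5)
Your proposal is correct and matches the paper's proof: bound $K_\varepsilon$ (hence $\sqrt{K_\varepsilon}$) via \Cref{thm:iteration_complexity_param_agnostic}, then substitute into \Cref{lemma:time_sqrt} with $K=K_\varepsilon$. Your extra care about inverting the $\log K$ factor and about the monotonicity of $T(K)$ fills in details the paper leaves implicit, without changing the argument.
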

\begin{proof}
  By \Cref{thm:iteration_complexity_param_agnostic},
  $
      \sqrt{K_\varepsilon} = \widetilde{\mathcal{O}}\left(\frac{\Psi^2}{\varepsilon^2}\right).
  $
  Applying \Cref{lemma:time_sqrt} with $K = K_\varepsilon$ gives
  \begin{equation*}
    T_\varepsilon
    =
    \mathcal{O}\left(
        \min_{m \in [n]} H_m \left(\sqrt{K_\varepsilon} + \frac{K_\varepsilon}{m}\right)
    \right)
    =
    \widetilde{\mathcal{O}}\left(
        \min_{m \in [n]} H_m \left(
            \frac{\Psi^2}{\varepsilon^2}
            + \frac{\Psi^4}{m\varepsilon^4}
        \right)
    \right).
  \end{equation*}
\end{proof}
\paragraph{Discussion.}
Up to logarithmic factors, this bound has the same worker-heterogeneity dependence as \Cref{thm:time_complexity}, namely
$
    \min_{m \in [n]} H_m \left(\cdot + \frac{\cdot}{m}\right),
$
so it preserves the delay-thresholding benefit of adapting to the faster workers.
Its advantage is parameter agnosticism: the square-root schedule attains this guarantee without knowing $K$ or $L_1$.
To the best of our knowledge, no existing asynchronous method provides a comparable parameter-agnostic time complexity guarantee.
\section{Experiments}\label{sec:experiments}

\paragraph{Reproducibility.}
The implementation and code for reproducing our experiments are available at
\href{https://github.com/vectozavr/ringmaster-lmo}{\texttt{github.com/vectozavr/ringmaster-lmo}}.

\paragraph{Overview.}
We test the main empirical prediction of the theory: delay thresholding should be most useful when asynchronous workers have heterogeneous runtimes.
Choosing the spectral-norm LMO in \Cref{alg} recovers \muon, so our two tested methods are \ringmastermuon and its parameter-agnostic variant \ringmastermuonagnostic.
We compare them with two asynchronous \muon baselines, \rennalamuon and \muondelayadaptive, on a stochastic quadratic benchmark and on NanoChat language-model pretraining.

\paragraph{Methods and tuning.}
For all fixed-parameter methods, we use the same internal \muon configuration: momentum coefficient $\beta=0.95$, five Newton--Schulz iterations, and Nesterov lookahead.
Thus the tuning only concerns asynchronous control parameters such as the stepsize scale $\eta$, delay threshold, or baseline-specific refresh parameter.
For \ringmastermuonagnostic, we use the schedule motivated by \Cref{thm:iteration_complexity_param_agnostic},
\begin{equation*}
    \eta_k=\frac{\eta}{(k+1)^{3/4}},\qquad
    \alpha_0=1,\quad \alpha_k=k^{-1/2}\quad(k\ge 1),\qquad
    R_k=\max\{1,\lfloor 1/\alpha_k \rfloor\},
\end{equation*}
so the only tuned parameter is the scale $\eta$.
Throughout the experimental grids, $\eta$ denotes the tuned stepsize scale; for delay-adaptive baselines, it is the nominal scale before applying the method-specific delay correction.
All methods are tuned independently for each benchmark and delay regime under the same simulated runtime budget.

\paragraph{Runtime model and compute resources.}
All experiments use a server--worker simulator instead of a physical multi-GPU deployment.
The simulator follows the asynchronous protocol in \Cref{alg}: a worker receives the current model, computes one stochastic gradient, returns after its assigned runtime, and is then either accepted or rejected according to the method's asynchronous rule.
The horizontal axis in every plot is this simulated runtime, not the host machine's elapsed wall-clock time.
Indexing workers by increasing deterministic runtime as $i=0,\dots,n-1$, we use the following three delay profiles:
\begin{equation}
    \label{eq:experiment_delay_profiles}
    g_i^{\mathrm{sim}}=1,\qquad
    g_i^{\mathrm{sub}}=1+\sqrt{i},\qquad
    g_i^{\mathrm{lin}}=1+i.
\end{equation}
The first profile is nearly homogeneous, the second introduces moderate heterogeneity, and the third creates a strong straggler tail.
We add a small independent perturbation to worker runtimes so that the similar-delay regime is not exactly tied across workers.
The synthetic quadratic experiments were run on a MacBook Pro with an Apple M1 Max chip and 64GB RAM using Python 3.12.
The NanoChat gradients were computed on one NVIDIA A100-PCIE-40GB GPU using \texttt{bfloat16} autocast and \texttt{torch.compile}; the multi-worker dynamics were simulated on top of these measured gradient times.

\paragraph{Synthetic quadratic benchmark.}
Our synthetic benchmark is the standard worst-case Nesterov tridiagonal quadratic in dimension $d=1729$:
\begin{equation*}
    f(x)=\frac{1}{2}x^\top A x-b^\top x,\qquad
    A=\frac{1}{4}\operatorname{tridiag}(-1,2,-1),\qquad
    b=-\frac{1}{4}e_1.
\end{equation*}
We initialize at $x_0=\sqrt{d}\,e_1$, compute the exact minimizer by solving $Ax=b$, and report the objective gap $f(x^t)-f^\star$ as a function of simulated runtime.
The stochastic oracle is
\begin{equation*}
    g(x,\xi)=\nabla f(x)+\xi\mathbf{1},
\end{equation*}
where $\xi\sim\mathcal{N}(0,0.01^2)$ is a scalar Gaussian perturbation shared across all coordinates.
For this benchmark, worker $i$ has deterministic base runtime $g_i$, where $g_i$ is chosen from the three profiles in \eqref{eq:experiment_delay_profiles}.
We simulate $n=6174$ workers and tune each method over a runtime horizon of $2000$ simulated seconds, with one trial per hyperparameter setting.
The search grids are
\begin{itemize}
    \item \ringmastermuon: $\eta \in \{5^{-6}, 5^{-5}, \dots, 5^1\}$ and $R \in \{1, 2, 4, 6, 8, 16, 32\}$;
    \item \ringmastermuonagnostic: $\eta \in \{5^{-4}, 5^{-3}, \dots, 5^3\}$;
    \item \rennalamuon: $\eta \in \{5^{-6}, 5^{-5}, \dots, 5^1\}$ and $B \in \{1, 2, 4, 6, 8, 16, 32\}$;
    \item \muondelayadaptive: $\eta \in \{5^{-8}, 5^{-7}, \dots, 5^{-1}\}$.
\end{itemize}

\begin{figure*}[t]
    \centering
    \begin{minipage}[t]{0.32\textwidth}
        \centering
        \includegraphics[width=\linewidth]{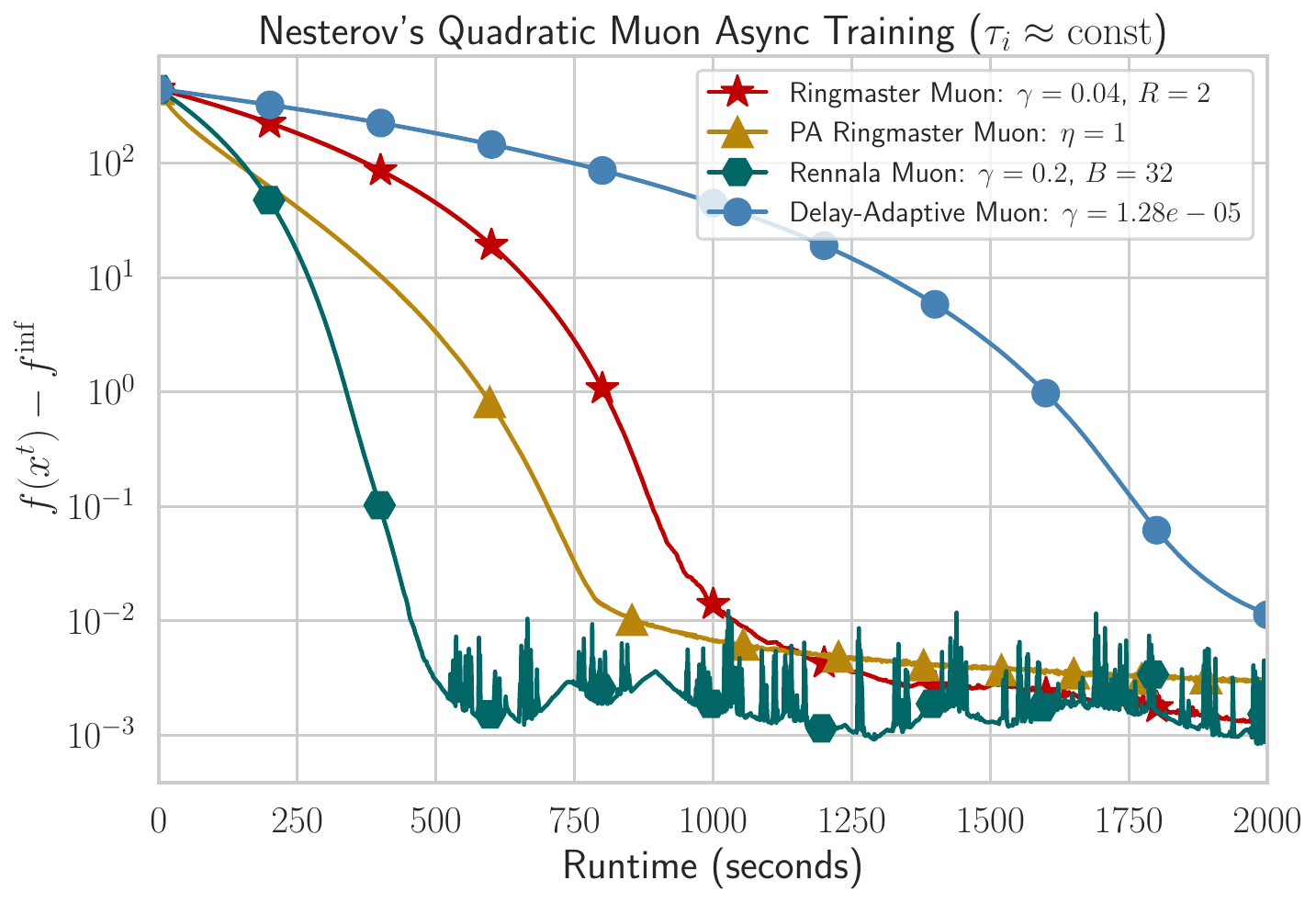}
        \small (a) Nearly homogeneous delays
    \end{minipage}\hfill
    \begin{minipage}[t]{0.32\textwidth}
        \centering
        \includegraphics[width=\linewidth]{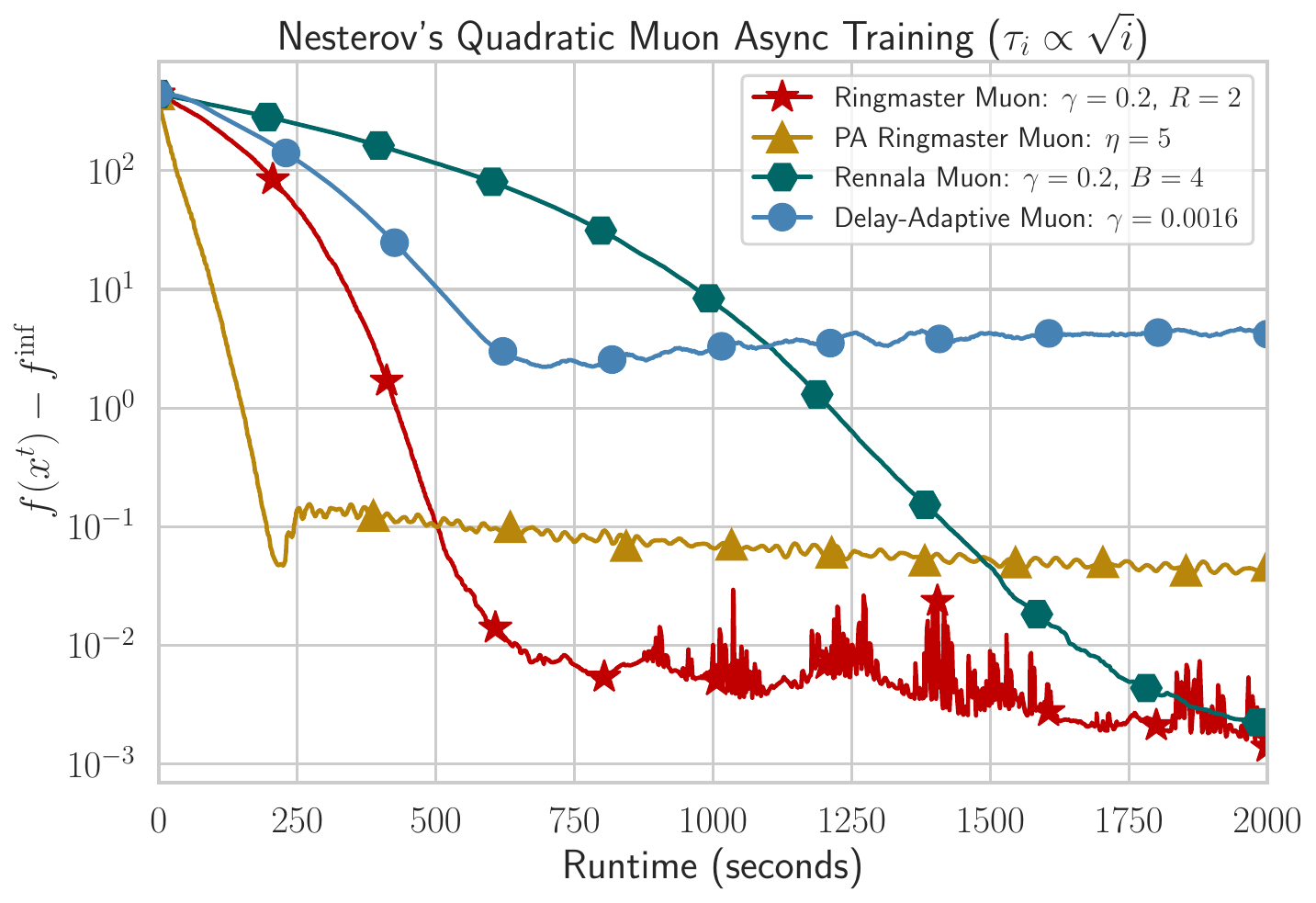}
        \small (b) Sublinear delays
    \end{minipage}\hfill
    \begin{minipage}[t]{0.32\textwidth}
        \centering
        \includegraphics[width=\linewidth]{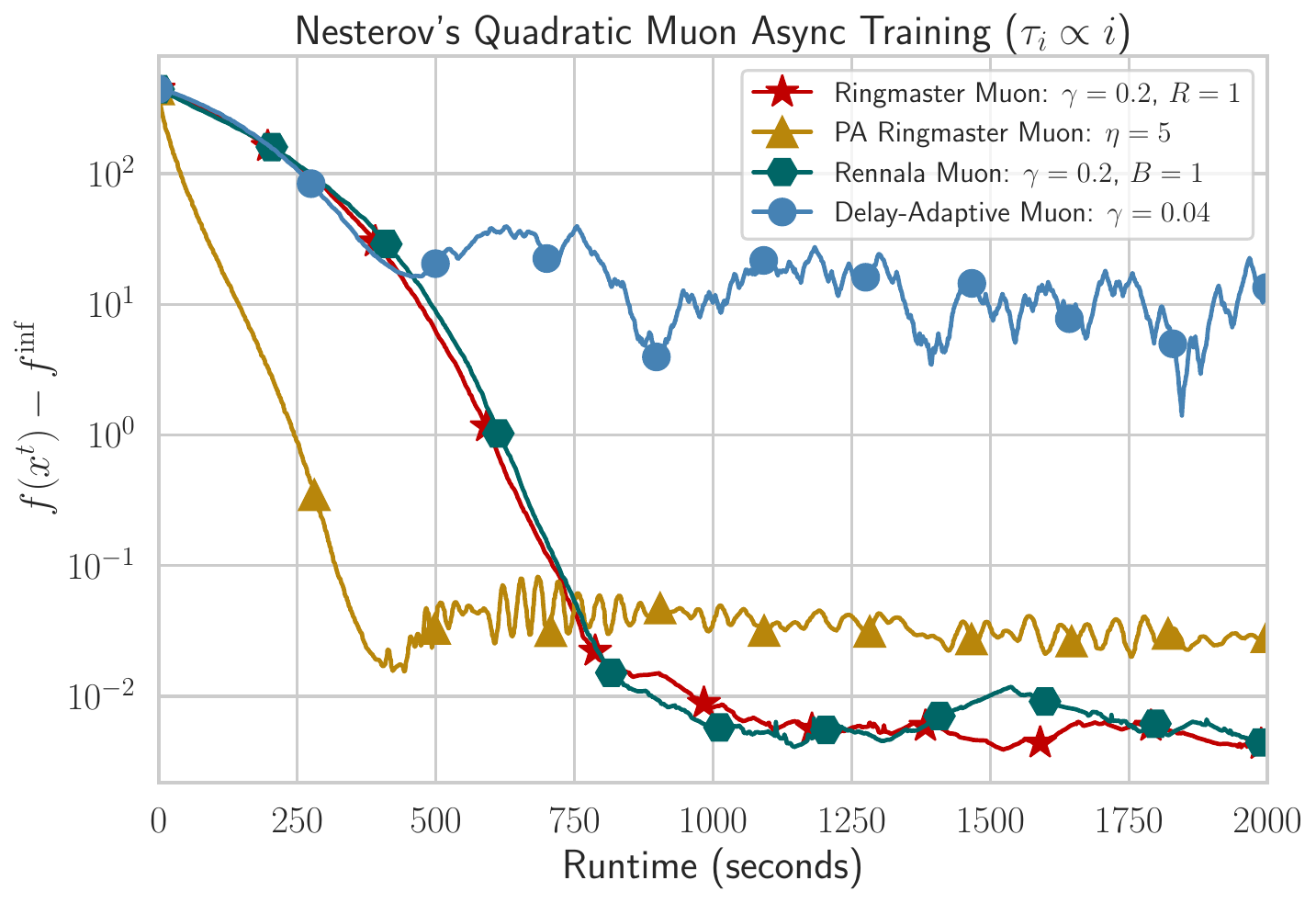}
        \small (c) Linear delays
    \end{minipage}
    \caption[Quadratic comparison under three delay regimes]{Comparison on the stochastic tridiagonal quadratic objective under similar, sublinear, and linear worker-delay regimes. The advantage of \ringmastermuon appears as the delay heterogeneity becomes stronger.}
    \label{fig:quadratic_experiments_placeholders}
\end{figure*}

\Cref{fig:quadratic_experiments_placeholders} shows the expected tradeoff.
In the similar-delay regime, where workers are nearly homogeneous, \ringmastermuon slightly underperforms the strongest baselines because discarding updates provides little benefit and can waste gradient evaluations.
As heterogeneity increases, the value of thresholding becomes clearer: \ringmastermuon performs best in the sublinear-delay regime and remains highly competitive in the linear-delay regime, where it matches \rennalamuon.

\paragraph{NanoChat benchmark.}
Our language-modeling experiments use an architecture-faithful implementation of NanoChat \citep{nanochat}.
The model has context length $2048$, window pattern \texttt{SSSL}, vocabulary size $8192$, $6$ transformer blocks, hidden size $192$, and grouped-query attention with $3$ query heads and $3$ key-value heads.
Each block uses RMS normalization, rotary positional embeddings, value embeddings with gating, learned scalar coefficients for residual connections, and the MLP nonlinearity $x\mapsto\operatorname{ReLU}(x)^2$.
The tokenizer is trained once using an $8192$-wordpiece BPE vocabulary on the Karpathy \texttt{climbmix-400b-shuffle} shards while preserving the GPT-style regex split pattern of the reference implementation.
The data pipeline prepends a BOS token to each document and packs documents into length-$2049$ streams using a best-fit heuristic, so each stochastic gradient processes fully packed sequences of length $2048$.

Optimization uses device batch size $4$.
The optimization variable concatenates all trainable tensors into a single flattened representation.
Within this representation, Muon orthogonalization is applied blockwise only to matrix-valued tensors inside transformer blocks; embeddings, output weights, and scalar residual parameters use the standard momentum recursion without orthogonalization.
To avoid the overhead of a separate evaluation pass, we report the training loss of the most recently computed minibatch from the worker that last returned a gradient.

\begin{table}[tbp]
    \centering
    \caption[Measured NanoChat gradient times across GPUs]{Measured stochastic-gradient times for the NanoChat setup used in our experiments: a $6$-layer, $192$-hidden model with sequence length $2048$ and device batch size $4$, run on a single GPU with \texttt{torch.compile} enabled and mixed precision where supported. Each entry reports the mean and standard deviation over $100$ synchronized gradient steps after $3$ warm-up iterations. The table shows substantial cross-device variability, with mean step times ranging from $14.18$\,ms on an H100 to $215.20$\,ms on a GTX 1080 Ti, which supports the heterogeneous delay ranges used in our simulated runtime model.}
    \label{tab:gpu_gradient_times}
    \begin{tabular}{@{}lccc@{}}
        \toprule
        GPU & Precision & Mean time (ms) & Std. (ms) \\
        \midrule
        NVIDIA H100-PCIE-80GB & bf16 & 14.18 & 7.32  \\
        NVIDIA A100-PCIE-40GB & bf16 & 22.74 & 11.64  \\
        NVIDIA RTX 4090 24GB & bf16 & 26.53 & 13.11  \\
        NVIDIA RTX 3090 24GB & bf16 & 38.42 & 15.10 \\
        NVIDIA V100-PCIE-16GB & fp16 & 64.15 & 20.30 \\
        NVIDIA T4 16GB & fp16 & 128.60 & 35.50 \\
        NVIDIA GTX 1080 Ti 11GB & fp32 & 215.20 & 40.15 \\
        \bottomrule
    \end{tabular}
\end{table}

To anchor the simulated runtime to measured hardware performance, we first profile one stochastic-gradient computation for the NanoChat model on the A100.
After $3$ warm-up steps, we average $100$ synchronized gradient computations and obtain a mean step time of $\widehat{\tau}=22.74$\,ms, as reported in \Cref{tab:gpu_gradient_times}.
The NanoChat simulator then converts the dimensionless profiles in \eqref{eq:experiment_delay_profiles} into milliseconds by assigning worker $i$ the deterministic base runtime
\begin{equation*}
    \tau_i=\widehat{\tau}g_i,
\end{equation*}
where $g_i$ is chosen as $g_i^{\mathrm{sim}}$, $g_i^{\mathrm{sub}}$, or $g_i^{\mathrm{lin}}$ depending on the delay regime.
Finally, we inject half-normal runtime noise with standard deviation equal to $5\%$ of this deterministic base runtime.
For $n=32$, the similar-delay regime is centered at $22.74$\,ms per worker; before adding noise, the sublinear regime ranges from $22.74$\,ms to $128.62$\,ms and the linear regime ranges from $22.74$\,ms to $727.57$\,ms.
For $n=8$, the corresponding upper bounds are $64.31$\,ms and $181.89$\,ms.
These simulated ranges are consistent with the cross-device variability in \Cref{tab:gpu_gradient_times}, while remaining tied to a measured gradient time for the exact NanoChat setup used in the experiments.

We evaluate NanoChat with $n=32$ and $n=8$ simulated workers.
Hyperparameters are tuned over a runtime horizon of $500$ simulated seconds with one trial per configuration.
The search grids are
\begin{itemize}
    \item \ringmastermuon: $\eta \in \{5^{-2}, 5^{-1}, 1, 5\}$ and $R \in \{1, 2, 4, 8, 16\}$;
    \item \ringmastermuonagnostic: $\eta \in \{5^{-2}, 5^{-1}, 1, 5, 25\}$;
    \item \rennalamuon: $\eta=5$ and $B \in \{1, 2, 4, 8, 16\}$;
    \item \muondelayadaptive: $\eta \in \{5^{-2}, 5^{-1}, 1, 5\}$.
\end{itemize}

\begin{figure*}[t]
    \centering
    \begin{minipage}[t]{0.32\textwidth}
        \centering
        \includegraphics[width=\linewidth]{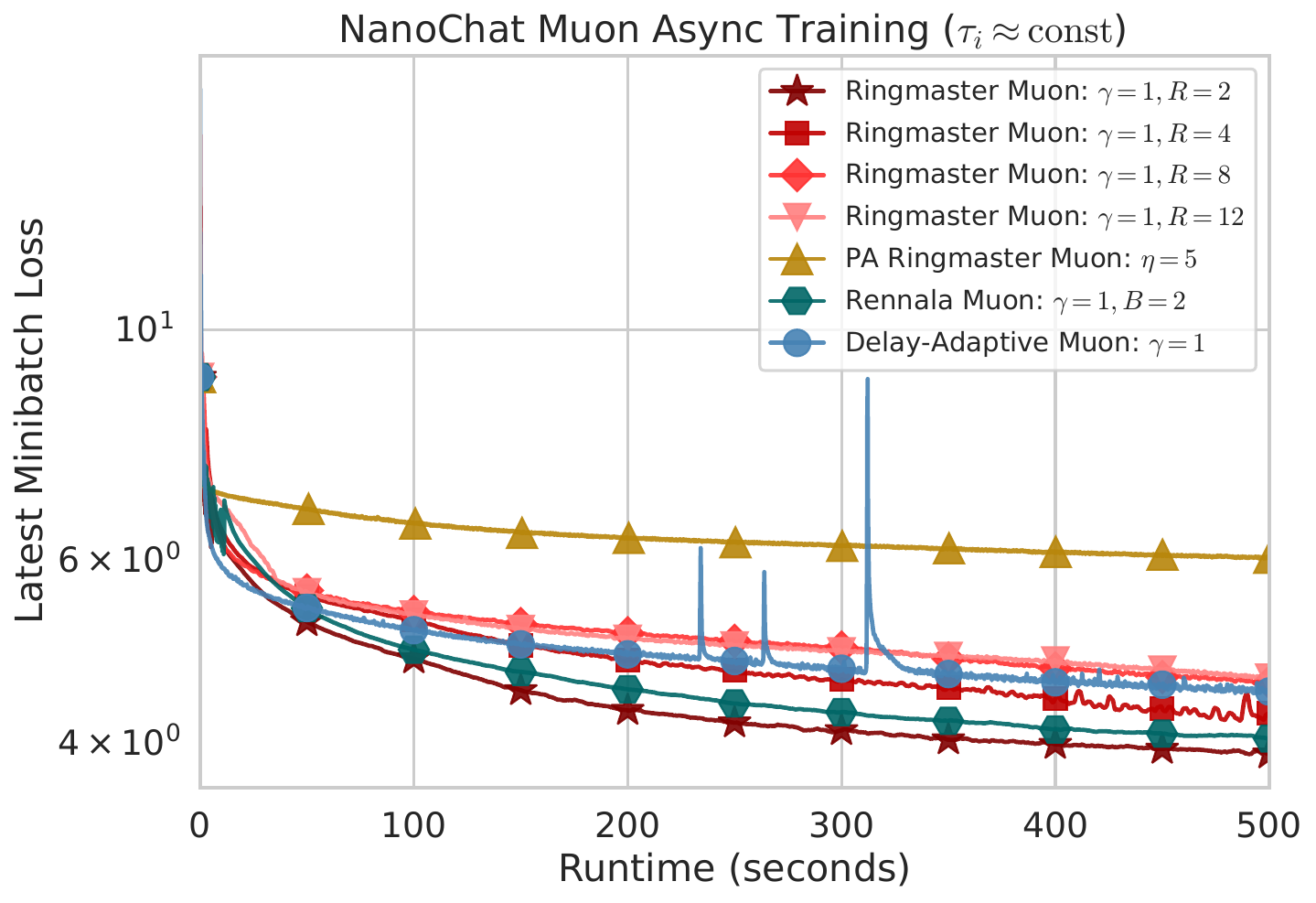}
        \small (a) $n=32$, nearly homogeneous delays
    \end{minipage}\hfill
    \begin{minipage}[t]{0.32\textwidth}
        \centering
        \includegraphics[width=\linewidth]{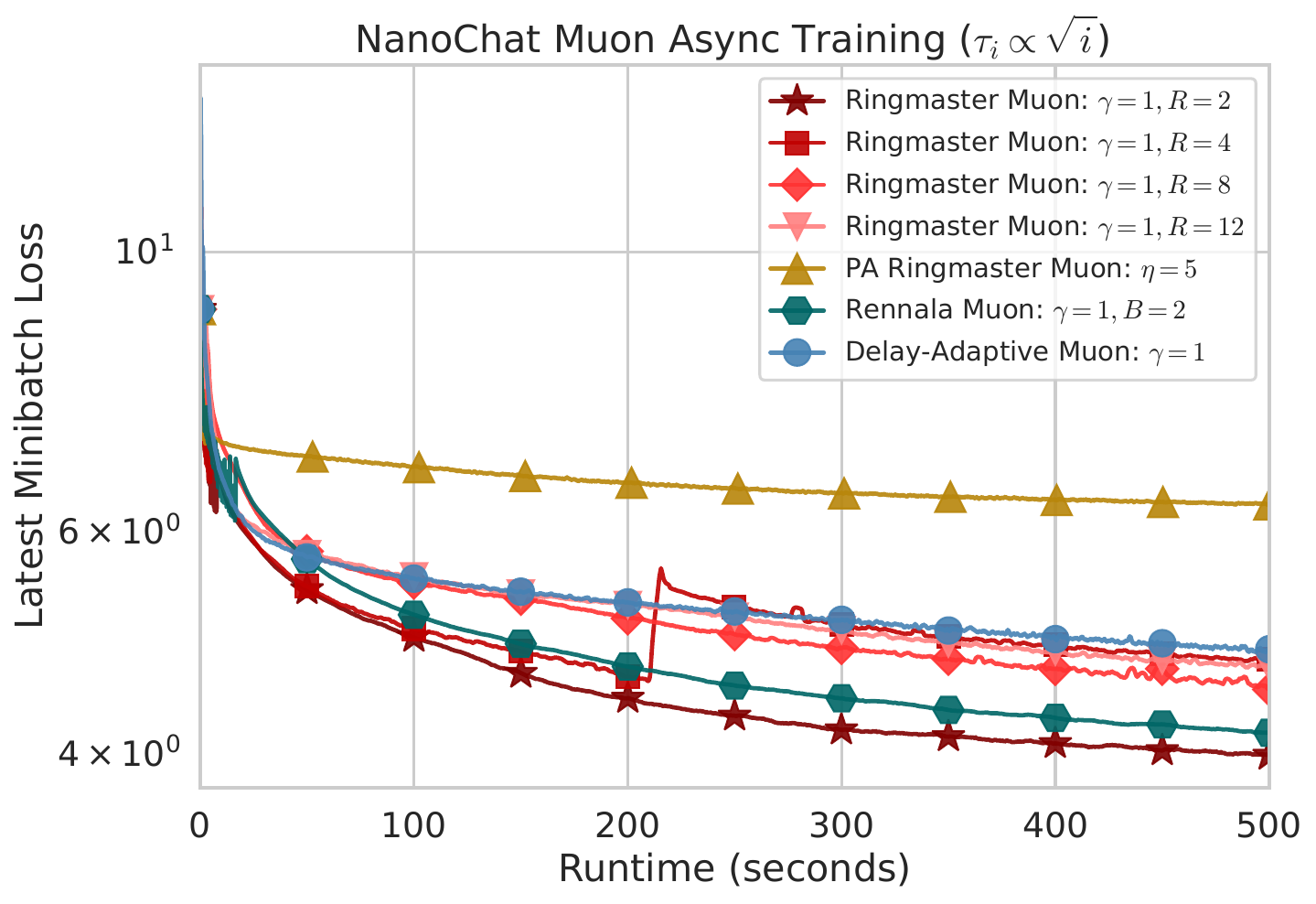}
        \small (b) $n=32$, sublinear delays
    \end{minipage}\hfill
    \begin{minipage}[t]{0.32\textwidth}
        \centering
        \includegraphics[width=\linewidth]{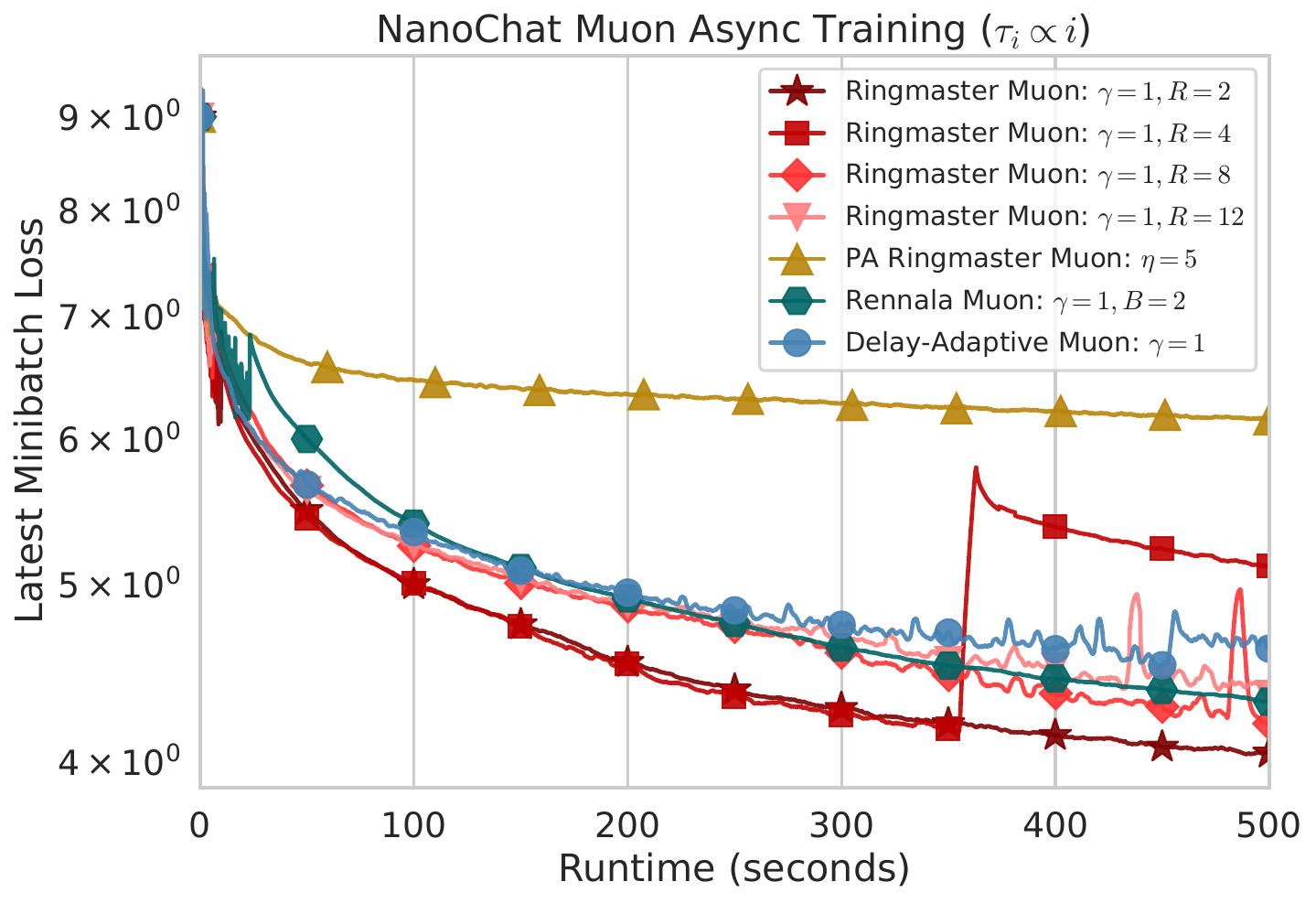}
        \small (c) $n=32$, linear delays
    \end{minipage}
    \caption[NanoChat comparison with 32 simulated workers]{NanoChat training loss versus simulated runtime for a $6$-layer, $192$-hidden, context-length-$2048$ model with $32$ simulated workers. The three panels correspond to nearly homogeneous, sublinear, and linear delay profiles, with delays anchored to the measured A100 mean gradient time. \ringmastermuon is competitive in the nearly homogeneous regime and achieves the lowest loss as heterogeneity increases, with the clearest margin in the linear-delay setting.}
    \label{fig:nanochat_32_experiments_placeholders}
\end{figure*}

\begin{figure*}[t]
    \centering
    \begin{minipage}[t]{0.32\textwidth}
        \centering
        \includegraphics[width=\linewidth]{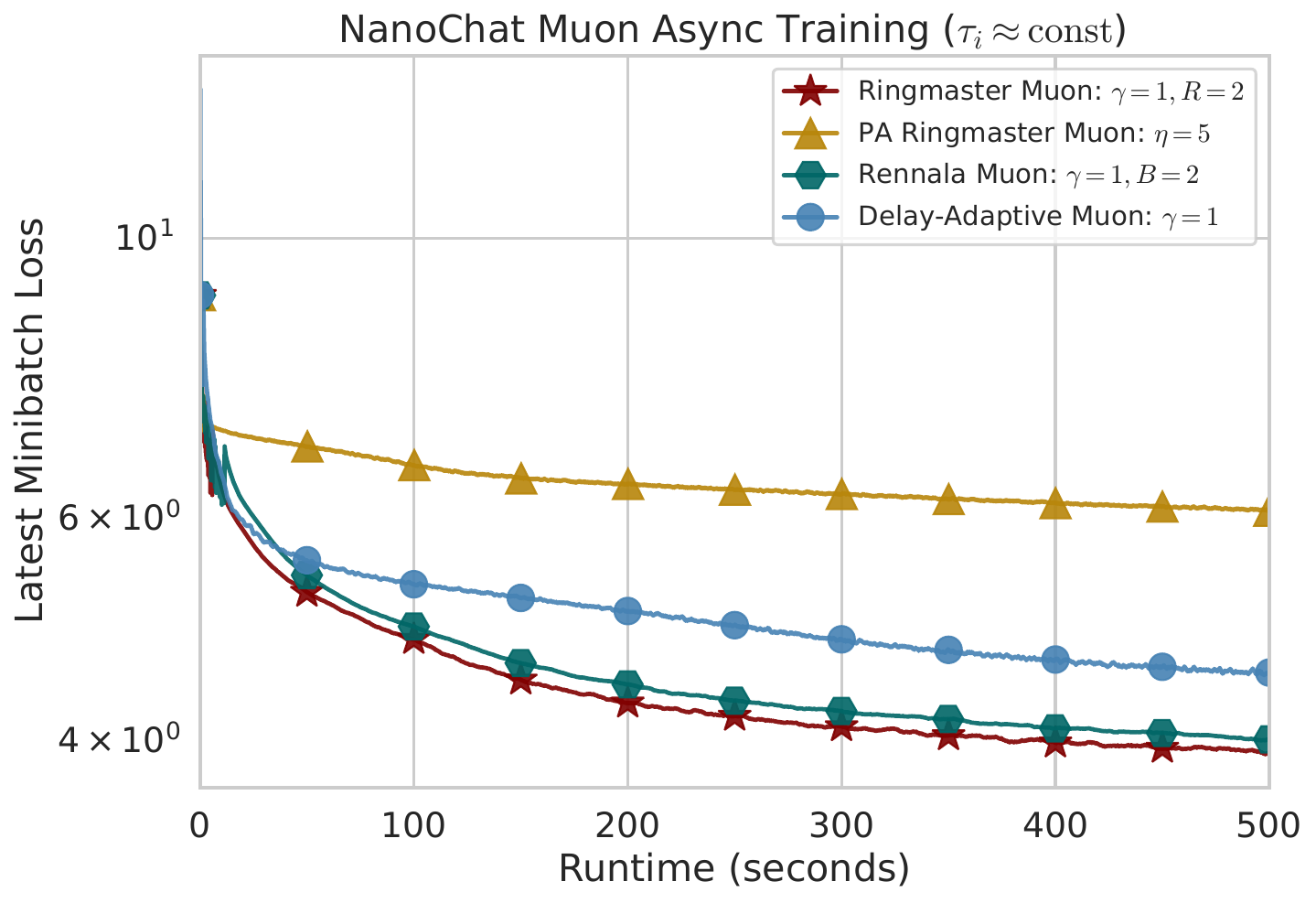}
        \small (a) $n=8$, nearly homogeneous delays
    \end{minipage}\hfill
    \begin{minipage}[t]{0.32\textwidth}
        \centering
        \includegraphics[width=\linewidth]{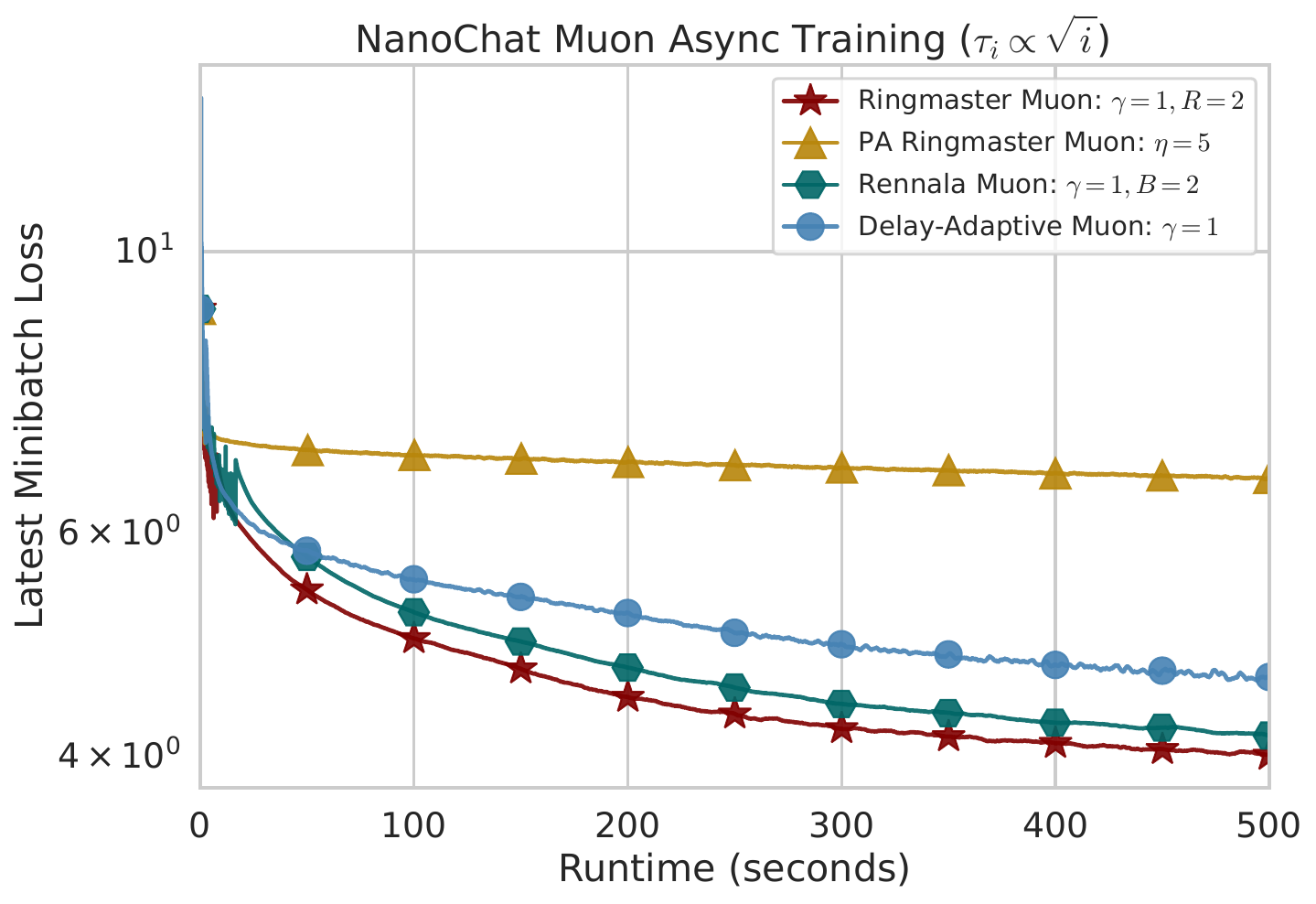}
        \small (b) $n=8$, sublinear delays
    \end{minipage}\hfill
    \begin{minipage}[t]{0.32\textwidth}
        \centering
        \includegraphics[width=\linewidth]{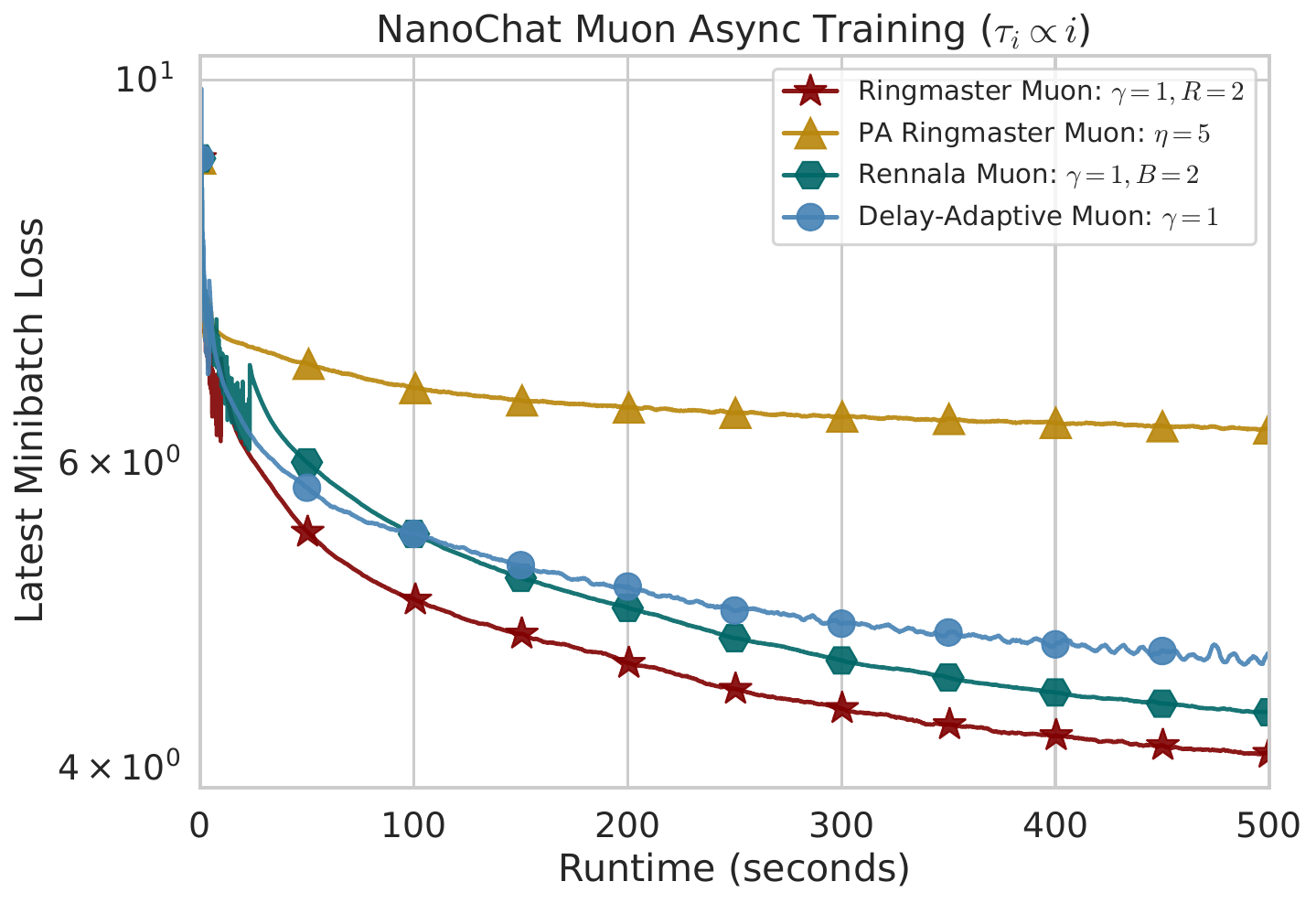}
        \small (c) $n=8$, linear delays
    \end{minipage}
    \caption[NanoChat comparison with 8 simulated workers]{NanoChat training loss versus simulated runtime for the same model with $8$ simulated workers. The ordering is preserved at the smaller worker count, again with a larger gap in the more heterogeneous delay regimes.}
    \label{fig:nanochat_8_experiments_placeholders}
\end{figure*}

\Cref{fig:nanochat_32_experiments_placeholders} shows that \ringmastermuon achieves the lowest training loss across all three $32$-worker regimes.
In the nearly homogeneous setting, the margin over the strongest asynchronous baselines is modest, consistent with the limited straggler effect.
As heterogeneity increases, the gain widens, with the clearest separation in the linear-delay regime.
\Cref{fig:nanochat_8_experiments_placeholders} shows the same qualitative ordering for $8$ workers.
Together with the quadratic benchmark, these results support the main empirical claim: delay thresholding is most useful in the heterogeneous, straggler-dominated settings targeted by the theory, and it does not require changing the underlying \muon update.
\section{Conclusion}\label{sec:conclusion}
We introduced \algn, an asynchronous LMO-based momentum method for stochastic nonconvex optimization, together with a parameter-agnostic variant.
Our analysis establishes iteration and time complexity guarantees under generalized $(L_0,L_1)$-smoothness and shows that, in the classical Euclidean smooth setting, the time bound recovers the optimal time complexity known for \ringmaster.
Experiments on stochastic quadratics and NanoChat pretraining further show that the benefits of asynchrony become more pronounced as worker-speed heterogeneity increases.

An important limitation of our analysis is that it assumes a homogeneous data distribution across workers; extending both the method and the theory to data-heterogeneous distributed settings is a natural next step.
Another limitation is that lower bounds for the asynchronous time complexity of general LMO-based methods in arbitrary norms remain open.
This question is challenging because matching lower bounds are not known even for the corresponding synchronous case. Developing such lower bounds for the parameter-agnostic case appears even harder.
\begin{ack}
    The research reported in this publication was supported by funding from King Abdullah University of Science and Technology (KAUST): i) KAUST Baseline Research Scheme, ii) CRG Grant ORFS-CRG12-2024-6460, and iii) Center of Excellence for Generative AI, under award number 5940.
\end{ack}
{
\small
\bibliographystyle{plainnat}
\bibliography{bib}
}
\newpage
\appendix
%
\section{Auxiliary lemmas}
We collect several auxiliary estimates used in the convergence proofs, including a summation/product bound, consequences of generalized $(L_0,L_1)$-smoothness, and a descent inequality for the LMO update.
\begin{lemma}[\cite{hubler2024parameter_agnostic}]
\label{lem:sum_product_bound}
    Consider parameters $q \in (0,1)$ and $p \geq 0$. Let $a, b \in \mathbb{N}$ such that $b \geq a \geq 2$.
    \begin{enumerate}
        \item The following inequality is true:
        \begin{equation}
            \label{eq:sum_product_bound_1}
            \prod^b_{t=a}(1-t^{-q}) \leq \exp\left(\frac{1}{1-q}\left(a^{1-q} - b^{1-q}\right)\right)
        \end{equation}
        \item If $p \leq q$, then the following inequality holds:
        \begin{equation}
            \label{eq:sum_product_bound_2}
            \sum^b_{t=a} t^{-p} \prod^t_{\tau =a} (1-\tau^{-q}) \leq (a-1)^{q-p} \exp\left(\frac{a^{1-q} - (a-1)^{1-q}}{1-q}\right).
        \end{equation}
        \item If the initial index satisfies $a \geq \max \{ p^{\frac{1}{1-q}}, (\frac{p-q}{2})^{\frac{1}{1-q}} \}$, then the following inequality holds:
        \begin{equation}
            \label{eq:sum_product_bound_3}
            \sum^b_{t=a} t^{-p} \prod^b_{\tau =t+1} (1-\tau^{-q}) \leq 2 \exp\left(\frac{1}{1-q}\right)(b+1)^{q-p}.
        \end{equation}
        In particular, for $p \leq 1$, these requirements are satisfied for all valid $a, b$.
    \end{enumerate}
\end{lemma}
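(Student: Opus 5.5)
The plan is to prove the three parts in order, each by elementary real analysis, using $\log(1-x)\le -x$ for the products and comparison of sums with integrals for the sums.

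\emph{Part 1.} Take logarithms:
\[
\log\prod_{t=a}^b(1-t^{-q}) = \sum_{t=a}^b\log(1-t^{-q}) \le -\sum_{t=a}^b t^{-q}.
\]
Since $t\mapsto t^{-q}$ is decreasing, $\sum_{t=a}^b t^{-q}\ge\int_a^{b+1}s^{-q}\,ds \ge \int_a^b s^{-q}\,ds=\frac{b^{1-q}-a^{1-q}}{1-q}$. Exponentiating gives \eqref{eq:sum_product_bound_1}.

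\emph{Part 2.} First apply the same estimate to the inner product: $\prod_{\tau=a}^t(1-\tau^{-q})\le\exp\bigl(-\sum_{\tau=a}^t\tau^{-q}\bigr)$. Then lower-bound the exponent with the left-shifted integral, $\sum_{\tau=a}^t\tau^{-q}\ge\int_{a}^{t+1}s^{-q}\,ds$. This makes the $t$-th summand at most $t^{-p}\exp\bigl(-(G(t+1)-G(a))\bigr)$, where $G(s)=s^{1-q}/(1-q)$. Next:
\begin{itemize}
\item Because $p\le q$, write $t^{-p}=t^{-q}\,t^{q-p}$.
\item Note that $t^{-q}\exp(-G(t+1))\le\int_{t}^{t+1}s^{-q}e^{-G(s)}\,ds$ up to the monotonicity of the integrand. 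Here I need care: $s^{-q}e^{-G(s)}$ is decreasing, so I would rather compare with $\int_{t-1}^{t}$.
\item The factor $t^{q-p}$ is increasing, so it cannot simply be pulled out as $(a-1)^{q-p}$.
\end{itemize}
The intended route is the substitution $u=G(s)$, which turns the sum into a Riemann-type bound on $\int e^{-u}\cdot(\text{polynomial weight})\,du$. The factor $\exp\bigl((a^{1-q}-(a-1)^{1-q})/(1-q)\bigr)$ should be exactly $e^{G(a)-G(a-1)}$, which arises from shifting the integration start from $a$ to $a-1$. The main obstacle is handling the growing weight $t^{q-p}$. My first attempt is to bound $t^{q-p}e^{-(G(t)-G(a-1))/2}$ uniformly, or to use that $s^{q-p}\,s^{-q}e^{-G(s)}$ is still integrable with $\int_{a-1}^\infty s^{-p}e^{-G(s)}ds\lesssim (a-1)^{q-p}e^{-G(a-1)}$ by integrating by parts once. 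Since the result is cited from \citet{hubler2024parameter_agnostic}, if constants do not close I would follow their bookkeeping.

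\emph{Part 3.} This part has the tail product $\prod_{\tau=t+1}^b$. Bound it by $\exp(-(G(b+1)-G(t+1)))$, so the sum is at most
\[
e^{-G(b+1)}\sum_{t=a}^b t^{-p}e^{G(t+1)}.
\]
The summand is now increasing in $t$ once $t$ is large enough that the derivative of $-p\log s+G(s+1)$ is nonnegative. The condition $a\ge p^{1/(1-q)}$ (and the second condition) is exactly what guarantees this monotonicity from $a$ on. Then compare the sum with $\int_a^{b+1}s^{-p}e^{G(s+1)}\,ds$ plus the last term. Finally, bound
\[
\int^{b+1}s^{-p}e^{G(s)}\,ds\le 2\,(b+1)^{q-p}e^{G(b+1)}
\]
via $\frac{d}{ds}\bigl(s^{q-p}e^{G(s)}\bigr)=s^{-p}e^{G(s)}\bigl(1+(q-p)s^{q-1}\bigr)$, which is at least $\tfrac12 s^{-p}e^{G(s)}$ when $(p-q)s^{q-1}\le\tfrac12$, i.e.\ under the stated lower bound on $a$. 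The shift from $G(s+1)$ versus $G(s)$ costs at most $e^{G(s+1)-G(s)}\le e^{1/(1-q)}$, which produces the stated constant $2\exp(1/(1-q))$. When $p\le1$, both thresholds are at most $1<a$, so the final remark holds.
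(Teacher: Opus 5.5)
The paper gives no proof of this lemma (it is imported from \citet{hubler2024parameter_agnostic}), so your proposal has to stand on its own. Part~1 is correct. Part~2 has a genuine gap that cannot be closed, because with the hypothesis $p\le q$ as written the inequality is false. Your integration by parts already points the wrong way:
\[
\int_{a-1}^{\infty}s^{-p}e^{-G(s)}\,ds=(a-1)^{q-p}e^{-G(a-1)}+(q-p)\int_{a-1}^{\infty}s^{q-p-1}e^{-G(s)}\,ds,
\]
and the remainder is nonnegative when $p<q$. Concretely, take $p=0$, $q=0.9$, $a=2$, $b=25$. The left-hand side of \eqref{eq:sum_product_bound_2} is about $2.10$, while the right-hand side is $e^{10(2^{0.1}-1)}\approx 2.05$.

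The intended hypothesis is $p\ge q$. This is also how the paper applies the bound: $p=3/4$, $q=1/2$ in the proof of \Cref{thm:iteration_complexity_param_agnostic}, in the estimates of \circledThree{} and \circledFour{}. Under $p\ge q$ the proof is short. For $t\ge a$ one has $t^{-p}=t^{q-p}t^{-q}\le(a-1)^{q-p}t^{-q}$. Set $P_t\eqdef\prod_{\tau=a}^{t}(1-\tau^{-q})$ and $P_{a-1}=1$. Then $t^{-q}P_t\le t^{-q}P_{t-1}=P_{t-1}-P_t$, so $\sum_{t=a}^{b}t^{-q}P_t\le 1\le e^{G(a)-G(a-1)}$. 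Your comparison $t^{-q}e^{-G(t+1)}\le\int_{t-1}^{t}s^{-q}e^{-G(s)}\,ds$ gives the stated factor $e^{G(a)-G(a-1)}$ directly.

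Part~3 follows the right template, but the threshold you invoke is not the one stated. Your inequality $\frac{d}{ds}\bigl(s^{q-p}e^{G(s)}\bigr)\ge\frac12 s^{-p}e^{G(s)}$ needs $s^{1-q}\ge 2(p-q)$. The statement only gives $a^{1-q}\ge (p-q)/2$, which is already implied by $a^{1-q}\ge p$. So, as written, the second condition in the max is redundant; most likely it should read $(2(p-q))^{1/(1-q)}$. With only $a^{1-q}\ge p$, your factor $1+(q-p)s^{q-1}$ is bounded below just by $q/p$. That gives the constant $\frac{p}{q}e^{1/(1-q)}$ instead of $2e^{1/(1-q)}$.

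There is also a smaller issue with monotonicity. The condition $a^{1-q}\ge p$ makes $h(s)=s^{-p}e^{G(s)}$ nondecreasing on $[a,\infty)$, because $h'/h=s^{-q}-p/s$. It does not make $s^{-p}e^{G(s+1)}$ nondecreasing. So shift first, $t^{-p}e^{G(t+1)}\le e^{1/(1-q)}h(t)\le e^{1/(1-q)}\int_t^{t+1}h(s)\,ds$, and then integrate.

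For $p\le 1$ the condition you actually need holds for every $a\ge 2$, since $2(p-q)\le 2(1-q)\le 2^{1-q}\le a^{1-q}$. That, rather than ``both thresholds are at most $1$'', is why the final remark is true. It covers the paper's uses ($q=1/2$, $p\in\{3/4,1\}$).
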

Throughout this section, we assume the objective function is symmetric $(L_0,L_1)$-smooth. To facilitate the proof of our main convergence theorem, we first establish an auxiliary lemma detailing several of its useful properties.
\begin{lemma}[\cite{chen2023generalized}]
    \label{lem:gen_smooth}
    Let $f$ satisfy Assumption~\ref{ass:lower_bound} \textit{(lower boundedness)} and Assumption~\ref{ass:gen_smooth} (generalized smoothness).
    Then, for all $x,y\in \R^d$, the following holds:
    \begin{gather}
        f(y) \le f(x) + \inp{\nabla f(x)}{y-x} +\frac{1}{2}\left(L_0 + L_1\norm{\nabla f(x)}_*\right)\exp\left(L_1\norm{x-y}\right)\norm{x-y}^2, \label{eq:lem_smooth_1}\\
        \norm{\nabla f(x) - \nabla f(y)}_* \le \left(L_0 + L_1\norm{\nabla f(x)}_*\right)\exp\left(L_1\norm{x-y}\right)\norm{x-y} , \label{eq:lem_smooth_2}\\
        \norm{\nabla f(x)}^2_* \le 4 \left(L_0 + L_1 \norm{\nabla f(x)}_*\right)(f(x) - f_*). \label{eq:lem_smooth_3}
    \end{gather}
\end{lemma}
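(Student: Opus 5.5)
The plan is to derive all three inequalities from a single Grönwall-type estimate for the gradient norm along the segment joining $x$ and $y$. Fix $x,y$ and set $r\coloneqq\norm{y-x}$, $z_t\coloneqq x+t(y-x)$ for $t\in[0,1]$, and $\varphi(t)\coloneqq\norm{\nabla f(z_t)}_*$. \Cref{ass:gen_smooth} forces $\nabla f$ to be locally Lipschitz, so $\varphi$ is continuous. I would then introduce the running maximum $M(t)\coloneqq\max_{s\in[0,t]}\varphi(s)$ and the envelope $g(t)\coloneqq L_0+L_1M(t)$. The key intermediate claim is
\begin{equation*}
    g(t)\le g(0)\,e^{L_1 r t},\qquad t\in[0,1].
\end{equation*}

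To prove the claim, I would apply \Cref{ass:gen_smooth} to the pair $z_s,z_0$ for $s\le t$. Every point between them is some $z_u$ with $u\le s$, so the supremum in the assumption is at most $g(s)$. This gives only $\varphi(s)\le\varphi(0)+g(s)\,s\,r$, which is too crude because it uses the worst value on the whole subinterval.

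I would therefore refine it by partitioning $[0,s]$ into small pieces and applying the assumption piece by piece. Summing over the pieces gives
\begin{equation*}
    \varphi(s)\le\varphi(0)+r\sum_j g(t_{j+1})\,\Delta t_j .
\end{equation*}
Since $g$ is monotone it is Riemann integrable, so refining the partition yields $\varphi(s)\le\varphi(0)+r\int_0^s g$. Taking the maximum over $s\le t$, multiplying by $L_1$, and adding $L_0$ gives
\begin{equation*}
    g(t)\le g(0)+L_1 r\int_0^t g ,
\end{equation*}
and the integral form of Grönwall's lemma gives the claim. This step, handling the supremum inside the assumption via the monotone envelope and a partition/Riemann-sum limit, is the main obstacle; everything afterwards is calculus.

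For \eqref{eq:lem_smooth_2}, I would use the same partition argument between $z_0$ and $z_1$:
\begin{equation*}
    \norm{\nabla f(y)-\nabla f(x)}_*\le r\int_0^1 g(t)\,dt\le g(0)\,\frac{e^{L_1 r}-1}{L_1}\le g(0)\,e^{L_1 r}\,r ,
\end{equation*}
where the last step uses $(e^{a}-1)/a\le e^{a}$. Applying the same bound on the segment $[x,z_t]$ gives $\norm{\nabla f(z_t)-\nabla f(x)}_*\le g(0)\,e^{L_1 r t}\,t r$. For \eqref{eq:lem_smooth_1}, I would write
\begin{equation*}
    f(y)-f(x)-\inp{\nabla f(x)}{y-x}=\int_0^1\inp{\nabla f(z_t)-\nabla f(x)}{y-x}\,dt .
\end{equation*}
Bounding the integrand by duality and the previous estimate gives at most $\int_0^1 g(0)\,e^{L_1 r t}\,t r^2\,dt\le\tfrac12\,g(0)\,e^{L_1 r}r^2$, and $g(0)=L_0+L_1\norm{\nabla f(x)}_*$.

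For \eqref{eq:lem_smooth_3}, let $G\coloneqq\norm{\nabla f(x)}_*$; if $G=0$ there is nothing to prove. By compactness of the unit ball, pick $u$ with $\norm{u}\le1$ and $\inp{\nabla f(x)}{u}=G$. Set $a\coloneqq L_0+L_1G$ and $t\coloneqq G/(2a)$. Then $L_1 t\le\tfrac12$, so $e^{L_1 t}\le e^{1/2}\le 2$. Applying \eqref{eq:lem_smooth_1} with $y=x-tu$ and using \Cref{ass:lower_bound},
\begin{equation*}
    f^*\le f(y)\le f(x)-tG+a t^2=f(x)-\frac{G^2}{2a}+\frac{G^2}{4a}=f(x)-\frac{G^2}{4a}.
\end{equation*}
Rearranging gives $G^2\le 4a\,(f(x)-f^*)$, which is \eqref{eq:lem_smooth_3}.
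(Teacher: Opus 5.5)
Your proof is correct. For \eqref{eq:lem_smooth_3} it is the paper's argument: the paper steps to $y=x+\frac{\alpha\norm{\nabla f(x)}_*}{L_0+L_1\norm{\nabla f(x)}_*}\lmo(\nabla f(x))$ with $\alpha=\nicefrac{1}{2}$. This is your $y=x-tu$ with $u=-\lmo(\nabla f(x))$, and the paper bounds the exponential factor by $e^{1/2}$ just as you do.

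You genuinely differ on \eqref{eq:lem_smooth_1}--\eqref{eq:lem_smooth_2}. The paper does not prove these. It imports them from \citet{chen2023generalized} for the Euclidean case and from \citet{khirirat2025better} for general norms. You instead derive them directly from \Cref{ass:gen_smooth}, using the running-maximum envelope $g$, right-endpoint (upper Darboux) sums for the monotone $g$, and the integral Gr\"onwall inequality. You then obtain \eqref{eq:lem_smooth_1} by applying \eqref{eq:lem_smooth_2} on $[x,z_t]$ and integrating in $t$.

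What your route buys:
\begin{itemize}
\item It makes the lemma self-contained.
\item It shows that nothing Euclidean is needed. You use only $\inp{a}{b}\le\norm{a}_*\norm{b}$, the triangle inequality for $\norm{\cdot}_*$, and $\norm{z_{t'}-z_t}=|t'-t|\,r$.
\item It gives the slightly sharper intermediate bound $\left(L_0+L_1\norm{\nabla f(x)}_*\right)(e^{L_1 r}-1)/L_1$, read as $\left(L_0+L_1\norm{\nabla f(x)}_*\right) r$ when $L_1=0$.
\end{itemize}
The paper's route is shorter, but it asks the reader to trust that the Euclidean proof transfers to dual norms.

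One small caveat. You say that \Cref{ass:gen_smooth} ``forces'' $\nabla f$ to be locally Lipschitz. This tacitly assumes $\nabla f$ is bounded on segments (e.g.\ $f\in C^1$); otherwise the supremum in the assumption can be infinite and the assumption vacuous on such segments. The paper and its sources also leave this regularity implicit. Your argument in fact needs only $\sup_{t\in[0,1]}\varphi(t)<\infty$, not continuity of $\varphi$.
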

\begin{proof}
    Inequalities \eqref{eq:lem_smooth_1} and \eqref{eq:lem_smooth_2} were originally proven by \cite{chen2023generalized} for the Euclidean case, and subsequently extended to the non-Euclidean setting by \cite{khirirat2025better}.

    Regarding inequality \eqref{eq:lem_smooth_3}, the Euclidean case was established by \cite{gorbunov2025methods}.
    Here, we extend this result to the non-Euclidean case.
    To do so, we define $y = x + \frac{\alpha \norm{\nabla f(x)}_*}{L_0 + L_1 \norm{\nabla f(x)}_*}\operatorname{lmo}(\nabla f(x))$ and apply \eqref{eq:lem_smooth_1}:
    \begin{align*}
        f_* &\le f(y) \le f(x) + \inp{\nabla f(x)}{y-x} +\frac{1}{2}\left(L_0 + L_1\norm{\nabla f(x)}_*\right)\exp\left(L_1\norm{x-y}\right)\norm{x-y}^2\\
        &\le f(x) + \frac{\alpha \norm{\nabla f(x)}_*}{L_0 + L_1 \norm{\nabla f(x)}_*}\inp{\nabla f(x)}{\operatorname{lmo}(\nabla f(x))}\\
        &\quad + \frac{1}{2}(L_0 + L_1 \norm{\nabla f(x)}_*) \frac{\alpha^2 \norm{\nabla f(x)}_*^2}{(L_0 + L_1 \norm{\nabla f(x)}_*)^2}\exp\left(\frac{\alpha L_1\norm{\nabla f(x)}_*}{L_0 + L_1 \norm{\nabla f(x)}_*}\right)\\
        &\le f(x) - \frac{\alpha \norm{\nabla f(x)}_*^2}{L_0 + L_1 \norm{\nabla f(x)}_*} + \frac{\alpha^2 \exp(\alpha)}{2} \frac{\norm{\nabla f(x)}_*^2}{L_0 + L_1 \norm{\nabla f(x)}_*} ~.
    \end{align*}
    Setting $\alpha = \nicefrac{1}{2}$, we obtain:
    \begin{align*}
        f_* &\le f(x) - \frac{\norm{\nabla f(x)}_*^2}{2(L_0 + L_1 \norm{\nabla f(x)}_*)} +\frac{ \norm{\nabla f(x)}_*^2}{4(L_0 + L_1 \norm{\nabla f(x)}_*)}\\
        &= f(x) - \frac{ \norm{\nabla f(x)}_*^2}{4(L_0 + L_1 \norm{\nabla f(x)}_*)} ~.
    \end{align*}
\end{proof}
The following lemma establishes a standard descent inequality for the update step $x_{k+1} = x_{k} + \eta_k\operatorname{lmo}(m_{k+1})$.
\begin{lemma}[\cite{khirirat2025better}]
    \label{lem:descent_lemma}
    Under Assumption~\ref{ass:lower_bound} \textit{(lower boundedness)} Assumption~\ref{ass:unbiased_bounded_variance} \textit{(bounded variance)}  and Assumption~\ref{ass:gen_smooth} \textit{(generalized smoothness)}, the iterates generated by \Cref{alg} satisfy:
    \begin{equation}
        \sum_{k=0}^{K-1}\eta_k\phi_k\norm{\nabla f(x_k)}_* \leq \Delta_0 + 2\sum^{K-1}_{k=0}\eta_k\norm{\hat{e}_{k+1}}_* +\frac{L_0}{2}\sum^{K-1}_{k=0}\exp\left(L_1\eta_k\right)\eta_k^2 ~,
    \end{equation}
    where we define the error term as $\hat{e}_{k+1} \eqdef m_{k+1} - \nabla f(x_k)$ and the constant $\phi_k  = 1-\frac{L_1\eta_k}{2}\exp(L_1\eta_k)$.
\end{lemma}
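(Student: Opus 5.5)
The plan is to combine the one-step descent inequality \eqref{eq:lem_smooth_1} with the LMO identity $\inp{m_{k+1}}{\lmo(m_{k+1})} = -\norm{m_{k+1}}_*$ and then sum over $k$. Fix $k$ and set $x=x_k$, $y=x_{k+1}=x_k+\eta_k\lmo(m_{k+1})$. Since $\norm{\lmo(m_{k+1})}\le 1$, we have $\norm{x_{k+1}-x_k}\le \eta_k$. Hence \eqref{eq:lem_smooth_1} gives
\begin{equation*}
f(x_{k+1}) \le f(x_k) + \eta_k\inp{\nabla f(x_k)}{\lmo(m_{k+1})} + \tfrac{1}{2}\bigl(L_0+L_1\norm{\nabla f(x_k)}_*\bigr)\exp(L_1\eta_k)\eta_k^2 .
\end{equation*}

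The key step is to bound the linear term. Write $\nabla f(x_k) = m_{k+1} - \hat e_{k+1}$. This gives
\begin{equation*}
\inp{\nabla f(x_k)}{\lmo(m_{k+1})} = -\norm{m_{k+1}}_* - \inp{\hat e_{k+1}}{\lmo(m_{k+1})} \le -\norm{m_{k+1}}_* + \norm{\hat e_{k+1}}_* ,
\end{equation*}
where the inequality uses the definition of the dual norm and $\norm{\lmo(m_{k+1})}\le1$. The case $m_{k+1}=0$ holds trivially, since $\lmo(0)=0$. By the reverse triangle inequality, $-\norm{m_{k+1}}_*\le -\norm{\nabla f(x_k)}_* + \norm{\hat e_{k+1}}_*$. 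Therefore the linear term is at most $-\eta_k\norm{\nabla f(x_k)}_* + 2\eta_k\norm{\hat e_{k+1}}_*$, which is where the factor $2$ in the statement comes from.

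Next I would split the quadratic term into its $L_0$ part and its $L_1$ part. The $L_1$ part, $\frac{L_1\eta_k^2}{2}\exp(L_1\eta_k)\norm{\nabla f(x_k)}_*$, is moved to the left-hand side and combined with $\eta_k\norm{\nabla f(x_k)}_*$. This produces exactly $\eta_k\phi_k\norm{\nabla f(x_k)}_*$ with $\phi_k = 1-\frac{L_1\eta_k}{2}\exp(L_1\eta_k)$. The per-step inequality becomes
\begin{equation*}
\eta_k\phi_k\norm{\nabla f(x_k)}_* \le f(x_k)-f(x_{k+1}) + 2\eta_k\norm{\hat e_{k+1}}_* + \tfrac{L_0}{2}\exp(L_1\eta_k)\eta_k^2 .
\end{equation*}

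Summing over $k=0,\dots,K-1$ telescopes the function values to $f(x_0)-f(x_K)$. By \Cref{ass:lower_bound}, this is at most $\Delta_0$, which yields the claim. The inequality holds pathwise, so taking expectations afterwards is immediate. Neither the delays nor \Cref{ass:unbiased_bounded_variance} enter here, because all stochasticity and staleness are absorbed into $\hat e_{k+1}$.

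I expect no real obstacle in this argument. The only points needing care are the sign convention of the LMO, which is a minimizer so that $\inp{m}{\lmo(m)}=-\norm{m}_*$, and checking that \eqref{eq:lem_smooth_1} is applied with the step bound $\norm{x_{k+1}-x_k}\le\eta_k$ inside the exponential, which is valid by monotonicity.
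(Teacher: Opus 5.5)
Your proposal is correct and follows essentially the same argument as the paper. It applies \eqref{eq:lem_smooth_1} with $\norm{x_{k+1}-x_k}\le\eta_k$ and splits the linear term through $m_{k+1}$ using Hölder's inequality and the identity $\inp{m_{k+1}}{\lmo(m_{k+1})}=-\norm{m_{k+1}}_*$; the triangle inequality then gives the factor $2$, and the result follows by moving the $L_1$ term to the left and telescoping with \Cref{ass:lower_bound}.
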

\begin{proof}
    By \Cref{lem:gen_smooth}, we have:
    \begin{align*}
        f(x_{k+1})
        &\le f(x_k) + \inp{\nabla f(x_k)}{x_{k+1}-x_k}\\
            &\quad +\frac{1}{2}\left(L_0 + L_1\norm{\nabla f(x_k)}_*\right)\exp\left(L_1\norm{x_{k+1}-x_k}\right)\norm{x_{k+1}-x_k}^2\\
        &\le f(x_k) + \eta_k\inp{\nabla f(x_k)}{\operatorname{lmo}(m_{k+1})} +\frac{1}{2}\left(L_0 + L_1\norm{\nabla f(x_k)}_*\right)\exp\left(L_1\eta_k\right)\eta_k^2\\
        &= f(x_k) + \eta_k\inp{\nabla f(x_k) - m_{k+1}}{\operatorname{lmo}(m_{k+1})} + \eta_k\inp{m_{k+1}}{\operatorname{lmo}(m_{k+1})} \\
            &\quad +\frac{1}{2}\left(L_0 + L_1\norm{\nabla f(x_k)}_*\right)\exp\left(L_1\eta_k\right)\eta_k^2\\
        &\le f(x_k) + \eta_k\norm{\nabla f(x_k) - m_{k+1}}_* - \eta_k\norm{m_{k+1}}_*\\
            &\quad + \frac{1}{2}\left(L_0 + L_1\norm{\nabla f(x_k)}_*\right)\exp\left(L_1\eta_k\right)\eta_k^2\\
        &\le f(x_k) + 2\eta_k\norm{\nabla f(x_k) - m_{k+1}}_* - \eta_k\norm{\nabla f(x_k)}_*\\
            &\quad + \frac{1}{2}\left(L_0 + L_1\norm{\nabla f(x_k)}_*\right)\exp\left(L_1\eta_k\right)\eta_k^2 ~.
    \end{align*}
    In the steps above, the second inequality follows from the property $\norm{\operatorname{lmo}(\cdot)} \le 1$.
    The third inequality applies Hölder's inequality alongside the definition of the linear minimization oracle (which implies $\inp{m_{k+1}}{\operatorname{lmo}(m_{k+1})} = -\norm{m_{k+1}}_*$).
    Finally, the last step utilizes the triangle inequality.

    Rearranging the terms to isolate $\norm{\nabla f(x_k)}_*$ and summing over $k=0, \dots, K-1$ yields the desired result.
\end{proof}
We remind that momentum updates defined as follows 
\begin{align*}
    m_{1} &= (1-\alpha_0)m_0 + \alpha_{\text{init}} g_0,~\text{for } k = 0;\\
    m_{k+1} &=  (1-\alpha_k)m_k + \alpha_k g_k,~\text{for } k \ge 1~.
\end{align*}
We define the error term as follows
$$
    \hat{e}_{k+1} \eqdef m_{k+1} - \nabla f(x_k) ~.
$$
\begin{lemma}
\label{lem:error_bound}
    Suppose Assumption~\ref{ass:unbiased_bounded_variance} \textit{(bounded variance)} and Assumption~\ref{ass:gen_smooth} \textit{(generalized smoothness)}.
    If stepsize $\eta_k$ is non-increasing and momentum parameter $\alpha_k$ defined as $\alpha_0 =1$ and $\alpha_{\mathrm{init}},~ \alpha_k \in (0,1]$ for $k \ge 1$, then  the iterates of \Cref{alg} satisfy
    \begin{align*}
        \Exp{\norm{\hat{e}_{k+1}}_*}
        &\le \sum_{t=1}^k \prod_{j=t}^{k}(1-\alpha_j) \left(L_0 + L_1\Exp{\norm{\nabla f(x_t)}_*}\right)\exp(L_1\eta_{t-1})\eta_{t-1}\\
            &\quad + \sum_{t=1}^k \prod_{j=t+1}^{k}(1-\alpha_j) \alpha_t (L_0 + L_1 \Exp{\norm{\nabla f(x_t)}_*})\exp\left(L_1 R_t\eta_{t-\delta_t}\right)R_t\eta_{t-\delta_t}\\
                &\quad + \rho  \sigma\sqrt{\sum_{t=1}^k \prod_{j=t+1}^{k}(1-\alpha_j)^2 \alpha_t^2 + \prod_{j=1}^{k}(1-\alpha_j)^2 \alpha_{\mathrm{init}}^2 }\\
                &\quad + \prod_{t = 1}^{k}(1-\alpha_t)(1-\alpha_{\mathrm{init}})\Exp{\norm{\nabla f(x_0)}_*}.
    \end{align*}
\end{lemma}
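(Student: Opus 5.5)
Unroll the momentum recursion and split $\hat e_{k+1}$ into a drift part, a staleness part, a martingale noise part, and an initialization part. Set $\epsilon_t \eqdef g_t - \nabla f(x_{t-\delta_t})$. For $k\ge1$ we have
\begin{equation*}
\hat e_{k+1} = (1-\alpha_k)\hat e_k + (1-\alpha_k)\bigl(\nabla f(x_{k-1}) - \nabla f(x_k)\bigr) + \alpha_k\bigl(\nabla f(x_{k-\delta_k}) - \nabla f(x_k)\bigr) + \alpha_k \epsilon_k .
\end{equation*}
At $k=0$ we have $\delta_0=0$ and $\alpha_0=1$, so $\hat e_1 = \alpha_{\mathrm{init}}\epsilon_0 - (1-\alpha_{\mathrm{init}})\nabla f(x_0)$. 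Iterating down to $t=1$ gives
\begin{equation*}
\begin{aligned}
\hat e_{k+1} ={}& \sum_{t=1}^k \prod_{j=t}^k(1-\alpha_j)\bigl(\nabla f(x_{t-1})-\nabla f(x_t)\bigr) + \sum_{t=1}^k \prod_{j=t+1}^k(1-\alpha_j)\alpha_t\bigl(\nabla f(x_{t-\delta_t})-\nabla f(x_t)\bigr) \\
&+ \sum_{t=1}^k \prod_{j=t+1}^k(1-\alpha_j)\alpha_t\epsilon_t + \prod_{j=1}^k(1-\alpha_j)\bigl(\alpha_{\mathrm{init}}\epsilon_0 - (1-\alpha_{\mathrm{init}})\nabla f(x_0)\bigr).
\end{aligned}
\end{equation*}
Take $\norm{\cdot}_*$, apply the triangle inequality, and then take expectations term by term. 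The last deterministic piece gives the final line of the claim directly.

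For the drift terms, apply \eqref{eq:lem_smooth_2} with base point $x_t$ and $\norm{x_t-x_{t-1}} = \eta_{t-1}\norm{\lmo(m_t)}\le\eta_{t-1}$. This yields $(L_0+L_1\norm{\nabla f(x_t)}_*)\exp(L_1\eta_{t-1})\eta_{t-1}$, which is linear in $\norm{\nabla f(x_t)}_*$, so expectations pass through. For the staleness terms, again use \eqref{eq:lem_smooth_2} centered at $x_t$. Bound $\norm{x_t - x_{t-\delta_t}} \le \sum_{s=t-\delta_t}^{t-1}\eta_s \le \delta_t\,\eta_{t-\delta_t}$, using that $\eta$ is non-increasing. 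Because the gradient was accepted, $\delta_t < R_t$, so the bound becomes $R_t\eta_{t-\delta_t}$. Monotonicity of $u\mapsto u e^{L_1 u}$ then gives the second line of the claim.

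For the noise, pass to the Euclidean norm via $\norm{z}_*\le\rho\norm{z}_2$ and apply Jensen: $\Exp{\norm{Z}_2}\le\sqrt{\Exp{\norm{Z}_2^2}}$. Each $\epsilon_t$ is unbiased conditionally on the history at the time the gradient was requested, with second moment at most $\sigma^2$. The cross terms $\Exp{\inp{\epsilon_s}{\epsilon_t}}$ therefore vanish and the variance sums to the stated square-root expression, including the $\alpha_{\mathrm{init}}^2$ term from $\epsilon_0$.

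The main obstacle is justifying the vanishing cross terms under asynchrony. Gradients arrive out of order, so $\epsilon_t$ is computed at an iterate indexed $t-\delta_t$ but is used at step $t$. I would handle this by indexing each sample $\xi^{i_t}_{t-\delta_t}$ by the time its computation started. Under the fixed computation model the accept/reject pattern, the delays, and the worker order are deterministic and independent of the samples. Conditionally on all other samples, the fresh sample $\xi_{t-\delta_t}^{i_t}$ is independent of the point $x_{t-\delta_t}$. For $s\ne t$, condition on everything except the sample started later; this makes the pairwise product mean zero. Equivalently, order the noises by start time to obtain a martingale difference sequence. The remaining work is bookkeeping, and is routine once this is established.
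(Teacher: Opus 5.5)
Your proposal is correct and follows the paper's route: the same one-step recursion for $\hat e_{k+1}$ unrolled down to $\hat e_1 = \alpha_{\mathrm{init}}\epsilon_0 - (1-\alpha_{\mathrm{init}})\nabla f(x_0)$, then \eqref{eq:lem_smooth_2} centered at $x_t$ for the drift and staleness terms (using $\delta_t < R_t$ and non-increasing $\eta_k$), and $\norm{\cdot}_* \le \rho\norm{\cdot}_2$ plus Jensen for the noise. The one difference is that you justify the vanishing cross terms, via sample-independent delays under deterministic timing and ordering the noises by computation start time to obtain a martingale difference sequence; the paper only cites ``iid ness'', which is not literally true because the $\epsilon_t$ are evaluated at history-dependent points, so your argument is what makes that step rigorous.
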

Throughout the paper, an empty product is interpreted as $1$ and an empty sum as $0$.
\begin{proof}
Using momentum update, we derive for any $k \geq 1$: 
\begin{eqnarray*}
    \hat{e}_{k+1}
    &=& m_{k+1} - \nabla f(x_k) \\
    &=& (1-\alpha_k)m_k +\alpha_k g_k - \nabla f(x_k) \\
    &=& (1-\alpha_k)(m_k- \nabla f(x_{k-1})) + (1-\alpha_k)(\nabla f(x_{k-1}) - \nabla f(x_k)) +\alpha_k g_k - \alpha_k\nabla f(x_k) \\
    &=& (1-\alpha_k)(m_k- \nabla f(x_{k-1})) + (1-\alpha_k)(\nabla f(x_{k-1}) - \nabla f(x_k)) \\
    && +\alpha_k (\nabla f(x_{k-\delta_k}) - \nabla f(x_k))  +\alpha_k( g_k - \nabla f(x_{k-\delta_k}) ) ~.
\end{eqnarray*}
Denoting $D_k \eqdef \nabla f(x_{k-1}) - \nabla f(x_k)$, $S_k \eqdef \nabla f(x_{k-\delta_k}) - \nabla f(x_k)$, and $e_k \eqdef g_k - \nabla f(x_{k-\delta_k})$, we obtain
\begin{eqnarray}
    \hat{e}_{k+1}
    &=& (1-\alpha_k) \hat{e}_{k} + (1-\alpha_k) D_k + \alpha_k S_k +\alpha_k e_k \notag\\
    &=&\prod_{t = 1}^{k}(1-\alpha_t) \hat{e}_{1} + \sum_{t=1}^k \prod_{j=t+1}^{k}(1-\alpha_j)(1-\alpha_t) D_t \notag\\
    && + \sum_{t=1}^k \prod_{j=t+1}^{k}(1-\alpha_j) \alpha_t S_t + \sum_{t=1}^k \prod_{j=t+1}^{k}(1-\alpha_j) \alpha_t e_t ~. \label{eq:sbjdhchsbdbs}
\end{eqnarray}

Now we consider $\hat{e}_{1}$ at iteration $k=1$. Using $\alpha_0=1$ and $\delta_0=0$ (which implies $x_{0-\delta_0} =x_0$), we have 
\begin{eqnarray}
    \hat{e}_{1} &=& m_1 - \nabla f(x_0)  \notag\\
    &=& (1-\alpha_0)m_0 + \alpha_{\mathrm{init}} g_0 - \nabla f(x_0) \notag\\
    &=& \alpha_{\mathrm{init}} (g_0 - \nabla f(x_0)) - (1-\alpha_{\mathrm{init}})\nabla f(x_0)\notag\\
    &=& \alpha_{\mathrm{init}} e_0 - (1-\alpha_{\mathrm{init}})\nabla f(x_0)~. \label{eq:hgsvgvcdvsgdusu} 
\end{eqnarray}

Plugging \eqref{eq:hgsvgvcdvsgdusu} into \eqref{eq:sbjdhchsbdbs}, we have
\begin{eqnarray*}
    \hat{e}_{k+1} &=&  \prod_{t = 1}^{k}(1-\alpha_t) \alpha_{\mathrm{init}} e_0 - \prod_{t = 1}^{k}(1-\alpha_t)(1-\alpha_{\mathrm{init}})\nabla f(x_0)\\
    && +\sum_{t=1}^k \prod_{j=t+1}^{k}(1-\alpha_j)(1-\alpha_t) D_t  + \sum_{t=1}^k \prod_{j=t+1}^{k}(1-\alpha_j) \alpha_t S_t + \sum_{t=1}^k \prod_{j=t+1}^{k}(1-\alpha_j) \alpha_t e_t\\
    &=& \underbrace{\sum_{t=1}^k \prod_{j=t+1}^{k}(1-\alpha_j)(1-\alpha_t) D_t}_{\eqdef\cT_1} + \underbrace{\sum_{t=1}^k \prod_{j=t+1}^{k}(1-\alpha_j) \alpha_t S_t}_{\eqdef \cT_2} \\
    && + \underbrace{\sum_{t=1}^k \prod_{j=t+1}^{k}(1-\alpha_j) \alpha_t e_t + \prod_{t = 1}^{k}(1-\alpha_t) \alpha_{\mathrm{init}} e_0}_{\eqdef \cT_3} - \underbrace{\prod_{t = 1}^{k}(1-\alpha_t)(1-\alpha_{\mathrm{init}})\nabla f(x_0)}_{\eqdef \cT_4}.
\end{eqnarray*}
To continue the proof, we need to bound the norm of terms $\cT_1$, $\cT_2$, $\cT_3$ and $\cT_4$. By triangle inequality, we get
\begin{eqnarray*}
    \norm{\cT_1}_* &\le& \sum_{t=1}^k \prod_{j=t+1}^{k}(1-\alpha_j)(1-\alpha_t) \norm{D_t}_*\\
    &=& \sum_{t=1}^k \prod_{j=t}^{k}(1-\alpha_j) \norm{ \nabla f(x_{t-1}) - \nabla f(x_t)}_*\\
    &\le& \sum_{t=1}^k \prod_{j=t}^{k}(1-\alpha_j) \left(L_0 + L_1\norm{\nabla f(x_t)}_*\right)\exp(L_1\eta_{t-1})\eta_{t-1} ~.
\end{eqnarray*}
We used \Cref{lem:gen_smooth} in the last inequality.
Bounding $\norm{\cT_2}_*$, we obtain
\begin{eqnarray*}
    \norm{\cT_2}_* &\le& \sum_{t=1}^k \prod_{j=t+1}^{k}(1-\alpha_j) \alpha_t \norm{S_t}_*\\
    &=& \sum_{t=1}^k \prod_{j=t+1}^{k}(1-\alpha_j) \alpha_t \norm{\nabla f(x_{t-\delta_t}) - \nabla f(x_t)}_*\\
    &\le& \sum_{t=1}^k \prod_{j=t+1}^{k}(1-\alpha_j) \alpha_t \(L_0 + L_1 \norm{\nabla f(x_t)}_*\)\exp\left(L_1 \norm{x_t - x_{t-\delta_t}}\right)\norm{x_t - x_{t-\delta_t}}\\
    &\le& \sum_{t=1}^k \prod_{j=t+1}^{k}(1-\alpha_j) \alpha_t \(L_0 + L_1 \norm{\nabla f(x_t)}_*\)\\
    &&\quad \times \exp\left(L_1 \norm{\sum^{t-1}_{\tau = t-\delta_t}\eta_{\tau}\textrm{lmo}(m_{\tau+1})}\right)\norm{\sum^{t-1}_{\tau = t-\delta_t}\eta_{\tau}\textrm{lmo}(m_{\tau+1})}.
\end{eqnarray*}
Assuming that stepsize $\eta_k$ is non-increasing, we have
\begin{eqnarray*}
    \norm{\cT_2}_* &\le& \sum_{t=1}^k \prod_{j=t+1}^{k}(1-\alpha_j) \alpha_t \( L_0 + L_1 \norm{\nabla f(x_t)}_* \) \exp\left(L_1 R_t\eta_{t-\delta_t}\right)R_t\eta_{t-\delta_t} ~.
\end{eqnarray*}
Bounding $\norm{\cT_3}_*$, we obtain
\begin{eqnarray*}
    \norm{\cT_3}_* &\le& \rho \norm{\sum_{t=1}^k \prod_{j=t+1}^{k}(1-\alpha_j) \alpha_t e_t +\prod_{t = 1}^{k}(1-\alpha_t) \alpha_{\mathrm{init}} e_0}_2.
\end{eqnarray*}
Taking expectation, we have
\begin{eqnarray*}
    \Exp{\norm{\cT_3}_*} &\le& \rho \Exp{\norm{\sum_{t=1}^k \prod_{j=t+1}^{k}(1-\alpha_j) \alpha_t e_t +\prod_{t = 1}^{k}(1-\alpha_t) \alpha_{\mathrm{init}} e_0}_2}\\
    &\le& \rho \sqrt{\Exp{\norm{\sum_{t=1}^k \prod_{j=t+1}^{k}(1-\alpha_j) \alpha_t e_t +\prod_{t = 1}^{k}(1-\alpha_t) \alpha_{\mathrm{init}} e_0}_2^2}}\\
    &\le& \rho \sqrt{\sum_{t=1}^k \prod_{j=t+1}^{k}(1-\alpha_j)^2 \alpha_t^2 \Exp{\norm{e_t}_2^2}+ \prod_{j=1}^{k}(1-\alpha_j)^2 \alpha_{\mathrm{init}}^2 \Exp{\norm{e_0}_2^2}} \\
    &\le& \rho \sigma\sqrt{\sum_{t=1}^k \prod_{j=t+1}^{k}(1-\alpha_j)^2 \alpha_t^2 + \prod_{j=1}^{k}(1-\alpha_j)^2 \alpha_{\mathrm{init}}^2 } ~,
\end{eqnarray*}
where the second inequality uses Jensen's inequality and the third uses iid ness.
Bounding $\norm{\cT_4}_*$, we obtain
\begin{eqnarray*}
    \norm{\cT_4}_* &\le& \prod_{t=1}^{k}(1-\alpha_t)(1-\alpha_{\mathrm{init}})\norm{\nabla f(x_0)}_*.
\end{eqnarray*}
Thus, we have
\begin{eqnarray*}
    \Exp{\norm{\hat{e}_{k+1}}_*}
    &\le&  \sum_{t=1}^k \prod_{j=t}^{k}(1-\alpha_j) \left(L_0 + L_1\Exp{\norm{\nabla f(x_t)}_*}\right)\exp(L_1\eta_{t-1})\eta_{t-1}\\
    && + \sum_{t=1}^k \prod_{j=t+1}^{k}(1-\alpha_j) \alpha_t (L_0 + L_1 \Exp{\norm{\nabla f(x_t)}_*})\exp\left(L_1 R_t\eta_{t-\delta_t}\right)R_t\eta_{t-\delta_t}\\
    && + \rho \sigma\sqrt{\sum_{t=1}^k \prod_{j=t+1}^{k}(1-\alpha_j)^2 \alpha_t^2 + \prod_{j=1}^{k}(1-\alpha_j)^2 \alpha_{\mathrm{init}}^2 }\\
    && + \prod_{t = 1}^{k}(1-\alpha_t)(1-\alpha_{\mathrm{init}})\Exp{\norm{\nabla f(x_0)}_*}.    
\end{eqnarray*}
\end{proof}
\newpage
\section{Proof of \Cref{thm:iteration_complexity}}
\label{proof:iteration_complexity}
This section proves the fixed-parameter result and derives the parameter choices by balancing the terms in the convergence bound.
We first restate the theorem and then optimize the constant momentum, threshold, and stepsize parameters.

Let us first restate the theorem.
\begin{restate-theorem}{\ref{thm:iteration_complexity}}
[Fixed-Parameter Iteration Complexity; Proof in \Cref{proof:iteration_complexity}]
    Suppose Assumptions~\ref{ass:lower_bound}--\ref{ass:gen_smooth} hold.
    Run \Cref{alg} with $\alpha_0=1$ and with the constant momentum parameter and delay threshold
    \begin{equation*}
        \alpha_k = \alpha_{\mathrm{init}} = \alpha = \min\left\{1, \frac{\Delta_0^{\nicefrac{1}{2}}L_0^{\nicefrac{1}{2}}}{\rho\sigma K^{\nicefrac{1}{2}}}\right\}, \quad \forall k \geq 1,
        \quad
        R_k = R = \frac{1}{\alpha}, \quad \forall k \ge 0,
    \end{equation*}
    with the convention that, when $\sigma=0$, we take $\alpha=1$,
    and choose the constant stepsize as follows:
    \begin{itemize}
        \item If $L_1=0$, take
        \begin{equation*}
            \eta_k = \eta = \min\left\{\sqrt{\frac{\Delta_0}{L_0 K}}, \frac{\Delta_0^{\nicefrac{3}{4}}}{ L_0^{\nicefrac{1}{4}}(\rho\sigma)^{\nicefrac{1}{2}} K^{\nicefrac{3}{4}}}\right\}, \quad \forall k \ge 0.
        \end{equation*}
        \item If $L_1>0$, take
        \begin{equation*}
            \eta_k = \eta = \min\left\{\sqrt{\frac{\Delta_0}{L_0 K}}, \frac{1}{8L_1}, \frac{\Delta_0^{\nicefrac{1}{2}}L_0^{\nicefrac{1}{2}}}{8L_1\rho\sigma K^{\nicefrac{1}{2}}},  \frac{\Delta_0^{\nicefrac{3}{4}}}{ L_0^{\nicefrac{1}{4}}(\rho\sigma)^{\nicefrac{1}{2}} K^{\nicefrac{3}{4}}}\right\}, \quad \forall k \ge 0.
        \end{equation*}
    \end{itemize}
    Then, after $K \ge 1$ iterations, the iterates of \Cref{alg} satisfy
    \begin{equation*}
        \min_{k \in \{0,\dots, K-1\}} \Exp{\|\nabla f(x_k)\|_*} \leq \cO\left(\sqrt{\frac{L_0\Delta_0}{K}} + \frac{L_1\Delta_0}{K} + \frac{L_1 \Delta_0^{\nicefrac{1}{2}} \rho\sigma}{L_0^{\nicefrac{1}{2}}K^{\nicefrac{1}{2}}} + \frac{L_0^{\nicefrac{1}{4}}\Delta_0^{\nicefrac{1}{4}}(\rho\sigma)^{\nicefrac{1}{2}}}{K^{\nicefrac{1}{4}}} + \frac{\rho\sigma}{K^{\nicefrac{1}{2}}}
    \right).
    \end{equation*}
    In the case $L_1=0$, the terms containing $L_1$ in the preceding bound are zero.
    Consequently, the number of iterations needed to reach an $\varepsilon$-stationary point satisfies
    \begin{equation*}
        K_{\varepsilon}
        = \cO\left(
            \frac{L_0\Delta_0 }{\varepsilon^2}
            + \frac{L_1\Delta_0 }{\varepsilon}
            + \frac{L_0\Delta_0(\rho\sigma)^2}{\varepsilon^4}
            + \frac{L_1^2\Delta_0(\rho\sigma)^2}{L_0\varepsilon^2}
            + \frac{(\rho\sigma)^2}{\varepsilon^2}
            \right).
    \end{equation*}
    Again, when $L_1=0$, the terms containing $L_1$ are omitted.
\end{restate-theorem}
\begin{proof}
    We start with Lemma~\ref{lem:descent_lemma}, taking $\eta_k = \eta$ and $\alpha_k = 1$ for $k = 0$ and $\alpha_k =\alpha$ for $k \ge 1$,
    \begin{eqnarray}
        0 &\leq& \Delta_0 + 2\eta \sum^{K-1}_{k=0}\|\hat e_{k+1}\|_* +\frac{1}{2}\eta^2 L_0 e^{L_1\eta}K - \eta \sum^{K-1}_{k=0}\left(1-\frac{1}{2}L_1\eta e^{L_1\eta}\right)\|\nabla f(x_k)\|_* ~. \label{eq:bjhsdvbsdk}
    \end{eqnarray}
    Taking expectation, we bound the second term from the last inequality.
    By Lemma~\ref{lem:error_bound} (take $\alpha_{\mathrm{init}} = \alpha$ ), we have
    \begin{eqnarray*}
        2\eta \sum^{K-1}_{k=0}\Exp{\|\hat e_{k+1}\|_*}  &\leq&  \underbrace{2\eta \sum^{K-1}_{k=0}\sum_{t=1}^k (1-\alpha)^{k-t} \left(L_0 + L_1\Exp{\norm{\nabla f(x_t)}_*}\right)e^{L_1\eta}\eta}_{\eqdef \cT_1}\\
        && + \underbrace{2\eta \sum^{K-1}_{k=0} \sum_{t=1}^k(1-\alpha)^{k-t}\alpha (L_0 + L_1 \Exp{\norm{\nabla f(x_t)}_*})e^{L_1 R\eta}R\eta}_{\eqdef \cT_2}\\
         && + \underbrace{2\eta \sum^{K-1}_{k=0}\rho  \sigma\sqrt{\sum_{t=0}^k (1-\alpha)^{2(k-t)}\alpha^2}}_{\eqdef \cT_3} + \underbrace{2\eta \sum^{K-1}_{k=0}(1-\alpha)^{k+1} \Exp{\norm{\nabla f(x_0)}_*}}_{\eqdef \cT_4}~.
    \end{eqnarray*}
    Next, we bound each $\cT_1,~\cT_2,~\cT_3,~\cT_4$ and sum them up. 
    \begin{eqnarray}
        \cT_1 &=& 2L_0\eta^2 e^{L_1\eta} \sum^{K-1}_{k=0}\sum_{t=1}^k (1-\alpha)^{k-t}  +  2L_1\eta^2 e^{L_1\eta} \sum^{K-1}_{k=0}\sum_{t=1}^k (1-\alpha)^{k-t}  \Exp{\norm{\nabla f(x_t)}_*} \notag\\
        &\leq& 2L_0\eta^2 e^{L_1\eta} \sum^{K-1}_{k=0}\sum_{t=0}^{\infty} (1-\alpha)^{t}  +  2L_1\eta^2 e^{L_1\eta} \sum_{t=1}^{K-1}\sum^{K-1}_{k=t} (1-\alpha)^{k-t}  \Exp{\norm{\nabla f(x_t)}_*} \notag\\
        &\leq&  \frac{2}{\alpha}L_0\eta^2 e^{L_1\eta} K + 2L_1\eta^2 e^{L_1\eta} \sum_{t=1}^{K-1}\sum^{\infty}_{k=0} (1-\alpha)^{k}  \Exp{\norm{\nabla f(x_t)}_*} \notag\\
        &\leq& \frac{2}{\alpha}L_0\eta^2 e^{L_1\eta} K + \frac{2}{\alpha}L_1\eta^2 e^{L_1\eta} \sum_{k=1}^{K-1} \Exp{\norm{\nabla f(x_k)}_*}~. \label{eq:bound_T_1}
    \end{eqnarray}
    Now we bound $\cT_2$, By the same argument for bounding $\cT_1$, we obtain
    \begin{eqnarray}
        \cT_2 &\leq& \frac{2}{\alpha}L_0R\eta^2\alpha e^{L_1R\eta} K + \frac{2}{\alpha}L_1R\eta^2\alpha e^{L_1R\eta} \sum_{k=1}^{K-1} \Exp{\norm{\nabla f(x_k)}_*}~. \label{eq:bound_T_2}
    \end{eqnarray}
    Next, we bound $\cT_3$:
    \begin{eqnarray}
        \cT_3 &\leq& 2\eta\rho \sigma \sum^{K-1}_{k=0}\sqrt{\sum_{t=0}^k (1-\alpha)^{2(k-t)}\alpha^2} \notag\\
        &\leq& 2\eta\rho \sigma \sum^{K-1}_{k=0}\sqrt{\sum_{t=0}^{\infty} (1-\alpha)^{t}\alpha^2} \notag\\
        &\leq& 2\eta\rho \sigma \sqrt{\alpha} K.~ \label{eq:bound_T_3}
    \end{eqnarray}
    Finally, we bound $\cT_4$:
    \begin{eqnarray}
        \cT_4 &\leq& 2\eta \sum^{K-1}_{k=0}(1-\alpha)^{k+1} \Exp{\|\nabla f(x_0)\|_*} \notag\\
        &\leq& \frac{2\eta}{\alpha}\Exp{\|\nabla f(x_0)\|_*}~. \label{eq:bound_T_4}
    \end{eqnarray}    
    Plugging \eqref{eq:bound_T_1}, \eqref{eq:bound_T_2}, \eqref{eq:bound_T_3}, and \eqref{eq:bound_T_4} into \eqref{eq:bjhsdvbsdk}, we have
    \begin{eqnarray*}
        0 &\leq& \Delta_0 +\frac{1}{2}\eta^2 L_0 e^{L_1\eta}K - \eta \sum^{K-1}_{k=0}\left(1-\frac{1}{2}L_1\eta e^{L_1\eta}\right)\Exp{\|\nabla f(x_k)\|_*}\\
        && + \frac{2}{\alpha}L_0\eta^2 e^{L_1\eta} K + \frac{2}{\alpha}L_1\eta^2 e^{L_1\eta} \sum_{k=1}^{K-1} \Exp{\norm{\nabla f(x_k)}_*}\\
        && + \frac{2}{\alpha}L_0R\eta^2\alpha e^{L_1R\eta} K + \frac{2}{\alpha}L_1R\eta^2\alpha e^{L_1R\eta} \sum_{k=1}^{K-1} \Exp{\norm{\nabla f(x_k)}_*}\\
        && + 2\eta\rho \sigma \sqrt{\alpha} K + \frac{2\eta}{\alpha}\Exp{\|\nabla f(x_0)\|_*}~.
    \end{eqnarray*}
    Multiplying by $\frac{2}{\eta K}$, we have
    \begin{eqnarray*}
        0 &\leq& \frac{2\Delta_0}{\eta K} + 4\rho \sigma \sqrt{\alpha} +\eta L_0 \left( e^{L_1\eta} + \frac{4}{\alpha} e^{L_1\eta} + \frac{4}{\alpha} R\alpha e^{L_1R\eta} \right) + \frac{4}{\alpha K}\Exp{\|\nabla f(x_0)\|_*}\\
        &&- \frac{2}{K} \sum^{K-1}_{k=0}\left(1-\frac{1}{2}L_1\eta e^{L_1\eta} - \frac{2}{\alpha}L_1\eta e^{L_1\eta} - \frac{2}{\alpha}L_1R\eta\alpha e^{L_1R\eta}\right)\Exp{\|\nabla f(x_k)\|_*} ~.
    \end{eqnarray*}
    Assuming $R = \frac{1}{\alpha}$ and $\eta \leq \frac{\alpha}{8L_1}$ and $\alpha \leq 1$, we have
    \begin{eqnarray*}
         \frac{1}{K} \sum^{K-1}_{k=0}\Exp{\|\nabla f(x_k)\|_*}&\leq& \frac{2\Delta_0}{\eta K} + 4\rho \sigma \sqrt{\alpha}  +  \frac{4}{\alpha K}\Exp{\|\nabla f(x_0)\|_*} + 10\eta \frac{L_0}{\alpha} + 2\eta L_0 ~.
    \end{eqnarray*}

    To select $\alpha$, we minimize the upper bound w.r.t $\alpha$ with restriction that $\alpha < 1$. Thus, we have  $\alpha = \min\left\{1, \frac{\Delta_0^{\nicefrac{1}{2}}L_0^{\nicefrac{1}{2}}}{\rho\sigma K^{\nicefrac{1}{2}}}\right\}$. 
    To do so for $\eta$, we first plug in $\alpha$ in the upper bound and minimize w.r.t. $\eta$ under constraints that $\eta \leq \frac{\alpha}{8L_1}$. This allows us to set a stepsize as 
     $\eta = \min\left\{\sqrt{\frac{\Delta_0}{L_0 K}}, \frac{1}{8L_1}, \frac{\Delta_0^{\nicefrac{1}{2}}L_0^{\nicefrac{1}{2}}}{8L_1\rho\sigma K^{\nicefrac{1}{2}}}, \frac{\Delta_0^{\nicefrac{3}{4}}}{K^{\nicefrac{3}{4}} L_0^{\nicefrac{1}{4}}(\rho\sigma)^{\nicefrac{1}{2}}}\right\}$.
    Also, by \Cref{lem:gen_smooth} (inequality\eqref{eq:lem_smooth_3}) applied at $x_0$, we have
    \begin{equation*}
        \norm{\nabla f(x)}_*^2\leq 
        \begin{cases}
            8L_0\Delta_0 & \text{if} \norm{\nabla f(x)} \leq \frac{L_0}{L_1};\\
            (8L_1\Delta_0)^2 & \text{otherwise};
        \end{cases} 
        \leq \max\left\{8L_0\Delta_0, \left(8L_1\Delta_0\right)^2\right\}.
    \end{equation*}
    Therefore, we derive
    \begin{align*}
        \min_{k \in \{0,\dots, K-1\}} & \E{\norm{\nabla f(x_k)}_*} \\
        &\leq \cO\left(\sqrt{\frac{L_0\Delta_0}{K}} + \frac{\sqrt{L_0\Delta_0} +L_1\Delta_0}{K} + \frac{L_1 \Delta_0^{\nicefrac{1}{2}}}{L_0^{\nicefrac{1}{2}}K^{\nicefrac{1}{2}}} \rho\sigma + \frac{L_0^{\nicefrac{1}{4}}\Delta_0^{\nicefrac{1}{4}}(\rho \sigma)^{\nicefrac{1}{2}}}{K^{\nicefrac{1}{4}}} + \frac{\rho\sigma}{K^{\nicefrac{1}{2}}}\right)\\
        &= \cO\left(\sqrt{\frac{L_0\Delta_0}{K}} + \frac{L_1\Delta_0}{K} + \frac{L_1 \Delta_0^{\nicefrac{1}{2}}}{L_0^{\nicefrac{1}{2}}K^{\nicefrac{1}{2}}} \rho\sigma + \frac{L_0^{\nicefrac{1}{4}}\Delta_0^{\nicefrac{1}{4}}(\rho \sigma)^{\nicefrac{1}{2}}}{K^{\nicefrac{1}{4}}} + \frac{\rho\sigma}{K^{\nicefrac{1}{2}}}\right).
    \end{align*}
    Therefore, iteration complexity is defined as
    \begin{equation*}
        K_{\varepsilon} = \cO\left(\frac{L_0\Delta_0}{\varepsilon^2} + \frac{L_1\Delta_0}{\varepsilon} + \frac{L_1^2\Delta_0(\rho\sigma)^2}{L_0\varepsilon^2} + \frac{L_0\Delta_0(\rho\sigma)^2}{\varepsilon^4} + \frac{(\rho\sigma)^2}{\varepsilon^2}\right).
    \end{equation*}
\end{proof}
\newpage
\section{Proof of \Cref{thm:iteration_complexity_param_agnostic}}
\label{proof:iteration_complexity_param_agnostic}
We next turn to the parameter-agnostic regime, where the momentum and threshold schedules vary with the iteration index. The proof follows the same general template as in the fixed-parameter case, but the time-varying parameters require more careful control of the accumulated error terms.
Let us first restate the theorem before proving it.
\begin{restate-theorem}{\ref{thm:iteration_complexity_param_agnostic}}
     Suppose Assumptions~\ref{ass:lower_bound}--\ref{ass:gen_smooth} hold.
    Run \Cref{alg} with $R_k=\nicefrac{1}{\alpha_k}$, $\alpha_{\mathrm{init}} = \alpha_0=1$, and
    $\alpha_k=k^{-\nicefrac{1}{2}}$ for $k\ge 1$.
    In the first two cases, let $\eta>0$ be any constant and define
    \begin{equation*}
        \Psi(L_0,L_1)
        \coloneqq
        e^{L_1^2\eta^2}\nicefrac{\Delta_0}{\eta}
        + \rho\sigma
        + e^{L_1^2\eta^2}L_0\eta .
    \end{equation*}
    Choose the stepsizes as follows, and let $\Psi$ denote the corresponding value in the final bound:
    \begin{itemize}
        \item If $L_1=0$, take $\eta_k=\frac{\eta}{(k+1)^{\nicefrac{3}{4}}}$ and
        $\Psi=\Psi(L_0,0)=\nicefrac{\Delta_0}{\eta}+\rho\sigma+L_0\eta$.
        \item If $L_1>0$, take $\eta_k=\frac{\eta}{17(k+1)^{\nicefrac{3}{4}}}$ and
        $\Psi=\Psi(L_0,L_1)$.
        \item If $L_1>0$ is known, take $\eta_k=\frac{1}{17L_1(k+1)^{\nicefrac{3}{4}}}$ and
        $\Psi=L_1\Delta_0+\rho\sigma+\nicefrac{L_0}{L_1}$, which is the preceding envelope with $\eta=\nicefrac{1}{L_1}$ up to universal constants.
    \end{itemize}
    Then, after $K\ge 1$ iterations,
    \begin{equation*}
        \min_{k\in\{0,\dots,K-1\}}\Exp{\norm{\nabla f(x_k)}_*}
        \le
        \cO\left(\frac{\Psi\log K}{K^{\nicefrac{1}{4}}}\right).
    \end{equation*}
    Consequently, the number of iterations needed to reach an $\varepsilon$-stationary point satisfies
    \begin{equation*}
        K_{\varepsilon}
        =
        \cO\left(
            \frac{\Psi^4}{\varepsilon^4}
            \left(\log\frac{1}{\varepsilon}\right)^4
        \right).
    \end{equation*}
\end{restate-theorem}
\begin{proof}
    We start with \Cref{lem:descent_lemma}
    \begin{eqnarray*}
        \sum^{K-1}_{k=0}\eta_k\Exp{\norm{\nabla f(x_k)}_*} &\leq& \Delta_0 + \underbrace{\frac{L_0}{2}\sum^{K-1}_{k=0} \eta^2_ke^{L_1\eta_k}}_{\eqdef \cT_1} + \underbrace{2\sum^{K-1}_{k = 0}\eta_k \Exp{\norm{\hat{e}_{k+1}}_*}}_{\eqdef \cT_2} \\
        && + \underbrace{\frac{L_1}{2}\sum^{K-1}_{k=0}\eta_k^2 e^{L_1\eta_k}\Exp{\norm{\nabla f(x_k)}_*}}_{\eqdef \cT_3} .
    \end{eqnarray*}
    We need to bound three terms $\cT_1, \cT_2, \cT_3$.
    \paragraph{Bounding $\cT_1$.}
        The term $\cT_1$ captures the accumulated error dependent purely on the stepsize sequence. To bound this, we substitute our chosen stepsize $\eta_k = \frac{\hat{\eta}}{(k+1)^{\nicefrac{3}{4}}}$ and upper-bound the resulting summation using an integral approximation. Specifically, we have:
    \begin{align*}
        \cT_1 &= \frac{L_0}{2}\sum^{K-1}_{k=0} \frac{\hat{\eta}^2}{(k+1)^{\nicefrac{3}{2}}}\exp\left(L_1\eta_k\right) \\
        &\leq \frac{L_0\hat{\eta}^2\exp\left(L_1\hat{\eta}\right)}{2}\left(1 + \sum^{K-1}_{k=1} \frac{1}{(k+1)^{\nicefrac{3}{2}}}\right)\\
        &\leq \frac{L_0\hat{\eta}^2\exp\left(L_1\hat{\eta}\right)}{2}\left(1 + \int^{K}_{1} \frac{1}{t^{\nicefrac{3}{2}}} \, dt \right) \\
        &= \frac{L_0\hat{\eta}^2\exp\left(L_1\hat{\eta}\right)}{2}\left(1 + \left[ -\frac{2}{\sqrt{t}} \right]_1^K \right) \\
        &= \frac{L_0\hat{\eta}^2\exp\left(L_1\hat{\eta}\right)}{2}\left(3 - \frac{2}{\sqrt{K}}\right).
    \end{align*}
    Thus, by dropping the negative term, we derive the following conservative upper bound:
    \begin{equation}
    \label{eq:bound_on_T_1}
        \cT_1 \leq \frac{3}{2}L_0\hat{\eta}^2\exp\left(L_1\hat{\eta}\right).
    \end{equation}
    \paragraph{Bounding $\cT_2$.}
    Next, we analyze $\cT_2$, which encapsulates the accumulated tracking error. By applying Lemma~\ref{lem:error_bound} (set $\alpha_{\mathrm{init}} = \alpha_0$), we can expand this error. To facilitate a tight convergence analysis, we systematically decompose the resulting expression into five distinct components ($\circledOne$~through $\circledFive$). This isolates the effects of the objective's smoothness, the momentum weights, and the variance:
    \begin{align*}
        \cT_2 &\leq 2\sum^{K-1}_{k=0} \eta_k\sum_{t=1}^k\prod_{j=t}^{k}(1-\alpha_j) \left(L_0 + L_1\Exp{\norm{\nabla f(x_t)}_*}\right)\exp(L_1\eta_{t-1})\eta_{t-1}\\
        &\quad + 2\sum^{K-1}_{k=0} \eta_k\sum_{t=1}^k \prod_{j=t+1}^{k}(1-\alpha_j) \alpha_t \left(L_0 + L_1 \Exp{\norm{\nabla f(x_t)}_*}\right)\exp\left(L_1 R_t\eta_{t-\delta_t}\right)R_t\eta_{t-\delta_t}\\
        &\quad + 2\rho \sigma\sum^{K-1}_{k=0} \eta_k\sqrt{\sum_{t=0}^k \prod_{j=t+1}^{k}(1-\alpha_j)^2 \alpha_t^2} \\
        &=  \circledOne + \circledTwo +  \circledThree + \circledFour + \circledFive~,
    \end{align*}
    where $\circledOne, \circledTwo, \circledThree, \circledFour \text{ and } \circledFive$ are defined as
    \begin{align*}
        \circledOne &\eqdef 2L_0\sum^{K-1}_{k=0}\eta_k\sum_{t=1}^k\prod_{j=t}^{k}(1-\alpha_j)\exp(L_1\eta_{t-1})\eta_{t-1};\\
        \circledTwo &\eqdef 2L_0\sum^{K-1}_{k=0}\eta_k\sum_{t=1}^k\prod_{j=t+1}^{k}(1-\alpha_j)\alpha_t\exp\left(L_1 R_t\eta_{t-\delta_t}\right)R_t\eta_{t-\delta_t};\\
        \circledThree &\eqdef  2L_1\sum^{K-1}_{k=0}\eta_k \sum_{t=1}^k\prod_{j=t}^{k}(1-\alpha_j) \exp(L_1\eta_{t-1})\eta_{t-1} \Exp{\norm{\nabla f(x_t)}_*};\\
        \circledFour &\eqdef 2L_1\sum^{K-1}_{k=0} \eta_k\sum_{t=1}^k \prod_{j=t+1}^{k}(1-\alpha_j) \alpha_t \exp\left(L_1 R_t\eta_{t-\delta_t}\right)R_t\eta_{t-\delta_t} \Exp{\norm{\nabla f(x_t)}_*};\\
        \circledFive &\eqdef 2\rho \sigma\sum^{K-1}_{k=0} \eta_k\sqrt{\sum_{t=0}^k \prod_{j=t+1}^{k}(1-\alpha_j)^2 \alpha_t^2}.
    \end{align*}
    We now proceed to bound these terms individually.
    \paragraph{Bounding $\circledOne$.}
   The first term, $\circledOne$, represents the deterministic error discounted by our momentum parameter.
   By substituting our stepsize $\eta_k$ and momentum parameter $\alpha_j = j^{-1/2}$, we can bound the nested products.
   We then control the resulting harmonic series by upper-bounding it with a logarithmic integral:
    \begin{align*}
        \circledOne &\leq 2L_0\hat{\eta}^2\exp(L_1\hat{\eta})\sum^{K-1}_{k=0}\sum_{t=1}^k\prod_{j=t}^{k}(1-j^{-1/2})t^{-3/4}(k+1)^{-3/4}\\
        &\leq 2L_0\hat{\eta}^2\exp(L_1\hat{\eta}) \sum^{K-1}_{k=0} \frac{18}{k+1}\\
        &\leq 36L_0\hat{\eta}^2\exp(L_1\hat{\eta}) \left(\int^{K}_{1} \frac{1}{t} \, dt +1\right)\\
        &\leq 36L_0\hat{\eta}^2\exp(L_1\hat{\eta}) \left(\log K +1\right).
    \end{align*}
    \paragraph{Bounding $\circledTwo$.}
    The second term tracks the error impacted by the delay parameter $\delta_t \leq R_t$. Recalling that $R_t = \nicefrac{1}{\alpha_t}$ for $t \geq 1$, we have $\alpha_t R_t = 1$. We first bound the exponential term:
    \begin{align*}
        \exp\left(L_1 R_t \eta_{t-\delta_t}\right) 
        &= \exp\left(L_1 \frac{1}{\alpha_t} \eta_{t-\delta_t}\right)\\
        &\leq \exp\left(L_1 \hat{\eta} \frac{\sqrt{t}}{(t-\delta_t+1)^{\nicefrac{3}{4}}}\right) \\
        &\leq \exp\left(L_1 \hat{\eta} \frac{\sqrt{t}}{(t-\sqrt{t}+1)^{\nicefrac{3}{4}}}\right).
    \end{align*}
    Since $\frac{\sqrt{t}}{(t-\sqrt{t}+1)^{\nicefrac{3}{4}}} \leq 1.03$ for all $t \geq 1$, we can upper-bound this as:
    \begin{equation*}
        \exp\left(L_1 \hat{\eta} \frac{\sqrt{t}}{(t-\sqrt{t}+1)^{\nicefrac{3}{4}}}\right) \leq \exp(1.03 L_1 \hat{\eta}).
    \end{equation*}
    To continue bounding $\circledTwo$, we need to bound the following term:
    \begin{align*}
       \sum_{t=1}^k \prod_{j=t+1}^{k}(1-\alpha_j) \alpha_t R_t\eta_{t-\delta_t}
        &=  \hat{\eta} \sum_{t=1}^k \prod_{j=t+1}^{k}(1-j^{-\nicefrac{1}{2}}) (t-\delta_t+1)^{-\nicefrac{3}{4}}\\
        &\le \hat{\eta} \sum_{t=1}^k \prod_{j=t+1}^{k}(1-j^{-\nicefrac{1}{2}}) (t-R_t+1)^{-\nicefrac{3}{4}}\\
        &\le \hat{\eta} \sum_{t=1}^k \prod_{j=t+1}^{k}(1-j^{-\nicefrac{1}{2}}) (t-\sqrt{t}+1)^{-\nicefrac{3}{4}} \\
        &\le 2\hat{\eta} \sum_{t=1}^k \prod_{j=t+1}^{k}(1-j^{-\nicefrac{1}{2}}) t^{-\nicefrac{3}{4}} \\
        &\overset{(\ast)}{\le} 4\hat{\eta}\exp\left(\frac{1}{1-\nicefrac{1}{2}}\right) (k+1)^{\nicefrac{1}{2} - \nicefrac{3}{4}}\\
        &\le \frac{36\hat{\eta}}{(k+1)^{\nicefrac{1}{4}}},
    \end{align*}
    where in $(\ast)$ we used inequality \eqref{eq:sum_product_bound_3} from \Cref{lem:sum_product_bound}.

    Using the previous inequality, we bound the resulting sum similar to $\circledOne$:
    \begin{align*}
        \circledTwo &\leq 2 L_0 \exp(1.03 L_1 \hat{\eta}) \sum_{k=0}^{K-1} \eta_k \sum_{t=1}^{k} \prod_{j=t+1}^{k} (1-\alpha_j) \eta_{t-\delta_t} \\
        &\leq 2 L_0 \exp(1.03 L_1 \hat{\eta}) \sum_{k=0}^{K-1} \eta_k \cdot \frac{36\hat{\eta}}{(k+1)^{\nicefrac{1}{4}}} \\
        &\leq 72 L_0 \hat{\eta}^2 \exp(1.03 L_1 \hat{\eta}) (\log K + 1).
    \end{align*}
    \paragraph{Bounding $\circledThree$.}
    For $\circledThree$, we swap the order of summation. Notice that $\eta_{t-1} = \eta_t \frac{(t+1)^{\nicefrac{3}{4}}}{t^{\nicefrac{3}{4}}} \leq 2^{\nicefrac{3}{4}} \eta_t$. Rearranging the sums yields:
    \begin{align*}
        \circledThree &= 2 \sum_{k=1}^{K-1} \eta_k \sum_{t=1}^{k} \prod_{j=t}^{k} (1-\alpha_j) L_1 \Exp{\norm{\nabla f(x_t)}_*} \exp(L_1 \eta_{t-1}) \eta_{t-1} \\
        &\le 2 \sum_{t=1}^{K-1} \eta_{t-1} \exp(L_1 \eta_{t-1}) L_1 \Exp{\norm{\nabla f(x_t)}_*} \sum_{k=t}^{K-1} \eta_k \prod_{j=t+1}^{k} (1-\alpha_j).
    \end{align*}
    For $t\ge 1$, using the empty-product convention and inequality \eqref{eq:sum_product_bound_2} from \Cref{lem:sum_product_bound}, we have
    \begin{align*}
        \sum_{k=t}^{K-1}\eta_k\prod_{j=t+1}^{k}(1-\alpha_j)
        &=
        \eta_t
        +
        \sum_{k=t+1}^{K-1}\eta_k\prod_{j=t+1}^{k}(1-\alpha_j)\\
        &\le
        \frac{\hat{\eta}}{(t+1)^{\nicefrac{3}{4}}}
        +
        3\hat{\eta} t^{-\nicefrac{1}{4}}
        \le
        4\hat{\eta}t^{-\nicefrac{1}{4}}.
    \end{align*}
    Therefore,
    \begin{align*}
        \circledThree &\leq 8 L_1 \sum_{t=1}^{K-1} \eta_{t-1} \exp(L_1 \eta_{t-1}) \frac{\hat{\eta}}{t^{\nicefrac{1}{4}}} \Exp{\norm{\nabla f(x_t)}_*} \\
        &\leq 8 L_1 \sum_{k=1}^{K-1} \eta_{k-1} \exp(L_1 \eta_{k-1}) \frac{\hat{\eta}}{k^{\nicefrac{1}{4}}} \Exp{\norm{\nabla f(x_k)}_*}.
    \end{align*}
    \paragraph{Bounding $\circledFour$.}
    By the inequality \eqref{eq:sum_product_bound_2} from Lemma~\ref{lem:sum_product_bound},
    \begin{align*}
        \sum_{k=t}^{K-1} \prod_{j=t+1}^{k} (1-\alpha_j) \eta_k & = \eta_{t} + \sum_{k=t+1}^{K-1} \prod_{j=t+1}^{k} (1-\alpha_j) \eta_k\\
        &= \hat{\eta} (t+1)^{-\nicefrac{3}{4}} + \hat{\eta}\sum_{k=t+1}^{K-1} \prod_{j=t+1}^{k} (1-j^{-\nicefrac{1}{2}}) (k+1)^{-\nicefrac{3}{4}} \le 4\hat{\eta}t^{-\nicefrac{1}{4}} .
    \end{align*}
    We apply the same sum-swapping technique and the previous inequality:
    \begin{align*}
        \circledFour &= 2 L_1 \sum_{t=1}^{K-1} \sum_{k=t}^{K-1} \prod_{j=t+1}^{k} (1-\alpha_j) \eta_k \left[ \eta_{t-\delta_t} \exp\left(L_1 \frac{\eta_{t-\delta_t}}{\alpha_t}\right) \right] \Exp{\norm{\nabla f(x_t)}_*} \\
        &\leq 8 L_1 \sum_{t=1}^{K-1} \eta_{t-\delta_t} \exp\left(L_1 \hat{\eta}\frac{ \sqrt{t}}{(t-\delta_t+1)^{\nicefrac{3}{4}}}\right) \frac{\hat{\eta}}{t^{\nicefrac{1}{4}}} \Exp{\norm{\nabla f(x_t)}_*} \\
        &\leq 8 L_1 \sum_{t=1}^{K-1} \eta_{t-\delta_t} \exp\left(L_1 \hat{\eta}\frac{\sqrt{t}}{(t-\sqrt{t}+1)^{\nicefrac{3}{4}}}\right) \frac{\hat{\eta}}{t^{\nicefrac{1}{4}}} \Exp{\norm{\nabla f(x_t)}_*} \\
        &\leq 8 L_1 \sum_{k=1}^{K-1} \eta_{k-\delta_k} \exp\left(2 L_1 \hat{\eta} \frac{1}{k^{\nicefrac{1}{4}}}\right) \frac{\hat{\eta}}{k^{\nicefrac{1}{4}}} \Exp{\norm{\nabla f(x_k)}_*},
    \end{align*}
    where in the last inequality we used $\frac{\sqrt{t}}{(t-\sqrt{t}+1)^{\nicefrac{3}{4}}} \leq \frac{2^{\nicefrac{3}{4}}}{t^{\nicefrac{1}{4}}}$.

    \paragraph{Bounding $\circledFive$.}
    The variance term $\circledFive$ involves the standard summation of squared momentum weights. Expanding this geometric-like sum gives the classical logarithmic accumulation:
    \begin{align*}
        \circledFive &= 2\rho \sigma\sum^{K-1}_{k=0} \eta_k\sqrt{\sum_{t=1}^k \prod_{j=t+1}^{k}(1-\alpha_j)^2 \alpha_t^2 + \prod_{j=1}^{k}(1-\alpha_j)^2 \alpha_0^2} \\
        &=  2\rho \sigma\sum^{K-1}_{k=1} \eta_k\sqrt{\sum_{t=1}^k \prod_{j=t+1}^{k}(1-j^{-\nicefrac{1}{2}}) t^{-1} + \prod_{j=1}^{k}(1-j^{-\nicefrac{1}{2}})^2 } +  2\rho \sigma\hat{\eta} \\
        &\leq 2\rho \sigma\sum^{K-1}_{k=1} \eta_k\sqrt{2\exp\left(\frac{1}{1-\nicefrac{1}{2}}\right) \frac{1}{\sqrt{k+1}} } +  2\rho \sigma\hat{\eta}\\
        &\leq 2\rho \sigma\sum^{K-1}_{k=1} \frac{\hat{\eta}}{(k+1)^{\nicefrac{3}{4}}}\sqrt{\frac{18}{(k+1)^{\nicefrac{1}{2}}}}  +  2\rho \sigma\hat{\eta}\\
        &\leq 11 \rho \sigma \hat{\eta} (\log K + 1),
    \end{align*}
    where in the first inequality we used the inequality \eqref{eq:sum_product_bound_3} from \Cref{lem:sum_product_bound}.
    \paragraph{Combining the terms.}
    We now piece together $\cT_1, \cT_2,$ and $\cT_3$. Notice that $$\cT_3 = \frac{1}{2}\sum_{k=0}^{K-1} L_1 \eta_k^2 e^{L_1 \eta_k} \Exp{\norm{\nabla f(x_k)}_*}.$$ Combining $\cT_3$ with $\circledThree$ and $\circledFour$, we can group the coefficients in front of $\Exp{\norm{\nabla f(x_k)}_*}$.

    Bounding the ratios of sequential stepsizes $\frac{(k+1)^{\nicefrac{3}{4}}}{k^{\nicefrac{3}{4}}} \leq 2^{\nicefrac{3}{4}}$ and $\frac{(k+1)^{\nicefrac{3}{4}}}{(k-\delta_k+1)^{\nicefrac{3}{4}}} \leq 2^{\nicefrac{3}{4}}$, we consolidate the coefficients into a single envelope:
    \begin{equation*}
        \cT_3 + \circledThree + \circledFour \leq \sum_{k=0}^{K-1} L_1 \eta_k \left( 33 \hat{\eta} \frac{1}{(k+1)^{\nicefrac{1}{4}}} \exp\left(3 L_1 \hat{\eta} (k+1)^{-\nicefrac{1}{4}}\right) \right) \Exp{\norm{\nabla f(x_k)}_*}.
    \end{equation*}
    Substituting this back into the main inequality gives:
    \begin{align*}
        \sum_{k=0}^{K-1}\eta_k\Exp{\norm{\nabla f(x_k)}_*} &\leq \Delta_0 + \frac{3}{2} L_0 \hat{\eta}^2 e^{L_1\hat{\eta}} + 36 L_0 \hat{\eta}^2 e^{L_1\hat{\eta}} (\log K + 1) \\
        &\quad + 72 L_0 \hat{\eta}^2 e^{1.03 L_1\hat{\eta}} (\log K + 1) + 11 \rho \sigma \hat{\eta} (\log K + 1) \\
        &\quad + \sum_{k=0}^{K-1} 33 L_1 \eta_k \frac{\hat{\eta}}{(k+1)^{\nicefrac{1}{4}}} \exp\left(3 L_1 \hat{\eta} (k+1)^{-\nicefrac{1}{4}}\right) \Exp{\norm{\nabla f(x_k)}_*}.
    \end{align*}

    \paragraph{Thresholding trick and absorbing the sum.}
    To handle the recursive expected gradient term on the right-hand side, we split the summation into two parts: a transient phase ($k < k_0$) and a stable phase ($k \geq k_0$). We define the threshold iteration $k_0$ such that the multiplicative factor becomes less than or equal to $\nicefrac{1}{2}$, allowing us to absorb it into the left-hand side. We require:
    \begin{equation*}
        33 L_1 \frac{\hat{\eta}}{(k+1)^{\nicefrac{1}{4}}} \exp\left(3 L_1 \hat{\eta} (k+1)^{-\nicefrac{1}{4}}\right) \leq \frac{1}{2}.
    \end{equation*}
    Because $x \exp(c x)$ is monotonically increasing, we can secure this bound for all $k \geq k_0$ if we conservatively choose:
    \begin{equation*}
        k_0 \eqdef \lceil (70 L_1 \hat{\eta})^4 \rceil - 1.
    \end{equation*}
    For iterations $k \geq k_0$, we subtract $\frac{1}{2}\sum_{k=k_0}^{K-1}\eta_k\Exp{\norm{\nabla f(x_k)}_*}$ from both sides. We are left with the task of explicitly bounding the transient sum from $k=0$ to $k_0-1$.

    First, we bound the local travel distance (radius) $r_k = \norm{x_k - x_0}$. Summing the stepsizes and approximating via an integral:
    \begin{equation*}
        r_k \leq \sum_{\tau=0}^{k-1} \eta_\tau = \hat{\eta} \sum_{\tau=1}^{k} \frac{1}{\tau^{\nicefrac{3}{4}}} \leq \hat{\eta} \left( 1 + \int_1^k \frac{1}{t^{\nicefrac{3}{4}}} \, dt \right) = \hat{\eta} \left( 1 + \left[4t^{\nicefrac{1}{4}}\right]_1^k \right) = 4\hat{\eta} k^{\nicefrac{1}{4}} - 3\hat{\eta} \leq 4\hat{\eta} k^{\nicefrac{1}{4}}.
    \end{equation*}
    Next, we bound $\Exp{\norm{\nabla f(x_k)}_*}$ using the generalized $(L_0, L_1)$-smoothness condition and the triangle inequality:
    \begin{align*}
        \norm{\nabla f(x_k)}_* &\leq \norm{\nabla f(x_k) - \nabla f(x_0)}_* + \norm{\nabla f(x_0)}_* \\
        &\leq \left(L_0 + L_1 \norm{\nabla f(x_0)}_*\right) \exp\left(L_1 \norm{x_k - x_0}\right) \norm{x_k - x_0} + \norm{\nabla f(x_0)}_* \\
        &\leq L_0 r_k e^{L_1 r_k} + L_1 r_k e^{L_1 r_k} \norm{\nabla f(x_0)}_* + \norm{\nabla f(x_0)}_*.
    \end{align*}
    Substituting this gradient bound into our transient sum, we decompose it into three distinct sums to bound individually:
    \begin{align*}
        & \sum_{k=0}^{k_0-1} 33 L_1 \eta_k \frac{\hat{\eta}}{(k+1)^{\nicefrac{1}{4}}} \exp\left(3 L_1 \hat{\eta} (k+1)^{-\nicefrac{1}{4}}\right) \Exp{\norm{\nabla f(x_k)}_*} \leq \\
        &\quad \underbrace{33 L_1 \sum_{k=1}^{k_0-1} \frac{\hat{\eta}^2}{(k+1)} \exp\left(\frac{3 L_1 \hat{\eta}}{(k+1)^{\nicefrac{1}{4}}}\right) L_0 r_k e^{L_1 r_k}}_{\eqdef \circledSix} \\
        &\quad + \underbrace{33 L_1 \sum_{k=1}^{k_0-1} \frac{\hat{\eta}^2}{(k+1)} \exp\left(\frac{3 L_1 \hat{\eta}}{(k+1)^{\nicefrac{1}{4}}}\right) L_1 r_k e^{L_1 r_k} \Exp{\norm{\nabla f(x_0)}_*}}_{\eqdef \circledSeven} \\
        &\quad + \underbrace{33 L_1 \sum_{k=0}^{k_0-1} \frac{\hat{\eta}^2}{(k+1)} \exp\left(\frac{3 L_1 \hat{\eta}}{(k+1)^{\nicefrac{1}{4}}}\right) \Exp{\norm{\nabla f(x_0)}_*}}_{\eqdef \circledEight}.
    \end{align*}

    \paragraph{Bounding $\circledSix$.}
    We substitute the bound $r_k \leq 4\hat{\eta} k^{\nicefrac{1}{4}} - 3\hat{\eta}$ and factor out the constants. Noting that $\frac{k^{\nicefrac{1}{4}}}{k+1} \leq \frac{1}{(k+1)^{\nicefrac{3}{4}}}$, we approximate the summation with an integral up to $k_0$:
    \begin{align*}
        \circledSix &\leq 33 L_1 \sum_{k=1}^{k_0-1} \frac{\hat{\eta}^2}{(k+1)} \exp\left(\frac{3 L_1 \hat{\eta}}{(k+1)^{\nicefrac{1}{4}}}\right) L_0 (4 \hat{\eta} k^{\nicefrac{1}{4}} - 3\hat{\eta}) \exp(L_1 (4 \hat{\eta} k^{\nicefrac{1}{4}} - 3\hat{\eta})) \\
        &\leq 132 L_1 L_0 \hat{\eta}^3 \sum_{k=1}^{k_0-1} \frac{1}{(k+1)^{\nicefrac{3}{4}}} \exp(4 L_1 \hat{\eta} k^{\nicefrac{1}{4}}) \\
        &\leq 132 L_1 L_0 \hat{\eta}^3 \exp(4 L_1 \hat{\eta} k_0^{\nicefrac{1}{4}}) \int_1^{k_0} \frac{1}{t^{\nicefrac{3}{4}}} \, dt \\
        &\leq 528 L_1 L_0 \hat{\eta}^3 \exp(4 L_1 \hat{\eta} k_0^{\nicefrac{1}{4}}) k_0^{\nicefrac{1}{4}}.
    \end{align*}
    Since $k_0^{\nicefrac{1}{4}} \le 70 L_1 \hat{\eta}$, we obtain the coarse uniform bound:
    \begin{equation*}
        \circledSix \leq 36960 L_1^2 \hat{\eta}^2 e^{280 \hat{\eta}^2 L_1^2} L_0 \hat{\eta}^2.
    \end{equation*}

    \paragraph{Bounding $\circledSeven$.}
    The second term mimics the structure of the first but scales with $L_1$ and the initial gradient norm. Following the exact same integral approximation:
    \begin{align*}
        \circledSeven &\leq 33 L_1^2 \hat{\eta}^2 \sum_{k=1}^{k_0-1} \frac{1}{(k+1)} (4 \hat{\eta} k^{\nicefrac{1}{4}} - 3\hat{\eta}) \exp\left(L_1 \hat{\eta} \left(\frac{3}{(k+1)^{\nicefrac{1}{4}}} + 4 k^{\nicefrac{1}{4}} - 3\right)\right) \Exp{\norm{\nabla f(x_0)}_*} \\
        &\leq 132 L_1^2 \hat{\eta}^3 \sum_{k=1}^{k_0-1} \frac{1}{(k+1)^{\nicefrac{3}{4}}} \exp(L_1 \hat{\eta} \cdot 4 k^{\nicefrac{1}{4}}) \Exp{\norm{\nabla f(x_0)}_*} \\
        &\leq 132 L_1^2 \hat{\eta}^3 \exp(4 L_1 \hat{\eta} k_0^{\nicefrac{1}{4}}) \int_1^{k_0} \frac{1}{t^{\nicefrac{3}{4}}} \, dt \ \Exp{\norm{\nabla f(x_0)}_*} \\
        &\leq 528 L_1^2 \hat{\eta}^3 \exp(4 L_1 \hat{\eta} k_0^{\nicefrac{1}{4}}) k_0^{\nicefrac{1}{4}} \Exp{\norm{\nabla f(x_0)}_*}.
    \end{align*}
    Again, evaluating at the threshold $k_0$, we obtain:
    \begin{equation*}
        \circledSeven \leq 36960 L_1^3 \hat{\eta}^4 \exp(280 L_1^2 \hat{\eta}^2) \Exp{\norm{\nabla f(x_0)}_*}.
    \end{equation*}

    \paragraph{Bounding $\circledEight$.}
    The third term lacks the $r_k$ multiplier, rendering it a pure harmonic series weighted by an exponential. Bounding the exponential globally by its value at $k=0$ and applying the standard harmonic bound $\sum_{k=0}^{k_0-1} \frac{1}{k+1} \leq \log(k_0) + 1$:
    \begin{align*}
        \circledEight &\leq 33 L_1 \hat{\eta}^2 \exp(3 L_1 \hat{\eta}) \sum_{k=0}^{k_0-1} \frac{1}{(k+1)} \Exp{\norm{\nabla f(x_0)}_*} \\
        &\leq 33 L_1 \hat{\eta}^2 \exp(3 L_1 \hat{\eta}) (\log k_0 + 1) \Exp{\norm{\nabla f(x_0)}_*} \\
        &\leq 132 L_1 \hat{\eta}^2 \exp(3 L_1 \hat{\eta}) \max\{1, \log\lceil 70 L_1 \hat{\eta} \rceil\} \Exp{\norm{\nabla f(x_0)}_*}.
    \end{align*}

    \paragraph{Final assembly of the bound.}
    Combining $\cT_1, \cT_2, \cT_3$, moving the $\frac{1}{2}\sum_{k=k_0}^{K-1} \eta_k \Exp{\norm{\nabla f(x_k)}_*}$ to the left-hand side, and inserting our explicit bounds for the transient phase ($\circledSix, \circledSeven, \circledEight$), we yield the global inequality:
    \begin{align*}
        \frac{1}{2} \sum_{k=0}^{K-1} \eta_k \Exp{\norm{\nabla f(x_k)}_*} &\leq \Delta_0 + \frac{3}{2} L_0 \hat{\eta}^2 e^{L_1\hat{\eta}} + 36 L_0 \hat{\eta}^2 e^{L_1\hat{\eta}} (\log K + 1) \\
        &\quad + 72 L_0 \hat{\eta}^2 e^{1.03 L_1\hat{\eta}} (\log K + 1) + 11 \rho  \sigma \hat{\eta} (\log K + 1) \\
        &\quad + 36960 L_1^2 \hat{\eta}^2 e^{280 \hat{\eta}^2 L_1^2} L_0 \hat{\eta}^2 \\
        &\quad + 36960 L_1^3 \hat{\eta}^4 e^{280 L_1^2 \hat{\eta}^2} \Exp{\norm{\nabla f(x_0)}_*} \\
        &\quad + 132 L_1 \hat{\eta}^2 e^{3 L_1 \hat{\eta}} \max\{1, \log\lceil 70 L_1 \hat{\eta} \rceil\} \Exp{\norm{\nabla f(x_0)}_*}.
    \end{align*}
    By Lemma~\ref{lem:gen_smooth}, we have
    \begin{align*}
        4\Delta_0 &\geq \frac{\norm{\nabla f(x_0)}_*^2}{L_0 + L_1\norm{\nabla f(x_0)}_*} \geq \begin{cases}
            \frac{\norm{\nabla f(x_0)}_*^2}{2L_0}, & \norm{\nabla f(x_0)}_*\leq \frac{L_0}{L_1};\\
            \frac{\norm{\nabla f(x_0)}_*}{2L_1}, & \text{otherwise.}
        \end{cases}
    \end{align*}
    Thus, we have
    \begin{equation*}
        \norm{\nabla f(x_0)}_* \leq \max\left\{8L_1\Delta_0, \frac{L_0}{L_1}\right\}.
    \end{equation*}
    Therefore, we obtain
    \begin{align*}
        \frac{1}{2} \sum_{k=0}^{K-1} \eta_k \Exp{\norm{\nabla f(x_k)}_*} &\leq \Delta_0 + \frac{3}{2} L_0 \hat{\eta}^2 e^{L_1\hat{\eta}} + 36 L_0 \hat{\eta}^2 e^{L_1\hat{\eta}} (\log K + 1) \\
        &\quad + 72 L_0 \hat{\eta}^2 e^{1.03 L_1\hat{\eta}} (\log K + 1) + 11 \rho  \sigma \hat{\eta} (\log K + 1) \\
        &\quad + 36960 L_1^2 \hat{\eta}^2 e^{280 \hat{\eta}^2 L_1^2} L_0 \hat{\eta}^2 \\
        &\quad + 295680 L_1^4 \hat{\eta}^4 e^{280 L_1^2 \hat{\eta}^2} \Delta_0 + 36960 L_1^2 \hat{\eta}^2 e^{280 L_1^2 \hat{\eta}^2} L_0\hat{\eta}^2 \\
        &\quad + 1056 L_1^2 \hat{\eta}^2 e^{3 L_1 \hat{\eta}} \max\{1, \log\lceil 70 L_1 \hat{\eta} \rceil\} \Delta_0 \\
        &\quad+ 132e^{3 L_1 \hat{\eta}} \max\{1, \log\lceil 70 L_1 \hat{\eta} \rceil\} L_0\hat{\eta}^2.
    \end{align*}

    \paragraph{Standard smoothness.} In this case we have $L_1 = 0$. Thus, taking $\hat{\eta} = \eta$ we have
    \begin{align*}
        \frac{1}{2} \sum_{k=0}^{K-1} \eta_k \Exp{\norm{\nabla f(x_k)}_*} &\leq \Delta_0 + 11 \rho  \sigma \eta (\log K + 1) + 228 L_0 \eta^2 (\log K + 1) ~.
    \end{align*}

    Dividing by $\sum_{k=0}^{K-1}\eta_k$, collecting the terms and consolidating coefficients using Big-$\cO$ notation, we arrive at the envelope $\Psi(L_0, 0) = \nicefrac{\Delta_0}{\eta} + \rho  \sigma + L_0\eta$.
    Finally, bounding the minimum gradient by the weighted average over $K$ steps:
    \begin{equation*}
        \min_{k \in \{0,\dots, K-1\}} \Exp{\norm{\nabla f(x_k)}_*} \leq \frac{1}{\sum_{k=0}^{K-1} \eta_k} \sum_{k=0}^{K-1} \eta_k \Exp{\norm{\nabla f(x_k)}_*} \leq \cO\left( \frac{\Psi(L_0,0) \log K}{K^{\nicefrac{1}{4}}} \right).
    \end{equation*}
    \paragraph{Generalized smoothness: $L_1$ is unknown.} In this case we have $L_1 > 0 $. Thus, taking $\hat{\eta} = \nicefrac{\eta}{17}$, we obtain
    \begin{align*}
        \sum_{k=0}^{K-1} \eta_k \Exp{\norm{\nabla f(x_k)}_*} &\leq \cO\left(e^{L_1^2\eta^2}\Delta_0 + \rho  \sigma \hat{\eta} (\log K + 1)  + e^{L_1^2\eta^2} L_0 \hat{\eta}^2 (\log K + 1) \right).
    \end{align*}
    Dividing by $\sum_{k=0}^{K-1}\eta_k$ and collecting the terms, we arrive at the envelope $\Psi(L_0, L_1)$. Finally, bounding the minimum gradient by the weighted average over $K$ steps:
    \begin{equation*}
        \min_{k \in \{0,\dots, K-1\}} \Exp{\norm{\nabla f(x_k)}_*} \leq \frac{1}{\sum_{k=0}^{K-1} \eta_k} \sum_{k=0}^{K-1} \eta_k \Exp{\norm{\nabla f(x_k)}_*} \leq \cO\left( \frac{\Psi(L_0,L_1) \log K}{K^{\nicefrac{1}{4}}} \right) ,
    \end{equation*}
    which completes the proof.

    \paragraph{Generalized smoothness: $L_1$ is known.} Thus, taking $\hat{\eta} = \nicefrac{1}{17L_1}$, we obtain
    \begin{align*}
        \sum_{k=0}^{K-1} \eta_k \Exp{\norm{\nabla f(x_k)}_*} &\leq \cO\left(\Delta_0 + \rho \frac{\sigma}{L_1}  (\log K + 1)  + \frac{L_0}{L_1^2} (\log K + 1) \right).
    \end{align*}
    Dividing by $\sum_{k=0}^{K-1}\eta_k$ and collecting the terms, we arrive at the envelope $\Psi(L_0, L_1) = L_1\Delta_0 + \rho \sigma + \frac{L_0}{L_1}$. Finally, bounding the minimum gradient by the weighted average over $K$ steps:
    \begin{equation*}
        \min_{k \in \{0,\dots, K-1\}} \Exp{\norm{\nabla f(x_k)}_*} \leq \frac{1}{\sum_{k=0}^{K-1} \eta_k} \sum_{k=0}^{K-1} \eta_k \Exp{\norm{\nabla f(x_k)}_*} \leq \cO\left( \frac{\Psi(L_0,L_1) \log K}{K^{\nicefrac{1}{4}}} \right),
    \end{equation*}
    which completes the proof.
\end{proof}

\newpage
\section{Time complexity analysis under fixed computation times}
We now prove the time complexity bounds under \Cref{assump:fixed_time}.
We start with the case of a fixed delay threshold.
\begin{restate-lemma}{\ref{lem:time_R}}[Duration of $R$ updates {\citep[Lemma 4.1]{maranjyan2025ringmaster}}]
    Let the delay threshold of \Cref{alg} be fixed as $R_k \equiv R$.
    Under \Cref{assump:fixed_time}, the time needed to complete any $R$ consecutive updates is at most
    \begin{equation*}
       t(R) \coloneqq 2 \min_{m \in [n]} H_m \left( 1 + \frac{R}{m} \right).
    \end{equation*}
\end{restate-lemma}
As an immediate corollary, if the threshold remains fixed at $R$ for the first $K$ updates, then partitioning these updates into $\lceil \nicefrac{K}{R} \rceil$ consecutive blocks and applying \Cref{lem:time_R} yields
\begin{equation*}
    T(K) \le \left\lceil \frac{K}{R} \right\rceil t(R) ~.
\end{equation*}
This blockwise estimate is the key ingredient in the proof of \Cref{thm:time_complexity}.
\subsection{Proof of \Cref{thm:time_complexity}}
\label{proof:time_complexity}
We restate the theorem for convenience.
\begin{restate-theorem}{\ref{thm:time_complexity}}[Fixed-parameter total time complexity]
    Under the assumptions and parameter choices of \Cref{thm:iteration_complexity}, under \Cref{assump:fixed_time}, the time complexity of reaching an $\varepsilon$-stationary point satisfies
    \begin{align*}
        T_\varepsilon
        =
        \mathcal{O}\left(
            \min_{m \in [n]} H_m
            \left(
                \frac{L_0\Delta_0}{\varepsilon^2}
                + \frac{L_1\Delta_0}{\varepsilon} 
                + \frac{\sqrt{L_0\Delta_0}}{\varepsilon}
                + \frac{L_0\Delta_0(\rho\sigma)^2}{m\varepsilon^4}
                + \frac{L_1^2\Delta_0(\rho\sigma)^2}{mL_0\varepsilon^2}
                + \frac{(\rho\sigma)^2}{m\varepsilon^2}
            \right)
        \right).
    \end{align*}
\end{restate-theorem}
\begin{proof}
    Define
    \begin{align*}
        A_\varepsilon &\coloneqq \frac{L_0\Delta_0}{\varepsilon^2},
        &
        B_\varepsilon &\coloneqq \frac{L_1\Delta_0}{\varepsilon},
        \\
        C_\varepsilon &\coloneqq \frac{L_0\Delta_0(\rho\sigma)^2}{\varepsilon^4},
        &
        D_\varepsilon &\coloneqq \frac{L_1^2\Delta_0(\rho\sigma)^2}{L_0\varepsilon^2},
        &
        E_\varepsilon &\coloneqq \frac{(\rho\sigma)^2}{\varepsilon^2}.
    \end{align*}
    and let $\overline{K}_\varepsilon \coloneqq A_\varepsilon+B_\varepsilon+C_\varepsilon+D_\varepsilon+E_\varepsilon$.
    By \Cref{thm:iteration_complexity}, it suffices to take
    $K_\varepsilon = c\overline{K}_\varepsilon$ for a universal constant $c\ge 1$.

    Recall that
    \begin{equation*}
        \alpha
        =
        \begin{cases}
            1, & \sigma=0,\\
            \min\left\{
                1,
                \frac{\Delta_0^{\nicefrac{1}{2}}L_0^{\nicefrac{1}{2}}}{\rho\sigma K_\varepsilon^{\nicefrac{1}{2}}}
            \right\}, & \sigma>0,
        \end{cases}
        \qquad
        R = \frac{1}{\alpha} ~.
    \end{equation*}
    We partition the $K_\varepsilon$ updates into $\lceil \nicefrac{K_\varepsilon}{R} \rceil$ consecutive blocks and apply \Cref{lem:time_R}.
    This gives
    \begin{equation*}
        T_\varepsilon
        \le
        \left\lceil \frac{K_\varepsilon}{R} \right\rceil
        t(R)
        =
        2\left\lceil K_\varepsilon\alpha \right\rceil
        \min_{m \in [n]} H_m
        \left(
            1 + \frac{1}{m\alpha}
        \right).
    \end{equation*}
    Up to constants hidden in $\mathcal{O}(\cdot)$, we can ignore the ceiling, so the last inequality gives
    \begin{equation*}
        T_\varepsilon
        =
        \mathcal{O}\left(
            \min_{m \in [n]} H_m
            \left(
                K_\varepsilon\alpha + \frac{K_\varepsilon}{m}
            \right)
        \right).
    \end{equation*}

    It remains to bound $K_\varepsilon\alpha$.
    Since $K_\varepsilon = c\overline{K}_\varepsilon$, it suffices to bound $\overline{K}_\varepsilon\alpha$.
    If $\sigma=0$, then $C_\varepsilon=D_\varepsilon = E_\varepsilon=0$ and $\alpha=1$, so
    $\overline{K}_\varepsilon\alpha=A_\varepsilon+B_\varepsilon$.
    If $\sigma>0$, then $\alpha\le 1$ and
    \begin{equation*}
        (A_\varepsilon+B_\varepsilon)\alpha
        \le
        A_\varepsilon+B_\varepsilon ~.
    \end{equation*}
    Moreover, since $c\ge 1$ and $C_\varepsilon,D_\varepsilon,E_\varepsilon$ are terms in $\overline{K}_\varepsilon$,
    we have $C_\varepsilon,D_\varepsilon,E_\varepsilon\le K_\varepsilon$, and
    \begin{align*}
        C_\varepsilon\alpha
        &\le
        C_\varepsilon
        \frac{\sqrt{L_0\Delta_0}}{\rho\sigma\sqrt{K_\varepsilon}}
        \le
        C_\varepsilon
        \frac{\sqrt{L_0\Delta_0}}{\rho\sigma\sqrt{C_\varepsilon}}
        =
        A_\varepsilon ~,
        \\
        D_\varepsilon\alpha
        &\le
        D_\varepsilon
        \frac{\sqrt{L_0\Delta_0}}{\rho\sigma\sqrt{K_\varepsilon}}
        \le
        D_\varepsilon
        \frac{\sqrt{L_0\Delta_0}}{\rho\sigma\sqrt{D_\varepsilon}}
        =
        B_\varepsilon ~, \\
        E_\varepsilon\alpha
        &\le
        E_\varepsilon
        \frac{\sqrt{L_0\Delta_0}}{\rho\sigma\sqrt{K_\varepsilon}}
        \le
        E_\varepsilon
        \frac{\sqrt{L_0\Delta_0}}{\rho\sigma\sqrt{E_\varepsilon}}
        =
        \frac{\sqrt{L_0\Delta_0}}{\varepsilon} =: G_\varepsilon ~.
    \end{align*}
    Hence $\overline{K}_\varepsilon\alpha=\mathcal{O}(A_\varepsilon+B_\varepsilon+G_\varepsilon)$, and therefore
    $K_\varepsilon\alpha=\mathcal{O}(A_\varepsilon+B_\varepsilon+G_\varepsilon)$.
    Also,
    \begin{equation*}
        \frac{K_\varepsilon}{m}
        =
        \mathcal{O}\left(
            \frac{A_\varepsilon+B_\varepsilon+C_\varepsilon+D_\varepsilon + E_\varepsilon}{m}
        \right)
        =
        \mathcal{O}\left(
            A_\varepsilon+B_\varepsilon+\frac{C_\varepsilon+D_\varepsilon+E_\varepsilon}{m}
        \right),
    \end{equation*}
    where the last equality uses $m\ge 1$.
    Combining the last three bounds gives the claimed bound.
\end{proof}

The parameter-agnostic case is slightly more delicate, since the threshold does not remain fixed throughout the run.
To handle this, we extend the same blockwise argument to a non-decreasing threshold sequence.
\begin{lemma}[Time for non-decreasing adaptive threshold]\label{lem:time_RK}
    Let $K\ge 1$, and let $R_k\ge 1$ be a non-decreasing delay threshold.
    For each integer $r \in \{\lfloor R_0 \rfloor, \dots, \lfloor R_{K-1} \rfloor\}$, let
    \begin{equation*}
        \mathcal{K}_r \coloneqq \{k\in\{0,\dots,K-1\} \mid \lfloor R_k \rfloor = r\}
    \end{equation*}
    denote the corresponding block of iterations, and define
    \begin{equation*}
        N_r \coloneqq \left\lceil \frac{|\mathcal{K}_r|}{r} \right\rceil .
    \end{equation*}
    Then the time needed by \Cref{alg} to complete its first $K$ updates satisfies
    \begin{equation*}
        T(K) \le \sum_{r=\lfloor R_0 \rfloor}^{\lfloor R_{K-1} \rfloor} N_r\, t(r) ~.
    \end{equation*}
\end{lemma}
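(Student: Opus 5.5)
Since $R_k$ is non-decreasing, each set $\mathcal{K}_r$ is a contiguous (possibly empty) interval of iteration indices, and these intervals appear in increasing order of $r$ and partition $\{0,\dots,K-1\}$. The plan is to bound the wall-clock time spent inside each block $\mathcal{K}_r$ separately and then add the bounds. Because the blocks are consecutive, the total time $T(K)$ equals the sum of the elapsed times of the blocks, where the elapsed time of a block is measured from the completion of the last update before it to the completion of its own last update. Empty blocks have $N_r=0$ and contribute nothing.

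Inside a block $\mathcal{K}_r$, every iteration $k$ has $\lfloor R_k\rfloor = r$. Delays are integers, so the acceptance rule $\delta_k<R_k$ is the same as $\delta_k<\lceil R_k\rceil$. This threshold is at least $r$, but it may be as large as $r+1$ when $R_k$ is not an integer. I would split $\mathcal{K}_r$ into $N_r=\lceil|\mathcal{K}_r|/r\rceil$ consecutive groups of at most $r$ updates and show that each group finishes within time $t(r)$. The bound $T(K)\le\sum_r N_r\,t(r)$ then follows by summing over groups and blocks.

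The main step is this per-group claim. \Cref{lem:time_R} is stated for a threshold that is constant over the whole run, so I cannot cite it directly. Instead I would repeat its argument, which only concerns the window of updates being timed. Fix $m\in[n]$ and consider the $m$ fastest workers. Ringmaster's argument shows that within any window of $r$ consecutive updates, a gradient from a fast worker computed at an iterate from inside the window has delay less than $r$, so it is accepted. It also shows that within time $2H_m(1+r/m)$ those workers produce at least $r$ such fresh gradients; the factor $2$ absorbs the computation that is in flight when the window opens.

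Two points need checking.
\begin{itemize}
\item \emph{Monotonicity.} Acceptance is monotone in the threshold. Every gradient that would be accepted under threshold $r$ is also accepted under the actual threshold $R_k\ge r$, and accepting extra gradients only speeds up progress through the window.
\item \emph{Stragglers.} Gradients from slow workers that happen to arrive inside the window can only add accepted updates; they never delay the fast workers' counting argument.
\end{itemize}
With these, the Ringmaster proof goes through for a window whose thresholds are all at least $r$, and each group completes within $t(r)$. This is where I expect the main obstacle. The original counting assumes that rejections and acceptances are governed by the fixed $R$, so I must verify that the argument only uses the lower bound $R_k\ge r$ inside the window and never an upper bound.

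Finally, I would also need $r\ge1$ so that $t(r)$ is defined. This holds because $R_k\ge1$ by assumption. Summing over groups gives the stated bound on $T(K)$.
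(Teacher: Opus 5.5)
Your proposal is correct and follows the paper's proof. Both partition $\{0,\dots,K-1\}$ into the contiguous blocks $\mathcal{K}_r$, cut each block into $N_r$ groups of at most $r$ updates, bound each group's duration by $t(r)$ because $R_k\ge r$ throughout it, and sum. The one difference is that the paper just asserts that replacing $R_k$ by $r$ ``can only make the rule more conservative'' and invokes \Cref{lem:time_R}, whereas you re-run the Ringmaster counting argument inside the window, which is the more careful justification. The obstacle you flag does resolve: that argument only uses that gradients computed at in-window iterates have delay at most $r-1<R_k$, with each worker's first, possibly stale, return simply not counted. It never needs an upper bound on $R_k$ or any information about the pre-window history.
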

\begin{proof}
    Split the iterations $\{0,1,\dots,K-1\}$ into the blocks $\mathcal{K}_r$.
    Since $R_k$ is non-decreasing, all iterations in each $\mathcal{K}_r$ appear next to each other in time.
    For a fixed $r$, partition $\mathcal{K}_r$ into
    \begin{equation*}
        N_r = \left\lceil \frac{|\mathcal{K}_r|}{r} \right\rceil
    \end{equation*}
    consecutive groups, each containing at most $r$ iterations.
    Throughout each such group, $\lfloor R_k\rfloor=r$, and hence $R_k\ge r$.
    Replacing the actual threshold by the fixed threshold $r$ can only make the rule more conservative.
    If a group contains exactly $r$ iterations, then \Cref{lem:time_R} shows that it is completed within time at most $t(r)$.
    If the last group contains fewer than $r$ iterations, its duration is still at most $t(r)$, since completing fewer than $r$ updates cannot take longer than completing $r$ updates under the same threshold.
    Therefore, the total time spent on all iterations in $\mathcal{K}_r$ is at most $N_r t(r)$.
    Summing this bound over all values of $r$ gives
    \begin{equation*}
        T(K) \le \sum_{r=\lfloor R_0 \rfloor}^{\lfloor R_{K-1} \rfloor} N_r\, t(r) ~.
    \end{equation*}
\end{proof}

\subsection{Proof of \Cref{lemma:time_sqrt}}\label{proof:sqrt_threshold}
We now specialize \Cref{lem:time_RK} to the square-root threshold schedule.
\begin{restate-lemma}{\ref{lemma:time_sqrt}}[Time complexity for square-root threshold]
    Let the delay threshold be $R_0=1$ and $R_k = \sqrt{k}$ for $k \ge 1$.
    Under \Cref{assump:fixed_time}, for any $K\ge 1$, the time needed to complete $K$ iterations satisfies
    \begin{equation*}
        T(K)
        = \mathcal{O}\left(
            \min_{m \in [n]} H_m \left( \sqrt{K} + \frac{K}{m} \right)
        \right).
    \end{equation*}
\end{restate-lemma}
\begin{proof}
    Apply \Cref{lem:time_RK} with $R_0=1$ and $R_k = \sqrt{k}$ for $k \ge 1$.
    For any integer $r \ge 1$ and any $k\ge 1$, the condition $\lfloor R_k \rfloor = r$ is equivalent to
    \begin{equation*}
        r^2 \le k < (r+1)^2.
    \end{equation*}
    The extra iteration $k=0$ belongs to $\mathcal{K}_1$.
    Therefore, for every $r\ge 1$,
    \begin{equation*}
        |\mathcal{K}_r|\le 2r+2 ~.
    \end{equation*}
    Hence,
    \begin{equation*}
        N_r = \left\lceil \frac{|\mathcal{K}_r|}{r} \right\rceil
        \le \left\lceil \frac{2r+2}{r} \right\rceil
        \le 4 ~.
    \end{equation*}
    Let $r_{\max} \coloneqq \lfloor R_{K-1}\rfloor$.
    Since $r_{\max}\le \sqrt{K}$, using \Cref{lem:time_RK} and \Cref{lem:time_R}, we obtain
    \begin{equation*}
        T(K) \le \sum_{r=1}^{r_{\max}} N_r\, t(r)
        \le \sum_{r=1}^{r_{\max}} 4\, t(r)
        = 8 \sum_{r=1}^{r_{\max}} \min_{m \in [n]} H_m \left( 1 + \frac{r}{m} \right) .
    \end{equation*}
    Since $\sum_r \min_m a_{r,m} \le \min_m \sum_r a_{r,m}$ for nonnegative $a_{r,m}$, it follows that
    \begin{align*}
        T(K)
        &\le 8 \min_{m \in [n]}
            H_m \sum_{r=1}^{r_{\max}} \left( 1 + \frac{r}{m} \right)
        \\
        &= 8 \min_{m \in [n]} \left(
            H_m r_{\max}
            + H_m \frac{r_{\max}(r_{\max}+1)}{2m}
        \right).
    \end{align*}
    We conclude that
    \begin{equation*}
        T(K)
        = \mathcal{O}\left( \min_{m \in [n]} H_m \left( \sqrt{K} + \frac{K}{m} \right) \right).
    \end{equation*}
    This proves the claim.
\end{proof}
\newpage
\section{Time complexity under time-varying computation rates}\label{sec:arbitrary_time}
\Cref{assump:fixed_time} assumes that each worker has a constant gradient-computation time throughout the run.
This leads to explicit time complexity bounds in terms of the harmonic means $H_m$, but it does not capture clusters whose worker speeds change over time because of interruptions or fluctuating load \citep{dutta2018slow,li2020federated, maranjyan2025gradskip}.
In this section we therefore move to a more general model with time-varying computation rates.
\subsection{Universal computation model}
\label{sec:universal_computation_model}
We use the universal computation model of \citet{tyurin2024tighttimecomplexitiesparallel}.
\begin{assumption}[Universal computation model]
    \label{assump:universal_time}
    For each worker $i \in [n]$, there exists a nonnegative function
    \begin{equation*}
        p_i : \R_{+} \to \R_{+}
    \end{equation*}
    that is continuous almost everywhere.
    For any $0 \le T_1 \le T_2$, the number of stochastic gradients completed by worker $i$ during the interval $[T_1,T_2]$ is
    \begin{equation*}
        N_i(T_1,T_2) \coloneqq \left\lfloor \int_{T_1}^{T_2} p_i(s)\,ds \right\rfloor .
    \end{equation*}
\end{assumption}
This assumption includes \Cref{assump:fixed_time} as a special case: if $p_i(s)=1/\tau_i$ for all $s \ge 0$, then
\begin{equation*}
    N_i(T_1,T_2) = \left\lfloor \frac{T_2-T_1}{\tau_i} \right\rfloor .
\end{equation*}
\subsection{Fixed-parameter schedule}
We begin with the fixed-parameter regime, which is the counterpart of \Cref{thm:time_complexity} under \Cref{assump:universal_time}.
Unlike in the fixed computation model, the duration of a block of $R$ updates now depends on the starting time of that block.
For this reason, the resulting time bound is recursive rather than explicit.
\begin{lemma}[Duration of $R$ updates under the universal computation model {\citep[Lemma 5.1]{maranjyan2025ringmaster}}]
    \label{lem:time_R_arbitrary}
    Assume \Cref{assump:universal_time}.
    Let the delay threshold of \Cref{alg} be fixed as $R_k \equiv R$.
    Assume that an iteration starts at time $T_0$.
    Then the next $R$ updates of \Cref{alg} are completed within elapsed time
    \begin{equation*}
           t(R;T_0) \eqdef \min \left\{t \geq 0 : \sum \limits_{i=1}^{n} \flr{\frac{1}{4} \int_{T_0}^{T_0 + t} p_i(s)\, ds} \geq R\right\}.
    \end{equation*}
\end{lemma}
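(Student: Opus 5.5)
We plan to prove \Cref{lem:time_R_arbitrary} by a counting argument that mirrors the proof of \Cref{lem:time_R} (Lemma 4.1 of \citet{maranjyan2025ringmaster}), replacing the fixed per-worker rate $1/\tau_i$ with the integrated rate $\int p_i$. Let $t^\ast \eqdef t(R;T_0)$ and consider the window $[T_0, T_0+t^\ast]$. Suppose, for contradiction, that fewer than $R$ updates are completed in this window. Then throughout the window the server iterate index advances by at most $R-1$, starting from some index $k_0$, so every iterate that is current at some moment of the window has index in $\{k_0,\dots,k_0+R-1\}$.

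\textbf{Step 1: each worker's window splits into consecutive "pieces" of integrated rate $\ge 1$.} For worker $i$, we partition $[T_0,T_0+t^\ast]$ into $q_i \eqdef \lfloor \frac14\int_{T_0}^{T_0+t^\ast} p_i\rfloor$ consecutive subintervals, each carrying integrated rate at least $4$. By \Cref{assump:universal_time}, every subinterval of integrated rate at least $2$ contains at least one \emph{complete} gradient computation of worker $i$ that starts and ends inside it, whatever the phase of the computation in progress at the subinterval's start (one partial computation plus one full one needs rate at most $2$). The factor $4$ gives enough slack to use two consecutive such sub-subintervals, as we explain next.

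\textbf{Step 2: gradients computed entirely inside the window are accepted.} A computation that starts at time $s\ge T_0$ uses the iterate current at $s$, which has index at least $k_0$. When it returns, the current index is at most $k_0+R-1$, so its delay is at most $R-1<R$, and it is accepted. Accepted gradients are exactly updates. Counting, in each worker's $q_i$ pieces, at least one complete fresh computation per piece yields at least $\sum_i q_i \ge R$ updates in the window, by the definition of $t^\ast$. This contradicts the assumption that fewer than $R$ updates occur.

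\textbf{Main obstacle.} The delicate part is Step 1. We must ensure that the computations counted are distinct and that each genuinely starts inside the window. We must also handle the restart rule correctly: after a rejection, a worker restarts on the current $x_k$; after an acceptance, it restarts on $x_{k+1}$. In either case the new computation begins inside the window on an iterate of index $\ge k_0$. The issue is that the first in-flight computation (started before $T_0$) may be stale, and it may consume part of the rate budget. This is why we spend one extra unit of integrated rate per piece, and why the constant $\frac14$ appears in the definition of $t(R;T_0)$. We would verify carefully that, with the floor $\lfloor\cdot\rfloor$ in $N_i$, a piece of integrated rate $4$ always contains a full computation begun and finished within it. The rest of the argument is bookkeeping identical to the fixed-time case.
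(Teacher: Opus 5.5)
Your proof is correct. The paper itself gives no argument for this lemma; it simply imports it from \citet[Lemma 5.1]{maranjyan2025ringmaster}. Your argument is the natural universal-model analogue of the fixed-time counting argument behind \Cref{lem:time_R}, so it supplies a self-contained proof where the paper relies on a citation. In outline: assume fewer than $R$ updates occur in $[T_0,T_0+t(R;T_0)]$. Then every computation that both starts and finishes in the window uses an iterate of index at least $k_0$ and returns while the index is at most $k_0+\lceil R\rceil-1$. Its delay is therefore below $R$, so it is accepted. Counting at least $\sum_i\lfloor\frac14\int_{T_0}^{T_0+t}p_i\rfloor\ge R$ such computations gives the contradiction.

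Three small remarks. First, your Step~1 proves more than you use. Workers in \Cref{alg} are never idle, so within a piece the in-flight computation consumes at most one unit of integrated rate and the next full computation exactly one more. Pieces of integrated rate $2$ therefore suffice, and the statement holds with $\frac12$ in place of $\frac14$. The $\frac14$ is slack inherited from the cited lemma, and your remarks about ``two consecutive sub-subintervals'' and ``one extra unit per piece'' as the reason for $\frac14$ can be dropped. Second, state explicitly which reading of \Cref{assump:universal_time} you use: a computation started at time $s$ finishes at the first $s'$ with $\int_s^{s'}p_i\ge 1$. The assumption as written only specifies counts over intervals. Under that reading, the piece construction (via continuity of $s\mapsto\int_{T_0}^{s}p_i$) and the distinctness of the counted computations are immediate. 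Third, since in the paper $R=1/\alpha$ need not be an integer, write the delay bound as $\lceil R\rceil-1<R$ rather than $R-1$; the argument is otherwise unchanged.
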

\begin{theorem}[Fixed-parameter time complexity under the universal computation model]
    \label{thm:time_complexity_fixed_universal}
    Under the assumptions and parameter choices of \Cref{thm:iteration_complexity}, let $K_\varepsilon$ be the number of iterations needed to reach an $\varepsilon$-stationary point, and let $R=1/\alpha$ be the corresponding fixed delay threshold. Then, under \Cref{assump:universal_time}, the time complexity of reaching an $\varepsilon$-stationary point satisfies
    \begin{equation*}
        T_\varepsilon \le T_{\lceil K_\varepsilon/R \rceil},
    \end{equation*}
    where the sequence $\{T_k\}_{k \ge 0}$ is defined recursively by
    \begin{align*}
        T_0 &\coloneqq 0, \\
        T_k &\coloneqq T_{k-1} + t(R;T_{k-1}), \qquad k \ge 1 .
    \end{align*}
\end{theorem}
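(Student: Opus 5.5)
The plan mirrors the blockwise argument behind \Cref{thm:time_complexity}, with \Cref{lem:time_R} replaced by its time-dependent counterpart \Cref{lem:time_R_arbitrary}.

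First, \Cref{thm:iteration_complexity} fixes the constant threshold $R=1/\alpha$ and guarantees that after $K_\varepsilon$ accepted updates the iterates contain an $\varepsilon$-stationary point in expectation. So it suffices to show that the first $K_\varepsilon$ updates are finished by time $T_{\lceil K_\varepsilon/R\rceil}$.

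We split the update indices $\{0,\dots,K_\varepsilon-1\}$ into $N\coloneqq\lceil K_\varepsilon/R\rceil$ consecutive blocks of $R$ updates each; the last block may be shorter. We then prove by induction on $j\in\{0,\dots,N\}$ that the first $jR$ updates are completed by time $T_j$ (for $j=N$, read $\min\{jR,K_\varepsilon\}$).
\begin{itemize}
    \item \emph{Base case.} For $j=0$ the claim is trivial since $T_0=0$.
    \item \emph{Inductive step.} Suppose the first $(j-1)R$ updates are done by time $T_{j-1}$. Then the next iteration starts at some time $S\le T_{j-1}$. \Cref{lem:time_R_arbitrary} with starting time $S$ would give completion by $S+t(R;S)$, but this is not directly comparable to $T_{j-1}+t(R;T_{j-1})$.
\end{itemize}

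The main obstacle is this mismatch. The cleanest fix is to apply the lemma with starting time $T_{j-1}$ itself. The argument behind \citep[Lemma 5.1]{maranjyan2025ringmaster} counts gradient computations that both start and finish within $[T_0,T_0+t]$; that is where the factor $\frac14$ and the floor come from. It only uses the fact that at time $T_0$ the server's counter is at some index $k$ and the threshold rule is in force, and it does not need $T_0$ to be exactly the moment an update occurs. So I would either restate the lemma for an arbitrary time $T_0$ at which fewer than $R$ of the current block's updates have occurred, or argue monotonically: any updates of block $j$ that already happened before $T_{j-1}$ only reduce the number still needed. Counting from $T_{j-1}$, at least $R$ further updates occur by $T_{j-1}+t(R;T_{j-1})=T_j$. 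This is because each worker that finishes $\lfloor\frac14\int p_i\rfloor$ fresh-enough gradients in that window contributes accepted updates, exactly as in the lemma's proof. For the last, possibly shorter block, we use that completing fewer than $R$ updates takes no longer than completing $R$, as in \Cref{lem:time_RK}.

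Concluding, the induction gives that all $K_\varepsilon$ updates are completed by $T_N=T_{\lceil K_\varepsilon/R\rceil}$, hence $T_\varepsilon\le T_{\lceil K_\varepsilon/R\rceil}$. The only delicate point is justifying that \Cref{lem:time_R_arbitrary} can be invoked from $T_{j-1}$ rather than from the exact start time of the block. If a direct reinterpretation is not acceptable, I would re-derive the lemma's counting argument on the interval $[T_{j-1},T_j]$.
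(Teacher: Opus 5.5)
Your proposal is correct and takes the same route as the paper: partition the first $K_\varepsilon$ updates into $\lceil K_\varepsilon/R\rceil$ blocks of length at most $R$ and apply \Cref{lem:time_R_arbitrary} blockwise. The paper does not comment on the start-time mismatch you flag, and it closes without reopening the cited lemma's proof: since $p_i\ge 0$, the map $T\mapsto T+t(R;T)=\min\{u\ge T:\sum_{i}\lfloor\frac14\int_T^u p_i(s)\,ds\rfloor\ge R\}$ is non-decreasing, so the lemma applied at the true block start $S\le T_{j-1}$ already gives completion by $S+t(R;S)\le T_{j-1}+t(R;T_{j-1})=T_j$.
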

\begin{proof}
    Partition the first $K_\varepsilon$ updates into $\lceil K_\varepsilon/R \rceil$ consecutive blocks of length at most $R$. Applying \Cref{lem:time_R_arbitrary} blockwise yields the stated recursion and therefore the bound
    \begin{equation*}
        T_\varepsilon \le T_{\lceil K_\varepsilon/R \rceil}.
    \end{equation*}
\end{proof}
Here $K_\varepsilon$ denotes the iteration complexity from \Cref{thm:iteration_complexity}, namely, the number of updates needed to reach an $\varepsilon$-stationary point. Thus the theorem converts the iteration bound into a time bound by grouping these $K_\varepsilon$ updates into blocks of size $R$ and tracking their completion times through the recursion above. In the classical smooth case $L_1=0$, the iteration complexity from \Cref{thm:iteration_complexity} has the same $\varepsilon$-dependence as \ringmaster, up to the norm-equivalence factor $\rho$. In the Euclidean setting, where $\rho=1$, this recovers the \ringmaster scaling and hence the corresponding optimal time complexity.
We next extend \Cref{lem:time_R_arbitrary} to non-decreasing threshold sequences.
\subsection{Adaptive threshold schedule}
We now turn to the parameter-agnostic schedule, where the threshold increases with the iteration counter.
We first derive a generic recursive bound for arbitrary non-decreasing thresholds and then specialize it to the square-root rule $R_0=1$ and $R_k = \sqrt{k}$ for $k \ge 1$.
\begin{lemma}[Time for general adaptive threshold under the universal computation model]
    \label{lem:time_RK_arbitrary}
    Let $R_k$ be a non-decreasing delay threshold.
    For each integer $r \in \{\lfloor R_0 \rfloor, \dots, \lfloor R_{K-1} \rfloor\}$, let
    \begin{equation*}
        \mathcal{K}_r \coloneqq \{k \mid \lfloor R_k \rfloor = r\}
    \end{equation*}
    denote the block of iterations where the threshold is constant.
    For every such $r$, partition $\mathcal{K}_r$ into
    \begin{equation*}
        N_r \coloneqq \left\lceil \frac{|\mathcal{K}_r|}{r} \right\rceil
    \end{equation*}
    consecutive sub-blocks, each containing at most $r$ iterations.

    Let $B \coloneqq \sum_{r=\lfloor R_0 \rfloor}^{\lfloor R_{K-1} \rfloor} N_r$, and enumerate all these sub-blocks in chronological order.
    Denote by $r_j$ the threshold value attached to the $j$-th sub-block, where $j = 1, \dots, B$.
    Define the sequence $\{T_j\}_{j=0}^{B}$ recursively by
    \begin{align*}
        T_0 &\coloneqq 0, \\
        T_j &\coloneqq T_{j-1} + t(r_j; T_{j-1}), \qquad j = 1, \dots, B.
    \end{align*}
    Then the time needed to complete the first $K$ iterations of \Cref{alg} satisfies
    \begin{equation*}
        T(K) \le T_B.
    \end{equation*}
\end{lemma}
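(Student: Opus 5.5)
The plan mirrors the proof of \Cref{lem:time_RK}, with \Cref{lem:time_R_arbitrary} used in place of \Cref{lem:time_R}. Since $R_k$ is non-decreasing, $\lfloor R_k\rfloor$ is non-decreasing in $k$. Each level set $\mathcal{K}_r$ is therefore a contiguous interval of iteration indices, and these intervals appear in increasing order of $r$. Cutting each $\mathcal{K}_r$ into $N_r$ consecutive sub-blocks of at most $r$ iterations gives a chronological list of $B$ sub-blocks that together cover $\{0,\dots,K-1\}$. We index them by $j=1,\dots,B$ with attached thresholds $r_j$.

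The main step is an induction on $j$. Let $S_j$ be the actual time at which the last iteration of the first $j$ sub-blocks is completed, with $S_0=0$. We prove $S_j\le T_j$.

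Suppose $S_{j-1}\le T_{j-1}$. Sub-block $j$ starts no later than $T_{j-1}$. Throughout it, the acceptance rule is $\delta_k<R_k$ with $R_k\ge\lfloor R_k\rfloor=r_j$. Hence every gradient accepted under the fixed threshold $r_j$ is also accepted by the actual rule, and the actual run completes updates at least as fast as a run with threshold $r_j$. By \Cref{lem:time_R_arbitrary}, starting at time $T_{j-1}$, threshold $r_j$ completes $r_j$ updates within elapsed time $t(r_j;T_{j-1})$. A sub-block with fewer than $r_j$ updates finishes no later than that. This gives $S_j\le T_{j-1}+t(r_j;T_{j-1})=T_j$, and taking $j=B$ yields $T(K)=S_B\le T_B$.

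Two points need care, and I expect both to require an argument rather than a citation.

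\emph{Starting earlier than $T_{j-1}$.} If the sub-block actually starts at some $S_{j-1}<T_{j-1}$, the bound of \Cref{lem:time_R_arbitrary} is anchored at $S_{j-1}$, not at $T_{j-1}$. The natural fix is that $T_0+t(R;T_0)$ is non-decreasing in the start time $T_0$. This holds because the integrals $\int_{T_0}^{T_0+t}p_i$ count work in a window that only slides forward. Equivalently, the completion time of $r_j$ updates started at $S_{j-1}$ is at most $\min\{S': \text{enough work in }[S_{j-1},S']\}$, which is at most $T_j$, since the window $[T_{j-1},T_j]$ already contains enough work and $[S_{j-1},T_j]\supseteq[T_{j-1},T_j]$.

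\emph{Validity of the monotonicity comparison.} The claim that a larger acceptance threshold cannot slow completion is the subtle part, because the dynamics change: accepting a gradient alters which workers restart and when. I would not argue this sample-path monotonicity directly. Instead I would go back to the proof mechanism of \citep[Lemma 5.1]{maranjyan2025ringmaster}. That argument counts, inside a window, the gradients that each worker starts and finishes after the window opens. Such gradients have delay below $r_j\le R_k$ and are therefore accepted under the actual rule. Hence that lemma's counting bound applies verbatim to any threshold that is at least $r_j$ throughout the window.

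This is the same justification the paper uses implicitly in \Cref{lem:time_RK}, so I will invoke it in the same way.
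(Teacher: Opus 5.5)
Your proposal is correct and follows the paper's proof. Like the paper, you split the iterations into the contiguous level sets $\mathcal{K}_r$, cut these into sub-blocks of at most $r$ iterations, and induct over the sub-blocks by applying \Cref{lem:time_R_arbitrary} with the more conservative threshold $r_j \le R_k$. Your two extra points fill in steps the paper only asserts, and both are sound: first, that $T_0 + t(R;T_0)$ is non-decreasing in the start time $T_0$; second, that the threshold comparison should rest on the fresh-gradient counting argument behind \citep[Lemma 5.1]{maranjyan2025ringmaster} rather than on a sample-path monotonicity claim.
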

\begin{proof}
    Partition the iterations $\{0,1,\dots,K-1\}$ into disjoint blocks $\mathcal{K}_r$ according to the integer value of $\lfloor R_k \rfloor$.
    Since $R_k$ is non-decreasing, all iterations in each $\mathcal{K}_r$ appear next to each other in time.

    Fix some $r$ and consider one of its sub-blocks.
    By construction, this sub-block contains at most $r$ consecutive iterations, and throughout this sub-block the threshold satisfies
    \begin{equation*}
        r \le R_k < r+1.
    \end{equation*}
    Replacing the actual threshold by $r$ can only make the rule more conservative on this sub-block.
    Hence \Cref{lem:time_R_arbitrary} implies that, if this sub-block starts at time $T$, then all its iterations are completed within time at most $t(r;T)$.

    Now enumerate all sub-blocks in chronological order and let $r_j$ be the threshold associated with the $j$-th one.
    Starting from time $T_0 = 0$, the first sub-block is completed by time at most
    \begin{equation*}
        T_1 = T_0 + t(r_1;T_0).
    \end{equation*}
    Repeating the same argument inductively, if the first $j-1$ sub-blocks are completed by time at most $T_{j-1}$, then the $j$-th sub-block is completed by time at most
    \begin{equation*}
        T_j = T_{j-1} + t(r_j;T_{j-1}).
    \end{equation*}
    After all
    \begin{equation*}
        B = \sum_{r=\lfloor R_0 \rfloor}^{\lfloor R_{K-1} \rfloor} \left\lceil \frac{|\mathcal{K}_r|}{r} \right\rceil
    \end{equation*}
    sub-blocks are processed, all first $K$ iterations have been completed. Therefore,
    \begin{equation*}
        T(K) \le T_B.
    \end{equation*}
\end{proof}

This is the universal-model analogue of \Cref{lem:time_RK}: it reduces the analysis of a non-decreasing threshold sequence to a recursion over threshold-constant sub-blocks.

\begin{lemma}[Square-root threshold under the universal computation model]
    \label{lem:time_sqrt_arbitrary}
    Let $R_0=1$ and $R_k = \sqrt{k}$ for $k \ge 1$.
    Under \Cref{assump:universal_time}, the time needed to complete the first $K$ iterations of \Cref{alg} is bounded by
    \begin{equation*}
        T(K) \le S_{3\lfloor \sqrt{K} \rfloor + 1},
    \end{equation*}
    where the sequence $\{S_j\}_{j=0}^{3\lfloor \sqrt{K} \rfloor + 1}$ is defined recursively as
    \begin{align*}
        S_0 &\coloneqq 0, \\
        S_j &\coloneqq S_{j-1} + t\!\left(\max\left\{1,\left\lceil \frac{j-1}{3} \right\rceil\right\}; S_{j-1}\right), \qquad j = 1, \dots, 3\lfloor \sqrt{K} \rfloor + 1 .
    \end{align*}
\end{lemma}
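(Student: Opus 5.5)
The plan is to specialize \Cref{lem:time_RK_arbitrary} to the square-root schedule. Then I will compare its recursion $\{T_j\}_{j=0}^B$ with the explicit recursion $\{S_j\}$ by a monotonicity (domination) argument.

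First, I identify the blocks. As in the proof of \Cref{lemma:time_sqrt}, for $r\ge 1$ the block $\mathcal{K}_r$ consists of $r^2\le k<(r+1)^2$, restricted to $k\le K-1$. The block $\mathcal{K}_1$ additionally contains $k=0$, so $|\mathcal{K}_1|\le 4$ and $|\mathcal{K}_r|\le 2r+1$ for $r\ge 2$. Hence $N_1\le 4$ and $N_r\le\lceil (2r+1)/r\rceil=3$ for $r\ge2$, with $r$ ranging up to $r_{\max}=\lfloor\sqrt{K-1}\rfloor\le\lfloor\sqrt K\rfloor$. The total number of sub-blocks is therefore $B\le 4+3(r_{\max}-1)\le 3\lfloor\sqrt K\rfloor+1$. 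The chronological thresholds are $r_j=1$ for the first $N_1$ sub-blocks, then $r_j=2$ for the next $N_2$, and so on.

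Second, I pad the sequence to a canonical one. Define $\tilde r_j\coloneqq\max\{1,\lceil (j-1)/3\rceil\}$ for $j=1,\dots,3\lfloor\sqrt K\rfloor+1$. This gives four indices with value $1$ ($j=1,\dots,4$), three with value $2$ ($j=5,6,7$), and so on up to $\lfloor\sqrt K\rfloor$. The claim is that the actual sequence $(r_j)_{j\le B}$ can be embedded into $(\tilde r_j)$ by inserting extra sub-blocks. Each level $r$ gets at most as many slots in the actual sequence as in the canonical one, so the actual sub-blocks of level $r$ can be placed in some of the canonical slots of level $r$.

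Third, I need two monotonicity facts about $t(r;T)$ from \Cref{lem:time_R_arbitrary}:
\begin{itemize}
\item[(i)] $T\mapsto T+t(r;T)$ is non-decreasing, since it is the first time by which $R$ "quarter-units" of work accumulate after $T$, and the integrals $\int_T^{T+t}p_i$ shift monotonically.
\item[(ii)] Inserting an extra sub-block can only increase the final time, since $t(r;T)\ge 0$.
\end{itemize}

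Combining these, I run an induction along the canonical indices. I maintain that the completion time of the actual sub-blocks processed so far is at most $S_j$. At an index where an actual sub-block sits, fact (i) propagates the inequality. At an index where a dummy is inserted, the bound only grows. This yields $T_B\le S_{3\lfloor\sqrt K\rfloor+1}$, and \Cref{lem:time_RK_arbitrary} gives $T(K)\le T_B$.

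The main obstacle is making fact (i) rigorous. Specifically, I must show that $\Phi_r(T)\coloneqq T+t(r;T)$ is non-decreasing in $T$, because the floor inside the sum could a priori break monotonicity. My first attempt: for $T\le T'$, at time $\Phi_r(T')$ each integral $\int_{T}^{\Phi_r(T')}p_i\ge\int_{T'}^{\Phi_r(T')}p_i$ because $p_i\ge0$. Therefore the defining condition for $t(r;T)$ already holds at elapsed time $\Phi_r(T')-T$, which gives $\Phi_r(T)\le\Phi_r(T')$. The floors are monotone in their argument, so this should go through.

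A secondary point to check is the off-by-one in the counts for $\mathcal{K}_1$ and the top level. This is where the $+1$ in $3\lfloor\sqrt K\rfloor+1$ comes from.
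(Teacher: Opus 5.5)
Your proof is correct, but it handles the comparison step differently from the paper. The paper asserts that the $j$-th actual sub-block has threshold $r_j\le \max\{1,\lceil (j-1)/3\rceil\}$. It then dominates the recursion $T_j=T_{j-1}+t(r_j;T_{j-1})$ by $\{S_j\}$, citing only monotonicity of $t(r;T)$ in $r$.

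You instead pad the actual sequence with dummy sub-blocks, so that each actual sub-block sits in a canonical slot of the \emph{same} level. You propagate the bound using $t\ge 0$ at dummy slots and monotonicity of $\Phi_r(T)=T+t(r;T)$ in $T$ at real slots, so you never need monotonicity in $r$.

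Your route has two advantages. First, it proves the monotonicity in the start time that any comparison of two such recursions needs. The step $\Phi_{r_j}(T_{j-1})\le\Phi_{\tilde r_j}(S_{j-1})$ uses it, and the paper leaves it unstated. Second, it uses only upper bounds on the counts $N_r$, which is exactly what the counting step provides. The paper's inference ``each level contributes at most three sub-blocks, so the $j$-th threshold is at most $\lceil (j-1)/3\rceil$'' runs the wrong way in general: fewer sub-blocks per level make the thresholds climb \emph{faster}.

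The paper's conclusion still holds here because every level below the top one is full. Level $1$ gives $N_1=4$, and each full level $r\ge 2$ gives $N_r=\lceil (2r+1)/r\rceil=3$. So the actual threshold sequence is literally a prefix of $1,1,1,1,2,2,2,3,3,3,\dots$, which gives $T_j=S_j$ for $j\le B$, and the lemma then follows from $t\ge 0$ alone. That observation yields the shortest proof. Your padding argument is the robust version, and it would survive schedules whose levels are not exactly full.
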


\begin{proof}
    We apply \Cref{lem:time_RK_arbitrary} with $R_0=1$ and $R_k = \sqrt{k}$ for $k \ge 1$.
    For every integer $r \ge 1$ and every $k \ge 1$, the condition $\lfloor \sqrt{k} \rfloor = r$ is equivalent to
    \begin{equation*}
        r^2 \le k < (r+1)^2,
    \end{equation*}
    The extra iteration $k=0$ belongs to $\mathcal{K}_1$. Hence
    \begin{equation*}
        |\mathcal{K}_r| \le 2r+2
    \end{equation*}
    for all $r \ge 1$. Therefore $N_1 \le 4$, while $N_r \le 3$ for all $r \ge 2$.

    Let $r_{\max} \coloneqq \lfloor \sqrt{K} \rfloor$.
    Hence the total number of sub-blocks is at most
    \begin{equation*}
        B \le 4 + 3(r_{\max}-1) = 3r_{\max}+1.
    \end{equation*}
    Moreover, after the first threshold level, each new level contributes at most three additional sub-blocks, so the threshold attached to the $j$-th sub-block is at most
    \begin{equation*}
        \max\left\{1,\left\lceil \frac{j-1}{3} \right\rceil\right\}.
    \end{equation*}
    Therefore, using the monotonicity of $t(r;T)$ in $r$, the recursion from \Cref{lem:time_RK_arbitrary} is dominated by
    \begin{align*}
        S_0 &\coloneqq 0, \\
        S_j &\coloneqq S_{j-1} + t\!\left(\max\left\{1,\left\lceil \frac{j-1}{3} \right\rceil\right\}; S_{j-1}\right), \qquad j = 1, \dots, 3r_{\max}+1,
    \end{align*}
    and the resulting completion time satisfies
    \begin{equation*}
        T(K) \le S_{3r_{\max}+1} = S_{3\lfloor \sqrt{K} \rfloor + 1}.
    \end{equation*}
\end{proof}

\begin{theorem}[Parameter-agnostic time complexity under the universal computation model]
    \label{thm:time_arbitrary}
    Consider \Cref{alg} with
    \begin{equation*}
        \alpha_0 = 1,\qquad
        \alpha_k = \frac{1}{\sqrt{k}}\quad(k \ge 1),\qquad
        R_0 = 1,\qquad
        R_k = \sqrt{k}\quad(k \ge 1),
        \qquad
        \eta_k = \frac{\eta}{(k+1)^{3/4}}
    \end{equation*}
    Let
    \begin{equation*}
        K_{\varepsilon}
        =
        \widetilde{\mathcal{O}}\left(\frac{\Psi^4}{\varepsilon^4}\right)
    \end{equation*}
    be the iteration complexity from \Cref{thm:iteration_complexity_param_agnostic}.
    Then, under \Cref{assump:universal_time}, the time complexity of reaching an $\varepsilon$-stationary point satisfies
    \begin{equation*}
        T_{\varepsilon}
        \le
        S_{3\lfloor \sqrt{K_{\varepsilon}} \rfloor + 1}
        =
        S_{\widetilde{\mathcal{O}}(\Psi^2/\varepsilon^2)},
    \end{equation*}
    Here and below, $\widetilde{\mathcal{O}}(\cdot)$ hides logarithmic factors in $1/\varepsilon$.
    The sequence $\{S_j\}$ is defined recursively by
    \begin{align*}
        S_0 &\coloneqq 0, \\
        S_j &\coloneqq S_{j-1} + t\!\left(\max\left\{1,\left\lceil \frac{j-1}{3} \right\rceil\right\}; S_{j-1}\right).
    \end{align*}
\end{theorem}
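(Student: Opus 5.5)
The proof composes the iteration bound of \Cref{thm:iteration_complexity_param_agnostic} with the time bound for the square-root schedule under the universal model, \Cref{lem:time_sqrt_arbitrary}. First I will check that the parameters in the statement are exactly those of \Cref{thm:iteration_complexity_param_agnostic}. We have $\alpha_0=1$ and $\alpha_k=k^{-1/2}$. The threshold $R_k=\sqrt{k}=1/\alpha_k$ for $k\ge 1$, with $R_0=1$. The stepsize is $\eta_k=\eta/(k+1)^{3/4}$, possibly rescaled by the constant $1/17$ in the $L_1>0$ cases, which only changes constants in $\Psi$. That theorem then guarantees that after $K_\varepsilon=\widetilde{\mathcal{O}}(\Psi^4/\varepsilon^4)$ accepted updates, $\min_{k<K_\varepsilon}\Exp{\norm{\nabla f(x_k)}_*}\le\varepsilon$.

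The key observation is that the iteration guarantee holds for any realization of the delays $\delta_k$, provided every accepted gradient satisfies $\delta_k<R_k$. The proof of \Cref{thm:iteration_complexity_param_agnostic} uses only this bound on the delays, through \Cref{lem:error_bound}, and never the actual arrival pattern. The arrival pattern under \Cref{assump:universal_time} is deterministic given the $p_i$, so the iteration bound transfers unchanged to the universal computation model. Consequently $T_\varepsilon$ is at most the wall-clock time $T(K_\varepsilon)$ needed to complete the first $K_\varepsilon$ updates.

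Next I apply \Cref{lem:time_sqrt_arbitrary} with $K=K_\varepsilon$, which gives $T(K_\varepsilon)\le S_{3\lfloor\sqrt{K_\varepsilon}\rfloor+1}$ with exactly the recursion in the statement. Finally, $\sqrt{K_\varepsilon}=\widetilde{\mathcal{O}}(\Psi^2/\varepsilon^2)$, since the square root of $\Psi^4\varepsilon^{-4}(\log\frac1\varepsilon)^4$ is $\Psi^2\varepsilon^{-2}(\log\frac1\varepsilon)^2$. Moreover $S_j$ is non-decreasing in $j$, because each increment $t(\cdot;\cdot)\ge 0$. Hence the index may be replaced by any upper bound, which yields $S_{\widetilde{\mathcal{O}}(\Psi^2/\varepsilon^2)}$.

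The only point needing care is the transfer step in the second paragraph. I must make sure the iteration analysis is genuinely independent of the timing model, in particular that the expectation bounds in \Cref{lem:error_bound} do not depend on which worker returns when. I would argue this by noting two facts. First, given the deterministic speed profile, the sequence of returning workers and the delays are fixed functions of time. Second, the noise $\xi$ is sampled independently of them, so unbiasedness and the variance bound apply exactly as before. A minor technical point is that $\sqrt{K_\varepsilon}$ may be non-integer, and the floor in the index is handled by the monotonicity of $S_j$.
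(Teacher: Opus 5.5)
Your proposal is correct and matches the paper's proof: take $K_\varepsilon$ from \Cref{thm:iteration_complexity_param_agnostic}, apply \Cref{lem:time_sqrt_arbitrary} with $K=K_\varepsilon$, and use $\sqrt{K_\varepsilon}=\widetilde{\mathcal{O}}(\Psi^2/\varepsilon^2)$. Your extra justifications are sound and are left implicit in the paper: the iteration analysis uses only $\delta_k<R_k$ and sampling that is independent of the deterministic schedule, and $S_j$ is non-decreasing in $j$. One small clarification: absorbing the factor $1/17$ changes the constant inside the exponent $e^{L_1^2\eta^2}$, so it is not merely a prefactor of $\Psi$.
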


\begin{proof}
    By \Cref{thm:iteration_complexity_param_agnostic}, the method reaches an $\varepsilon$-stationary point within
    \begin{equation*}
        K_{\varepsilon}
        =
        \widetilde{\mathcal{O}}\left(\frac{\Psi^4}{\varepsilon^4}\right)
    \end{equation*}
    iterations.
    Applying \Cref{lem:time_sqrt_arbitrary} with $K = K_{\varepsilon}$ yields
    \begin{equation*}
        T_{\varepsilon}
        \le
        S_{3\lfloor \sqrt{K_{\varepsilon}} \rfloor + 1}.
    \end{equation*}
    Since $\sqrt{K_{\varepsilon}} = \widetilde{\mathcal{O}}(\Psi^2/\varepsilon^2)$, the claim follows.
\end{proof}

\subsection{Comparison with \ringmastertitle}
We briefly compare the recursive time bounds obtained for \ringmaster and \algn under the universal computation model.
Because the duration function $t(r;T)$ depends on both the block size $r$ and the starting time $T$, this comparison is necessarily weaker than in the fixed computation model.

For \ringmaster, the iteration complexity in \citet{maranjyan2025ringmaster} is
\begin{equation*}
    \bar{K} \eqdef \left\lceil\frac{48 L \Delta \sigma^2}{\varepsilon^4}\right\rceil .
\end{equation*}
Take the fixed threshold
\begin{equation*}
    R \eqdef \left\lceil \frac{\sigma}{\varepsilon^2} \right\rceil .
\end{equation*}
Applying the fixed-threshold bound above with this choice of $R$ gives
\begin{equation*}
    T_{\varepsilon}^{\rm Ringmaster}
    \le
    T^{(R)}_{N_{\rm Ringmaster}},
\end{equation*}
where
\begin{equation*}
    N_{\rm Ringmaster} \coloneqq \left\lceil \frac{\bar{K}}{R} \right\rceil
\end{equation*}
and the sequence $\{T^{(R)}_k\}$ is defined by
\begin{align*}
    T^{(R)}_0 &\coloneqq 0, \\
    T^{(R)}_k &\coloneqq T^{(R)}_{k-1} + t(R;T^{(R)}_{k-1}).
\end{align*}
Since $\bar{K} = \mathcal{O}(\sigma^2 L \Delta/\varepsilon^4)$ and $R = \Theta(\sigma/\varepsilon^2)$, we obtain
\begin{equation*}
    N_{\rm Ringmaster}
    =
    \mathcal{O}\left(\frac{\sigma L\Delta}{\varepsilon^2}\right).
\end{equation*}

For our method, the iteration complexity from \Cref{thm:iteration_complexity_param_agnostic} is
\begin{equation*}
    K_{\varepsilon} = \widetilde{\mathcal{O}}\left(\frac{\Psi^4}{\varepsilon^4}\right),
\end{equation*}
and \Cref{thm:time_arbitrary} yields the time bound
\begin{equation*}
    T_{\varepsilon}
    \le
    S_{N_{\rm ours}},
\end{equation*}
where
\begin{equation*}
    N_{\rm ours} \coloneqq 3\lfloor \sqrt{K_{\varepsilon}} \rfloor + 1
\end{equation*}
and the sequence $\{S_j\}$ is defined by
\begin{align*}
    S_0 &\coloneqq 0, \\
    S_j &\coloneqq S_{j-1} + t\!\left(\max\left\{1,\left\lceil \frac{j-1}{3} \right\rceil\right\}; S_{j-1}\right).
\end{align*}
Since $\sqrt{K_{\varepsilon}} = \widetilde{\mathcal{O}}(\Psi^2/\varepsilon^2)$, this gives
\begin{equation*}
    N_{\rm ours}
    =
    \widetilde{\mathcal{O}}\left(\frac{\Psi^2}{\varepsilon^2}\right).
\end{equation*}

Thus, from the point of view of the $\varepsilon$-dependence, the two methods are already quite close: both require on the order of $1/\varepsilon^2$ recursion steps, up to logarithmic factors for our method.
Their problem-dependent prefactors are different, so this should be interpreted only as an $\varepsilon$-scaling comparison.

The difference lies in how these blocks are formed.
\ringmaster uses the same threshold $R$ in every block, while our method uses the increasing thresholds
\begin{equation*}
    1,1,1,1,2,2,2,3,3,3,\dots,\lfloor \sqrt{K_{\varepsilon}} \rfloor,\lfloor \sqrt{K_{\varepsilon}} \rfloor,\lfloor \sqrt{K_{\varepsilon}} \rfloor .
\end{equation*}
Because the quantity $t(r;T)$ depends on both the threshold $r$ and the starting time $T$, the universal computation model does not by itself give a constant-factor comparison between
\begin{equation*}
    T^{(R)}_{N_{\rm Ringmaster}}
    \qquad \text{and} \qquad
    S_{N_{\rm ours}}.
\end{equation*}
Indeed, the worker speeds may change arbitrarily with time, so two recursions with a similar number of steps need not be within a constant factor of each other.

Therefore, in the fully general universal model, the clean conclusion is that both methods involve about $1/\varepsilon^2$ recursion steps, but we cannot claim that our bound matches the \ringmaster bound up to a universal constant without extra assumptions on the computation-rate functions $\{p_i\}_{i=1}^n$.

If one wants a genuine constant-factor time comparison, a sufficient extra condition is that block durations of the relevant size remain stable under shifts of the starting time.
Let
\begin{equation*}
    R_{\varepsilon}^{\star} \coloneqq \lfloor \sqrt{K_{\varepsilon}} \rfloor,
\end{equation*}
and assume that there exists a constant $C_{\star} \ge 1$ such that
\begin{equation*}
    t(R_{\varepsilon}^{\star};T)
    \le
    C_{\star}\, t(R_{\varepsilon}^{\star};0)
    \qquad \text{for all } T \ge 0.
\end{equation*}
Since
\begin{equation*}
    \left\lceil \frac{j}{3} \right\rceil \le R_{\varepsilon}^{\star}
    \qquad \text{for } j = 1,\dots,N_{\rm ours},
\end{equation*}
and $t(r;T)$ is nondecreasing in $r$, every increment in the recursion for $\{S_j\}$ satisfies
\begin{equation*}
    t\!\left(\left\lceil \frac{j}{3} \right\rceil;S_{j-1}\right)
    \le
    t(R_{\varepsilon}^{\star};S_{j-1})
    \le
    C_{\star}\, t(R_{\varepsilon}^{\star};0).
\end{equation*}
Therefore,
\begin{equation*}
    T_{\varepsilon}
    \le
    S_{N_{\rm ours}}
    \le
    C_{\star}\, N_{\rm ours}\, t(R_{\varepsilon}^{\star};0)
    =
    \widetilde{\mathcal{O}}\left(
        \frac{\Psi^2}{\varepsilon^2}\, t(\sqrt{K_{\varepsilon}};0)
    \right).
\end{equation*}
In other words, under this localized start-time stability assumption, our adaptive schedule has the same time scaling as a fixed-threshold scheme at the natural block size $R_{\varepsilon}^{\star}$, up to constant and logarithmic factors.

\end{document}